\newtheorem{definition}{Definition}
\newtheorem{theorem}{Theorem}
\newtheorem{remark}{Remark}
\newtheorem{lemma}{Lemma}
\newtheorem{assumption}{Assumption}
\newtheorem{corollary}{Corollary}
\newcolumntype{?}{!{\vrule width 1pt}}
\def\Let@{\def\\{\notag\math@cr}}
\definecolor{darkred}{RGB}{150,0,0}
\definecolor{darkgreen}{RGB}{0,150,0}
\definecolor{darkblue}{RGB}{0,0,150}
\def\eqref#1{equation~\ref{#1}}
\def\1{\bm{1}}
\def\0{\bm{0}}
\def\rt{{\textnormal{t}}}
\def\rv{{\textnormal{v}}}
\def\rvtheta{{\mathbf{\theta}}}
\def\rva{{\mathbf{a}}}
\def\rvb{{\mathbf{b}}}
\def\rvu{{\mathbf{i}}}
\def\rvm{{\mathbf{m}}}
\def\rvs{{\mathbf{s}}}
\def\rvt{{\mathbf{t}}}
\def\rvu{{\mathbf{u}}}
\def\rvv{{\mathbf{v}}}
\def\rvw{{\mathbf{w}}}
\def\rvx{{\mathbf{x}}}
\def\rvy{{\mathbf{y}}}
\def\rvz{{\mathbf{z}}}
\def\rmM{{\mathbf{M}}}
\def\rmN{{\mathbf{N}}}
\def\rmR{{\mathbf{R}}}
\def\vi{{\bm{i}}}
\def\vz{{\bm{z}}}
\def\mI{{\bm{I}}}
\def\mQ{{\bm{Q}}}
\DeclareMathAlphabet{\mathsfit}{\encodingdefault}{\sfdefault}{m}{sl}
\SetMathAlphabet{\mathsfit}{bold}{\encodingdefault}{\sfdefault}{bx}{n}
\def\gL{{\mathcal{L}}}
\def\gN{{\mathcal{N}}}
\def\gO{{\mathcal{O}}}
\def\sB{{\mathbb{B}}}
\def\sP{{\mathbb{P}}}
\def\sR{{\mathbb{R}}}
\def\sS{{\mathbb{S}}}
\newcommand{\E}{\mathbb{E}}
\newcommand{\R}{\mathbb{R}}
\newcommand{\KL}{D_{\mathrm{KL}}}
\renewcommand*{\dif}{{\,\mathrm{d}}}
\newtheorem{proposition}{Proposition}
\newcommand{\nm}[1]{\left\|{#1}\right\|}
\renewcommand{\Pr}{\sP}
\renewcommand{\set}[1]{\left\{{#1}\right\}}
\def\rveps{\bm{\epsilon}}
\def\xz{\rvx_0}
\def\muz{\bmu_0}
\def\lss{\delta}
\def\lgv{\rvx}
\def\lz{\lgv_0}
\def\lT{\lgv_T}
\def\lt{\lgv_t}
\def\ltm{\lgv_{t-1}}
\def\epst{\rveps_t}
\def\sig{{\sigma}}
\def\w{w}
\def\wz{\w_0}
\def\hkalpha{\alpha}
\def\hkbeta{\beta}
\def\hkeps{\rveps}
\def\dt{\lss_t}
\def\di{\lss_i}
\def\dj{\lss_j}
\def\r{r}
\def\muz{\rvmu_0}
\def\muo{\rvmu_1}
\def\rvalpha{{\bm{\alpha}}}
\def\var{{\nu}}
\def\nl{{\sigma}}
\def\nlt{\nl_t}
\def\nltm{\nl_{t-1}}
\def\nlz{\nl_0}
\def\nlT{\nl_T}
\def\cnl{c_{\nl}}
\def\cv{c_{\var}}
\def\Sig{{\bm{\Sigma}}}
\def\rvmu{{\bm{\mu}}}
\def\P{P}
\def\wi{w_i}
\def\probi{\P^{(i)}}
\def\mui{\rvmu_i}
\def\probz{\P^{(0)}}
\def\vz{\var_0}
\def\vi{\var_i}
\def\vmax{\var_{\max}}
\def\n{n}
\def\r{r}
\def\N{\rmN}
\renewcommand{\R}{\rmR}
\def\nv{\mathbf{\n}}
\def\rv{\mathbf{\r}}
\def\nt{\nv_t}
\def\nz{\nv_0}
\def\ntm{\nv_{t-1}}
\def\nT{\nv_T}
\def\rt{\rv_t}
\def\rtm{\rv_{t-1}}
\def\hnt{\hat{\nv}_t}
\def\hnz{\hat{\nv}_0}
\def\hntm{\hat{\nv}_{t-1}}
\def\hnT{\hat{\nv}_T}
\def\nj{\nv_j}
\def\njm{\nv_{j-1}}
\def\hnj{\hat{\nv}_j}
\def\hnjm{\hat{\nv}_{j-1}}
\def\ljm{\lgv_{j-1}}
\def\ent{\epst^{(\nv)}}
\def\ert{\epst^{(\rv)}}
\def\rhoi{\rho_i}
\def\lip{L}
\def\lipi{\lip_i}
\def\clip{c_{\lip}}
\def\lipit{\lip_{i,t}}
\def\Ball{\sB}
\def\rvtheta{\bm{\theta}}
\def\TV{\text{TV}}
\def\hP{\widehat{\P}}
\def\S{\sS}
\def\cz{c_0}
\def\lipz{L_0}
\def\LQ{L_Q}
\def\mQ{m_Q}
\def\RQ{R_Q}
\def\rhoQ{\rho_Q}
\def\probo{\P^{(1)}}
\def\probt{\P^{(2)}}
\newcommand{\duk}[1]{\left\|{#1}\right\|_{\set{\mui}_{i \in [k]}}}
\newcommand{\du}[1]{\left\|{#1}\right\|_{\rvmu}}
\title{\textbf{On the Hardness of Sampling  from Mixture Distributions via Langevin Dynamics}}
\date{}
\author{
    Xiwei~Cheng \\ CUHK \\ xwcheng@link.cuhk.edu.hk \And 
    Kexin~Fu \\ Purdue \\ fu448@purdue.edu \And 
    Farzan~Farnia \\  CUHK \\  farnia@cse.cuhk.edu.hk
}
\begin{document}
\maketitle

\begin{abstract}
    The Langevin Dynamics (LD), which aims to sample from a probability distribution using its score function, has been widely used for analyzing and developing score-based generative modeling algorithms. While the convergence behavior of LD in sampling from a uni-modal distribution has been extensively studied in the literature, the analysis of LD under a mixture distribution with distinct modes remains underexplored in the literature. In this work, we analyze LD in sampling from a mixture distribution and theoretically study its convergence properties. Our theoretical results indicate that for general mixture distributions of sub-Gaussian components, LD could fail in finding all the components within a sub-exponential number of steps in the data dimension. Following our result on the complexity of LD in sampling from high-dimensional variables, we propose \emph{Chained Langevin Dynamics (Chained-LD)}, which divides the data vector into patches of smaller sizes and generates every patch sequentially conditioned on the previous patches. Our theoretical analysis of Chained-LD indicates its faster convergence speed to the components of a mixture distribution. We present the results of several numerical experiments on synthetic and real image datasets, validating our theoretical results on the iteration complexities of sample generation from mixture distributions using the vanilla and chained LD algorithms.

\end{abstract}

\section{Introduction} \label{sec:intro}

Langevin dynamics (LD) is a well-established methodology with a wide range of applications to various areas, including Bayesian learning \citep{welling2011bayesian}, non-convex optimization \citep{raginsky2017non, xu2018global}, and molecular-dynamics simulations \citep{paquet2015molecular, gottwald2015parametrizing}. The LD sampling approach leverages the score function of a probability density function (PDF) $P(\lgv)$, defined as the gradient of the PDF logarithm $\nabla \log P(\lgv)$, to perform the following iterative process whose output follows the probability model characterized by $P(\lgv)$
\vspace{-1mm}
\begin{equation*}
    \lt = \ltm + \frac{\dt}{2} \nabla_\lgv \log \P(\ltm) + \sqrt{\dt} \epst,
\end{equation*}
where $\dt$ is the step size and $\epst \sim \gN(\0_d, \mI_d)$ is Gaussian noise. Recently, the LD sampling methodology has found central applications in generative modeling tasks, such as image generation \citep{song2019generative, song2020score}, adversarial training \citep{kamalaruban2020robust, srinivasan2021robustifying}, and imitation learning \citep{reuss2023goal,pearce2023imitating}, which have inspired many theoretical and empirical studies of the LD methodology.

Specifically, several references \citep{durmus2017nonasymptotic, dalalyan2017theoretical, cheng2018underdamped, cheng2018convergence} have studied the convergence properties of the LD sampling process to characterize the iteration complexity of LD sampling from the target PDF $P(\mathbf{x})$. The existing theoretical results mostly focus on demonstrating the satisfactory and speedy convergence of LD assuming a unimodal target distribution consisting of only one distribution component. However, the work of Song and Ermon \citep{song2019generative} has highlighted examples of mixture distributions with multiple separated modes where the vanilla LD sampling struggles in capturing the mode frequencies correctly. This reference \citep{song2019generative} proposes a variant of LD, called \emph{annealed Langevin dynamics (Annealed-LD)}, to address the challenges of LD in sampling from a mixture distribution.

While the Annealed-LD sampling approach provides a satisfactory solution to cover the modes of a mixture model, the theoretical analysis of LD sampling applied to mixture distributions has remained underexplored in the literature. However, a reliable application of LD methods to sample from real-world distributions requires a more solid understanding of their convergence behavior for a target multi-modal distribution, which is commonly present in real-world data due to the different background features of real-world objects and phenomena.

In our work, we aim to study the convergence properties of the LD framework in sampling from a mixture distribution. As displayed in Figure \ref{fig:intro:syn}, we observe that the convergence of LD to capture all three underlying modes becomes more challenging when the dimension $d$ of the sampling space is growing. Our main theoretical result provides a family of mixture distributions, where the LD framework is unlikely to find all the mixture components within a sub-exponential number of iterations in the data dimension $d$.

Specifically, we consider mixture distributions with a low-probability yet high-variance in-between mode, which we refer to as the zeroth mode $P^{(0)}$ (illustrated in Figure~\ref{fig:multimodal_distribution}). Despite a significantly smaller probability mass compared to the other low-variance modes, the in-between mode $P^{(0)}$ surrounds the other low-variance modes and fills the space between them. As a result, Mode 0 dominates the score function in the low-density region, disrupting and slowing down the convergence of the noisy local search in LD to the low-variance modes with greater probability masses.

\begin{figure}
    \centering
    
     \subfigure[ Mixture target distribution] 
         {\includegraphics[width=0.32\textwidth]{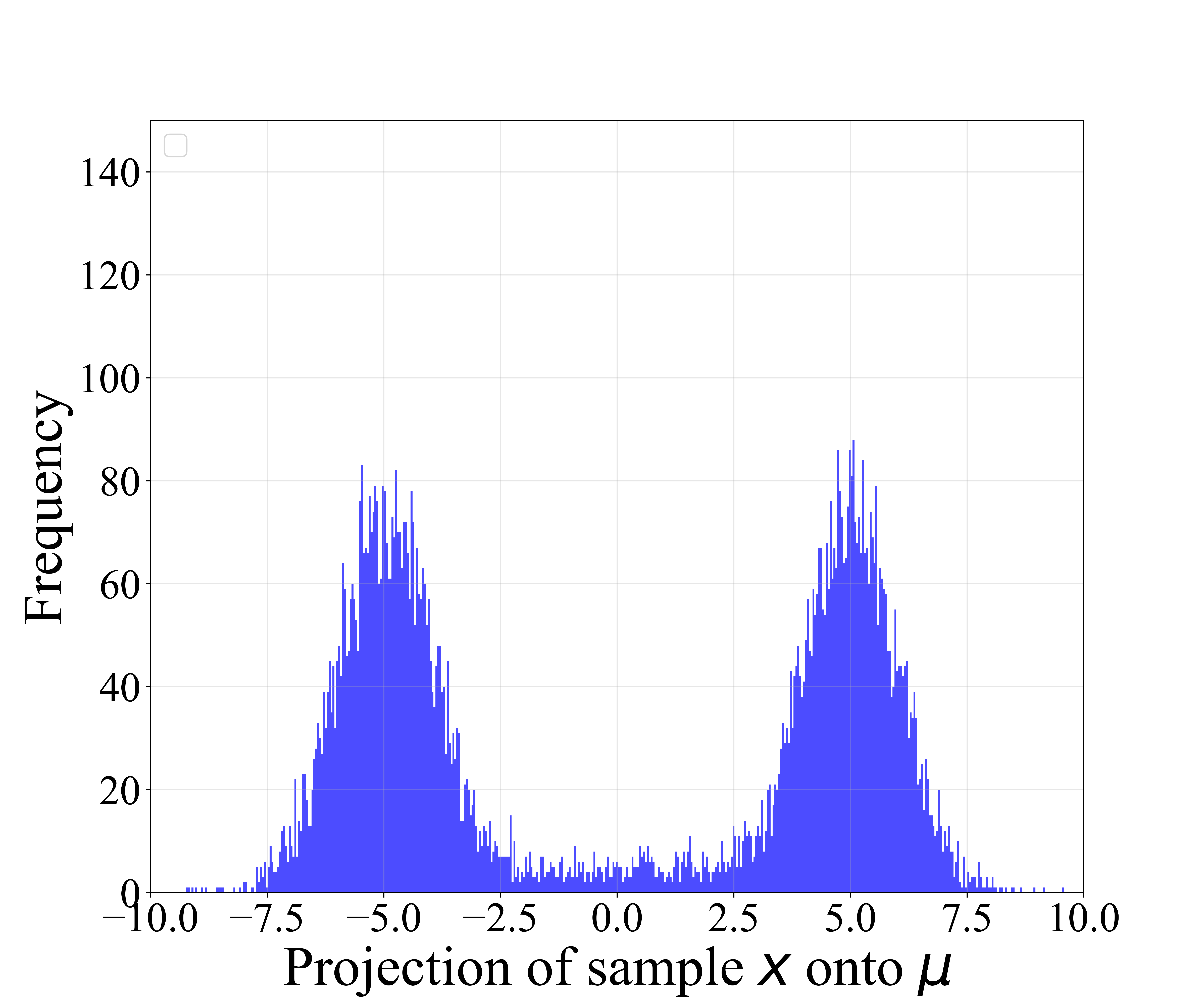}}
     \subfigure[LD samples (scalar), $d=1$] 
         {\includegraphics[width=0.32\textwidth]{figs/od_dim1_step10000.png}}
     \subfigure[LD samples along $\overrightarrow{\1_d}$, $d=10$] 
         {\includegraphics[width=0.32\textwidth]{figs/od_dim10_step10000.png}}
    \caption{Samples by Langevin dynamics from a mixture target distribution $\P = 0.1 \gN(\0_d,10\mI_d) + 0.45 \gN(5 \cdot \1_d, \mI_d) + 0.45 \gN(-5 \cdot \1_d,\mI_d)$ with data dimensions $d=1 $ and $d=10$. The samples are initialized using $\gN(\0_d,\mI_d)$, and Langevin dynamics is applied for $T = 10^4$ iterations. The histogram is plotted by sampling $10^4$ vectors $\lgv$ and projecting them along the mean vector $\1_d$. Stepsizes $\dt$ are selected following \citep{song2019generative}. } \label{fig:intro:syn}
\vspace{-4mm}
\end{figure}

To mitigate the exponential iteration complexity, we introduce a complementary method, {\it Chained Langevin Dynamics (Chained-LD)}, with convergence guarantees in a polynomial number of iterations. Following our theoretical results on the role of high dimensionality in the convergence of LD, we propose applying dimensionality reduction through the Chain Rule: for $\lgv = [x^1, x^2, \cdots, x^d] \in \sR^d$, 
\begin{equation*}
    \P(\lgv) = \P(x^1) \P(x^2 | x^1) \cdots \P(x^d | x^1,\cdots,x^{d-1}). 
\end{equation*}
Chained-LD sequentially samples every element $\lgv^{i}$ for all $i \in [d]$ from the conditional distribution given previous elements, i.e., $\P(x^{i} \mid x^{1}, \cdots x^{i-1})$. Therefore, Chained-LD reduces the effective dimensionality of the sampled variable, which can accelerate the search for missing modes in sampling from a mixture distribution. Furthermore, for mixture distributions $\P$ such that $-\log \P(x^i| x^1,\cdots,x^{i-1})$ is $\LQ$-smooth and $\mQ$-strongly convex for $|x^i|>\RQ$, we theoretically show that Chained-LD converges to the target distribution within $\varepsilon$ total variation distance in $\gO \left( \frac{\LQ^2 d^3}{\mQ^2 \varepsilon^2} \exp(32\LQ \RQ^2) \log \frac{d^3}{\varepsilon^2} \right) = \gO \left( \frac{d^3}{\varepsilon^2} \log \frac{d^3}{\varepsilon^2} \right)$ iterations.

Finally, we present the results of several numerical experiments to validate our theoretical findings. In synthetic experiments, we consider high-dimensional Gaussian mixture models, where LD could not find all components within a million steps, whereas Chained-LD could capture all components with correct frequencies in $\gO(10^4)$ steps. Also, we test the application of Chained-LD as a sampling algorithm in score-based generative modeling for an underlying mixture distribution. In the case of a mixture of original images from the MNIST/Fashion-MNIST dataset (black background and white digits/objects) and flipped images (white background and black digits/objects), our numerical results suggest that Chained-ALD could find both the modes in $\gO(10^5)$ iterations. We summarize the contributions of this work as follows:
\begin{itemize}[leftmargin=*]
    \item Analyzing the iteration complexity of Langevin dynamics under high-dimensional mixture distributions,
    \item Proposing Chained Langevin Dynamics (Chained-LD) with sequential sampling to improve LD's convergence in sampling from mixture distributions,
    \item Providing a theoretical analysis of the convergence of Chained-LD, 
    \item Presenting numerical results validating our theoretical findings on the convergence of LD and Chained-LD. 
\end{itemize}

\textbf{Notations:} We use $[k]$ to denote the set $\set{1,2,\cdots,k}$ and $\set{a_i}_{i \in [k]}$ to denote the set $\{a_1, \cdots, a_k\}$. $\nm{\cdot}$ refers to the $\ell_2$ norm. We use $\0_n$ and $\1_n$ to denote a 0-vector and 1-vector of length $n$. We use $\mI_n$ to denote the identity matrix of size $n \times n$. $\TV$ stands for the total variation distance.

\section{Related Works} \label{sec:related_work}

\textbf{Convergence Guarantees of Langevin Dynamics: }
The convergence guarantees of Langevin diffusion, a continuous version of Langevin dynamics, are classical results that have been extensively studied in the literature \citep{bhattacharya1978criteria, roberts1996exponential, bakry1983diffusions, bakry2008simple}. Langevin dynamics, also known as Langevin Monte Carlo, is a discretization of Langevin diffusion typically modeled as a Markov Chain Monte Carlo (MCMC) method. For uni-modal distributions, e.g., log-concave probability density functions, the convergence of Langevin dynamics is provably fast \citep{dalalyan2017theoretical, durmus2017nonasymptotic, cheng2018underdamped, cheng2018convergence}. However, for multi-modal distributions, the non-asymptotic convergence analysis becomes significantly more challenging. \citep{lee2018beyond} studied Langevin dynamics under mixtures of Gaussians with equal variance and showed that the iteration complexity of Langevin dynamics is $\text{poly}(d,1/\varepsilon)$. For more general distributions, \citep{cheng2018sharp} and \citep{ma2019sampling} analyzed target distributions $p$ that are strongly log-concave outside of a region of radius $R$, proving that the iteration complexity of Langevin dynamics is $\exp(cR^2) \text{poly}(d,1/\varepsilon)$, which can become exponential in $d$ when the radius $R$ scales as $\gO(\sqrt{d})$.

\textbf{Hardness of Langevin Dynamics in Mixture Distributions: }
For continuous Langevin diffusion, \citep{bovier2002metastability, bovier2004metastability, gayrard2005metastability} studied the mean hitting time and provided a lower bound on the transition time between two modes, e.g., two local maxima. In the context of Langevin dynamics, \citep{lee2018beyond} proved the existence of a mixture of two Gaussian distributions with covariance matrices differing by a constant factor, wherein Langevin dynamics cannot find both modes in polynomial time. \citep{song2019generative} studied the slow mixing and incorrect relative weight recovery of Langevin dynamics in bi-modal distributions separated by low-density regions. Additionally, \citep{song2020improved} studied the role of noise levels in annealed Langevin dynamics, showing their effect on sample diversity in multi-modal distributions.

\textbf{Connections between Langevin Dynamics and Score-based Generative Modeling: }
Langevin dynamics and its annealed variant serve as the backbone of score-based generative modeling, which aims to learn the underlying probability distribution of training data and efficiently generate new data from the learned distribution. \citep{song2019generative} proposed learning Noise Conditional Score Networks (NCSN) via score matching to estimate the perturbed score function of the underlying distribution from training data and applied annealed Langevin dynamics with NCSN as the sampling method. \citep{song2020score} unified anneal Langevin dynamics and Denoising diffusion probabilistic modeling (DDPM) \citep{ho2020denoising} via a stochastic differential equation (SDE) and proposed utilizing score-based Markov Chain Monte Carlo (MCMC) approaches, e.g., Langevin dynamics, to sample from the SDE.

\section{Preliminaries} \label{sec:preliminaries}
\begin{figure}
    \centering
     \subfigure[Symmetric 3-Gaussian model] 
         {\includegraphics[width=0.4\textwidth]{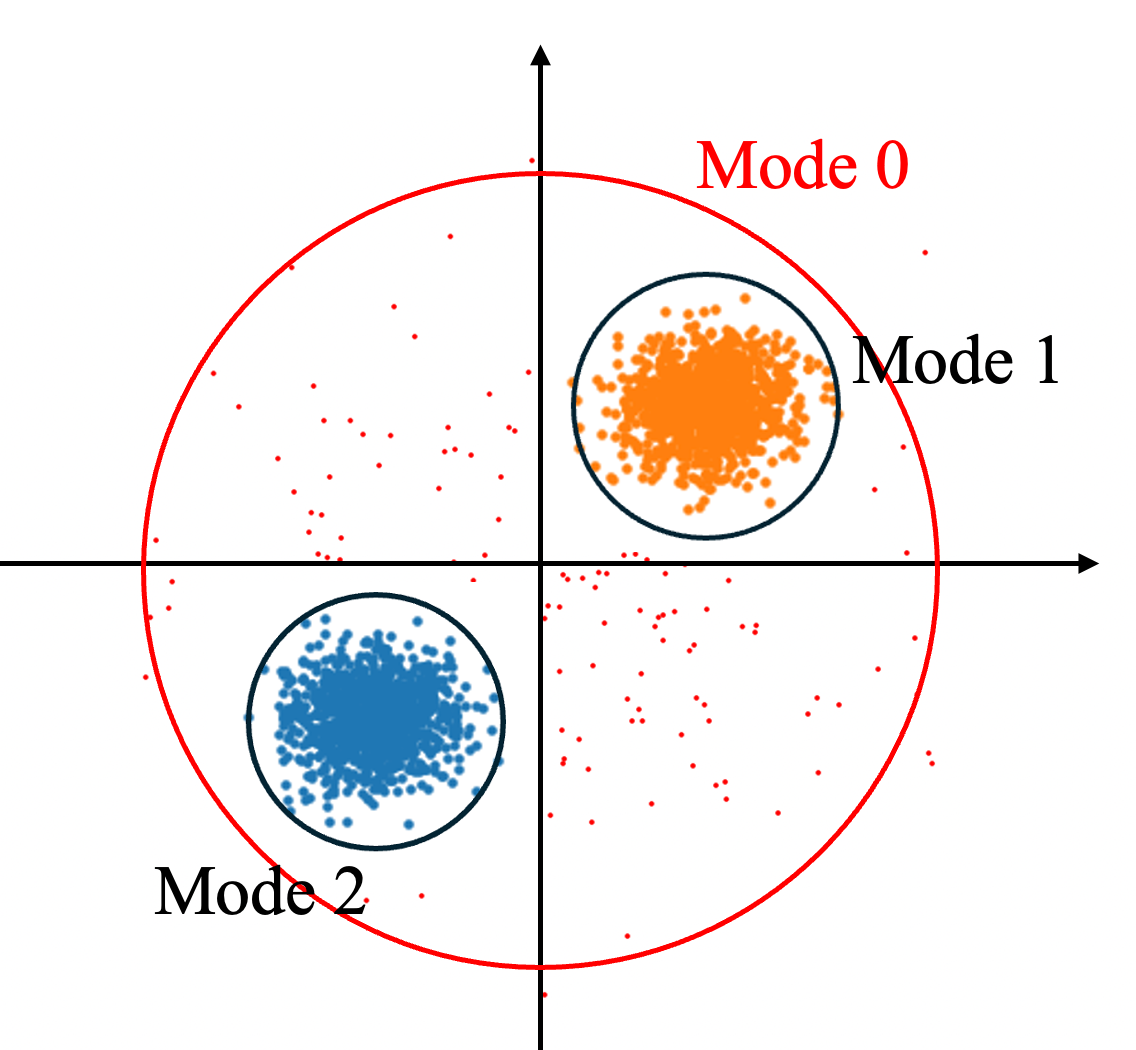}}
     \subfigure[Multimodal distribution] 
         {\includegraphics[width=0.55\textwidth]{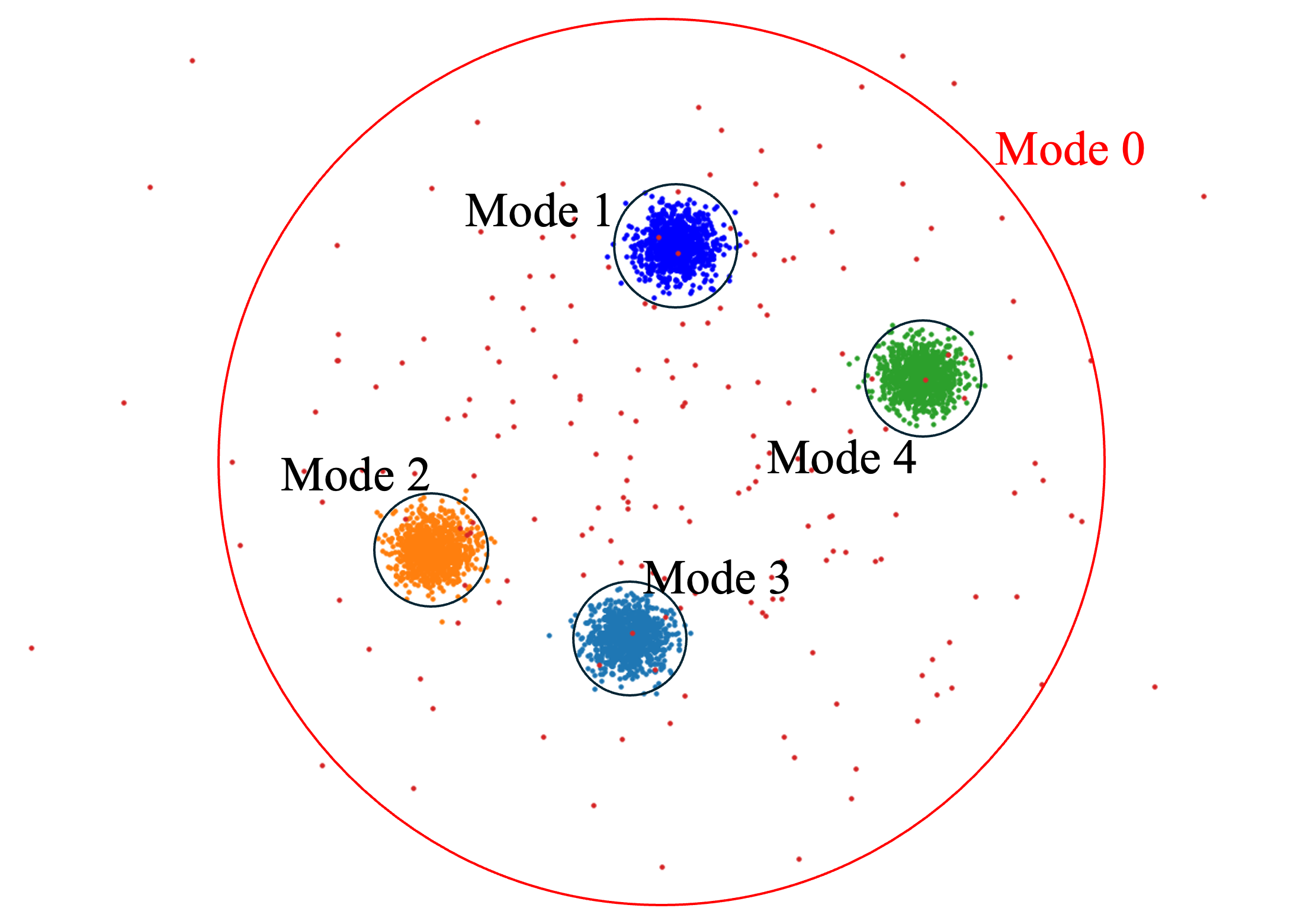}}
    \caption{Our analyzed mixture distribution possessing the in-between mode $P^{(0)}$. $P^{(0)}$ is supposed to contain a minor probability mass, yet with a significantly higher variance than the other modes $P^{(1)},\ldots, P^{(k)}$. } \label{fig:multimodal_distribution}
\end{figure}

\subsection{Langevin Dynamics}
Langevin dynamics aims to produce samples such that their distribution is close to the underlying true distribution $\P$. For a continuously differentiable probability density $\P(\lgv)$ on $\sR^d$, its score function is defined as the gradient of the log probability density function (PDF) $\nabla_\lgv \log \P(\lgv)$. Langevin diffusion is a stochastic process defined by the stochastic differential equation (SDE)
\begin{equation*}
    \dif \lgv_t = \nabla_\lgv \log \P(\lgv_t) \dif t + \sqrt{2} \dif \rvw_t,
\end{equation*}
where $\rvw_t$ is the Wiener process on $\sR^d$. Langevin dynamics, a discretization of the SDE for $T$ iterations, is applied to sample from the target distribution. Each iteration of Langevin dynamics is defined as 
\begin{equation} \label{eq:ld:ld_def}
    \lt = \ltm + \frac{\dt}{2} \nabla_\lgv \log \P(\ltm) + \sqrt{\dt} \epst,
\end{equation}
where $\dt$ is the step size and $\epst \sim \gN(\0_d, \mI_d)$ is Gaussian noise. It has been widely recognized that the continuous Langevin diffusion could take an exponential time to mix without additional assumptions on the probability density \citep{bovier2002metastability, bovier2004metastability, gayrard2005metastability}. To combat the slow mixing, \citep{song2019generative} proposed annealed Langevin dynamics by perturbing the probability density with Gaussian noise of variance $\nl^2$, i.e., 
\begin{equation} \label{eq:ald:perturb}
    \P_\nl (\lgv) := \int \P(\rvz) \gN(\lgv \mid \rvz, \nl^2 \mI_d) \dif \rvz,
\end{equation}
and applying Langevin dynamics on the perturbed data distribution $\P_{\nlt}(\lgv)$ with gradually decreasing noise levels $\nl_{1} \ge \nl_{2} \ge \cdots \ge \nlT$, i.e., 
\begin{equation} \label{eq:ald:ald_def}
    \lt = \ltm + \frac{\dt}{2} \nabla_\lgv \log \P_{\nlt}(\ltm) + \sqrt{\dt} \epst,
\end{equation}
where $\dt$ is the step size and $\epst \sim \gN(\0_d, \mI_d)$ is Gaussian noise. When the noise level $\nl$ is vanishingly small, the perturbed distribution is close to the true distribution, i.e., $\lim_{\nl \to 0} \P_\nl (\lgv) \approx \P(\lgv)$. 

\begin{remark}
    In our theoretical analysis, we assume the sampler has access to the underlying score function $\nabla_\lgv \log \P_{\sig}(\lgv)$. For generative modeling tasks in real-world datasets, since we do not have direct access to the (perturbed) score function, \citep{song2019generative} proposed the Noise Conditional Score Network (NCSN) $\rvs_{\rvtheta}(\lgv, \nl)$ to jointly estimate the scores of all perturbed data distributions, i.e., $\forall \nl \in \set{\nlt}_{t \in [T]},\; \rvs_{\rvtheta}(\lgv, \nl) \approx \nabla_\lgv \log \P_{\nl} (\lgv)$. 
\end{remark}

\subsection{Multi-Modal Distributions}
In this work, we focus on the analysis of Langevin dynamics in multi-modal distributions. We highlight that our work studies Langevin dynamics under multi-modal distributions in a slightly different setting from the standard theory literature on sampling. The existing theoretical literature commonly considers a mixture of well-separated modes with bounded variance. On the other hand, in our analysis, we consider a low-density high-variance mode (referred to as Mode 0 or $P^{(0)}$) surrounding the other modes and filling the low density region between the modes. Specifically, as illustrated in Figure \ref{fig:multimodal_distribution}.(a), we formulate a {\it symmetric 3-Gaussian model} as a hard example for Langevin dynamics, defined as following
\begin{definition} \label{def:3g}
For any given frequency $w \in (0,1)$ and variance $\nu^2 > 1$ of the in-between mode $\probz$, and any mean vector $\rvmu$ of the low-variance mode $\probo$, a symmetric 3-Gaussian model is defined as 
\begin{equation*}
    \P_{w,\nu,\rvmu} = w \gN(\0, \nu^2 \mI) + \frac{1-w}{2} \gN(\rvmu, \mI) + \frac{1-w}{2} \gN(-\rvmu, \mI) . 
\end{equation*}
\end{definition}
More generally, we use $\P = \wz \probz + \sum_{i \in [k]} \wi \probi$ to represent a mixture of $k+1$ modes, where $\probz$ is the in-between mode with high variance as illustrated in Figure \ref{fig:multimodal_distribution}.(b). Here each mode $\probi$ is a probability density with frequency $\wi$ such that $\wi > 0$ for all $i \in [k]$ and $\wz + \sum_{i \in [k]} \wi = 1$.

\section{Theoretical Analysis of the Hardness of Langevin Dynamics} \label{sec:theory}

In this section, we theoretically investigate the iteration complexity of Langevin dynamics. We first introduce a notation $\duk{\lgv}$ to measure the distance between a sample $\lgv \in \sR^d$ to the linear span of mean vectors $\set{\mui}_{i \in [k]}$ of the mixture components in a multi-modal distribution. 

\begin{definition} \label{def:dist}
    For a sample $\lgv \in \sR^d$ and a set of vectors $\rvmu_1, \cdots, \rvmu_k \in \sR^d$, we define $\duk{\lgv}$ as the distance from $\lgv$ to the span of $\set{\mui}_{i \in [k]}$, i.e., the minimum distance from $\lgv$ to any linear combination of $\set{\mui}_{i \in [k]}$:
    \begin{equation} \label{eq:def:dist}
        \duk{\lgv} := \min_{\lambda_1, \cdots, \lambda_k} \big\| {\lgv - \sum_{i=1}^k \lambda_i \mui} \big\|. 
    \end{equation}
\end{definition}

We aim to show that in a mixture distribution $\P$ with a high-variance mode $\probz$, the sampled vector $\lgv$ is likely to be far from the low-variance modes $\probo, \cdots, \P^{(k)}$ in terms of the $\duk{\lgv}$ metric.

\subsection{Langevin Dynamics in Symmetric 3-Gaussian Model}
We begin our theoretical analysis with a simple case: a symmetric 3-Gaussian model consisting of two symmetric Gaussian modes $\probo = \gN(\rvmu, \mI_d)$ and $\probt = \gN(-\rvmu, \mI_d)$, and an in-between mode $\probz = \gN(\0_d, \nu^2 \mI_d)$ with high variance $\nu^2 \ge 3$, as illustrated in Figure \ref{fig:multimodal_distribution}.(a). In the following Theorem \ref{thm:3g:3gaussian}, we show that with high probability, the sampled vector $\lT$ fails to find the symmetric modes $\probo,\probt$ within a sub-exponential number of iterations. The proof of Theorem \ref{thm:3g:3gaussian} is deferred to Appendix \ref{app:prf:3g}.

\begin{theorem} \label{thm:3g:3gaussian}
    Consider a distribution $\P_{w,\nu,\rvmu} = w \gN(\0_d,\nu^2 \mI_d) + \frac{1-w}{2} \gN(\rvmu, \mI_d) + \frac{1-w}{2} \gN(-\rvmu, \mI_d)$ by Definition \ref{def:3g} in dimension $d \ge 250$, such that $w \ge 0.01$, $\nu^2 \ge 3$, and $\nm{\rvmu}^2 \le 0.2 d$. We initialize the sample $\lz$ such that $\du{\lz}^2 \ge \frac{3\nu^2+1}{4} d$ and apply Langevin dynamics for $T$ iterations, then we have 
    $$ \Pr \left( \du{\lT}^2 \ge \frac{\nu^2+1}{2} d \right) \ge 1-T \cdot \exp \left(-\frac{d}{300} \right). $$ 
\end{theorem}

For example, for a symmetric 3-Gaussian model $\P_{0.01, \sqrt{3}, 0.2 \cdot \1_d}$, Theorem \ref{thm:3g:3gaussian} indicates that the sampled vector $\lT$ within $T \le \exp(d/300)$ iterations cannot be $\sqrt{2d}$ close to the center of any low-variance modes with high probability. To interpret Theorem \ref{thm:3g:3gaussian}, we first note that in a high-dimensional space $\sR^d$, the probability mass of a Gaussian distribution $\gN(\rvmu, \mI_d)$ concentrates inside a ball of radius $\sqrt{d}$ centered at $\rvmu$, i.e., $\nm{\lgv-\rvmu}^2 \le d$. On the other hand, the high probability bound $\du{\lT}^2 \ge \frac{\nu^2+1}{2} d$ in Theorem \ref{thm:3g:3gaussian} implies that $\lT$ is far from the center of both symmetric Gaussian modes, i.e., $\nm{\lT-\rvmu}^2 \ge \frac{\nu^2+1}{2} d \ge 2d$. This observation allows us to translate the bound on $\du{\lT}$ into a lower bound in other standard metrics such as total variation distance, as shown in the following Corollary \ref{coro:3g:tv}. 

\begin{corollary} \label{coro:3g:tv}
    Under the same assumptions as in Theorem \ref{thm:3g:3gaussian}, the distribution $\hP_T$ of the sampled vector $\lT$ by Langevin dynamics satisfies 
    \begin{equation*}
        \TV(\hP_T, \P) \ge 0.99 - w - \frac{T}{\exp(-d/300)}. 
    \end{equation*}
\end{corollary}

\subsection{Langevin Dynamics in Gaussian Mixture Models}

We further extend Theorem \ref{thm:3g:3gaussian} to a general Gaussian mixture setting. As illustrated in Figure \ref{fig:multimodal_distribution}.(b), we consider a Gaussian mixture with an in-between mode $\probz$ with high variance. To intuitively understand our theoretical results, we first note that the probability density $p(\rvz)$ of a Gaussian distribution $\gN(\rvmu, \nu^2 \mI_d)$ decays exponentially in terms of $\frac{\nm{\rvz-\rvmu}^2}{\nu^2}$. When a sample $\rvz$ is sufficiently far from one mode $P^{(i)}$, since $P^{(0)}$ has a higher variance, the probability density of $P^{(i)}$ is dominated by mode $P^{(0)}$ and the gradient information from $P^{(i)}$ will be masked by $P^{(0)}$. Hence, the dynamics can only visit $P^{(0)}$ unless the stochastic noise miraculously leads it to the region of another low-variance mode.We formalize this intuition in Theorem \ref{thm:ld:gaussian_mixture} and defer the proof to Appendix \ref{app:prf:ld_gaussian_mixture}.

\begin{theorem} \label{thm:ld:gaussian_mixture}
    Consider a data distribution $\P = \wz \gN(\0_d, \vz^2 \mI_d)+ \sum_{i\in[k]} \wi \gN(\mui,\vi^2 \mI_d)$ in dimension $d \ge 250$. For all low-variance modes $\probo,\cdots,\P^{(k)}$, we assume $\nm{\mui} \le 0.2 d$ and denote $\vmax := \max_{i \in [k]} \vi$. For in-between mode $\probz$, assume $\wz \ge 0.01$ and $\vz^2 \ge 3 \vmax^2$. We initialize the sample $\lz$ such that $\duk{\lz}^2 \ge \frac{3\vz^2+\vmax^2}{4} d$ and apply Langevin dynamics for $T$ iterations, then we have 
    \begin{equation*}
        \Pr \left( \duk{\lT} \ge \frac{\vz^2 + \vmax^2}{2} d \right) \ge 1 - T \cdot \exp \left( -\frac{d}{300} \right). 
    \end{equation*}
\end{theorem}

\subsection{Iteration Complexity of Annealed Langevin Dynamics}
Next, we generalize our theoretical results to annealed Langevin dynamics {\it with bounded noise levels} in Theorem \ref{thm:ald:gaussian_mixture}, under similar assumptions on the target distribution. The proof is deferred to Appendix \ref{app:prf:ald:gaussian-mixture}. Aligning with the analysis in \citep{song2020improved}, we show that bounded noise levels will have a limited impact on Langevin dynamics since they exhibit similar exponential complexity in high-dimensional distributions. On the other hand, as suggested by \citep{song2020improved}, annealed Langevin dynamics with a significantly larger initial noise level $\nlz$ could capture more modes (e.g., $\nlz = \gO(\sqrt{d})$), which is also confirmed by our numerical results in Section \ref{sec:numerical}. 

\begin{theorem} \label{thm:ald:gaussian_mixture}
    Consider a data distribution $\P = \wz \gN(\0_d, \vz^2 \mI_d)+ \sum_{i\in[k]} \wi \gN(\mui,\vi^2 \mI_d)$ in dimension $d \ge 250$. For all low-variance modes $\probo,\cdots,\P^{(k)}$, we assume $\nm{\mui} \le 0.05 d$ and denote $\vmax := \max_{i \in [k]} \vi$. For in-between mode $\probz$, assume $\wz \ge 0.01$ and $\vz^2 \ge 3 \vmax^2$. We initialize the sample $\lz$ such that $\duk{\lz}^2 \ge \frac{3\vz^2 + \vmax^2}{4} d + \nlz^2 d$ and apply annealed Langevin dynamics for $T$ steps with noise levels $\vmax \ge \nlz \ge \cdots \ge \nlT \ge 0$, then we have 
    \begin{equation*}
        \Pr \left( \duk{\lz}^2 \ge \frac{\vz^2 + \vmax^2}{2} d \right) \ge 1 - T \cdot \exp \left( -\frac{d}{1500} \right).
    \end{equation*}
\end{theorem}

Finally, in Appendix \ref{app:prf:sub-Gaussian}, we extend our theoretical results to sub-Gaussian mixtures $\P = \wz \probz + \sum_{i\in[k]} \wi \probi$, where $\probi$ is a sub-Gaussian distribution of mean $\mui$ with parameter $\vi^2$ satisfying that the score function of $\probi$ is Lipschitz. We show that if the sample $\lz$ is initialized far from the mean vectors, Langevin dynamics and annealed Langevin dynamics still exhibit similar exponential complexity to converge to low-variance sub-Gaussian modes in the target distribution.

\section{Chained Langevin Dynamics} \label{sec:chaindld}
To reduce the exponential complexity of Langevin dynamics, we propose Chained Langevin Dynamics (Chained-LD) in Algorithm \ref{alg:ChainedLD}. While Langevin dynamics apply gradient updates to all coordinates of the variable at every step, we decompose the variable into patches of constant size and sample each patch sequentially to alleviate the exponential dependency on the dimensionality. More precisely, we divide a vector $\lgv$ into $d/Q$ patches $\lgv^{(1)}, \cdots \lgv^{(d/Q)}$ of some constant size $Q$, and apply Langevin dynamics to sample each patch $\lgv^{(q)}$ (for $q \in [d/Q]$) from the conditional distribution $\P(\lgv^{(q)} \mid \lgv^{(1)}, \cdots \lgv^{(q-1)})$. Intuitively, vanilla Langevin dynamics needs to explore the entire space (of volume exponentially large in $d$) to find the missing modes, while Chained-LD could significantly lower the volume by dimensionality reduction. 

\begin{algorithm}[t]
\caption{Chained Langevin Dynamics (Chained-LD)}\label{alg:ChainedLD}
\begin{algorithmic}[1]
    \Require Patch size $Q$, dimension $d$, number of iterations $T$, noise levels $\set{\nlt}_{t \in [TQ/d]}$, conditional score function $\nabla \log \P_{\nlt}$, step size $\set{\dt}_{t \in [TQ/d]}$. 
    \State Initialize $\lz$, and divide $\lz$ into $d/Q$ patches $\xz^{(1)}, \cdots \xz^{(d/Q)}$ of equal size $Q$
    \vspace{1mm}
    \For{$q \gets 1$ to $d/Q$}
    \vspace{1mm}
            \For{$t \gets 1$ to $TQ/d$}
    \vspace{1mm}
            \State $\lt^{(q)} \gets \ltm^{(q)} + \frac{\dt}{2} \nabla \log \P_{\nlt} \left( \ltm^{(q)} \mid \lgv^{(1)}, \cdots, \lgv^{(q-1)} \right) + \sqrt{\dt} \epst$, where $\epst \sim \gN(\0_Q, \mI_Q)$
        \EndFor
    \vspace{1mm}
        \State $\lgv^{(q)} \gets \lgv_{TQ/d}^{(q)}$
    \vspace{1mm}
    \EndFor
    \vspace{1mm}
    \State \Return $\lgv$
\end{algorithmic}
\end{algorithm}

We can also apply annealed Langevin dynamics \citep{song2019generative} to facilitate the sampling of each patch, by perturbing it with a series of noise levels $\set{\nlt}_{t \in [TQ/d]}$. Specifically, we refer {\it chained vanilla Langevin dynamics (Chained-VLD)} to Algorithm \ref{alg:ChainedLD} without noise injection (i.e., $\nlt = 0$ for all $t \in [TQ/d]$), and {\it chained annealed Langevin dynamics (Chained-ALD)} otherwise. Ideally, if a sampler perfectly generates every patch, combining all patches gives a vector from the original distribution due to the chain rule 
$$\P(\lgv) = \prod_{q \in [d/Q]} \P(\lgv^{(q)} \mid \lgv^{(1)}, \cdots \lgv^{(q-1)}).$$

In Theorem \ref{thm:chained}, we prove that Chained-LD can provably converge to the target distribution within $\varepsilon$ total variation distance, in a polynomial number of iterations. Similar to \citep{cheng2018sharp, ma2019sampling}, we assume that the log conditional PDF of every patch $\log \P(\lgv^{(q)} | \lgv^{(1)}, \cdots, \lgv^{(q-1)})$ is $\LQ$-smooth and $\mQ$-strongly concave for $\lgv^{(q)} > \RQ$. The details of Assumption \ref{asmpt:chained} and the proof of Theorem \ref{thm:chained} is deferred to Appendix \ref{app:prf:chainedld}.

\begin{theorem} \label{thm:chained}
    Consider a data distribution $\P$ satisfying Assumption \ref{asmpt:chained}. We initialize $\lz \sim \gN(\0_d, \frac{1}{\LQ}\mI_d)$ and apply chained Langevin dynamics in Algorithm \ref{alg:ChainedLD} with constant patch size $Q$, noise level $\nlt = 0$, and step size $\dt = \frac{\mQ\varepsilon^2 Q}{64\LQ^2 d^2} \exp(-16 \LQ \RQ^2)$. Then, for 
    $$T = \frac{128 \LQ^2 d^3}{\mQ^2 Q^2 \varepsilon^2} \exp(32\LQ \RQ^2)  \log \left( \frac{d^3}{\varepsilon^2 Q^2} \right),$$ 
    the output distribution $\hP(\lgv)$ after $T$ iterations satisfies $\TV(\hP(\lgv), \P(\lgv)) \le \varepsilon$ for any constant $\varepsilon > 0$. 
\end{theorem}

We highlight that due to dimension reduction, in general, the parameters $\LQ, \mQ, \RQ$ are constants that do not grow with dimension $d$. To give a concrete example, we consider a symmetric 3-Gaussian model 
\begin{equation*}
    \P_{w,\nu,\1_d} = w \gN(\0_d, \nu^2 \mI_d) + \frac{1-w}{2} \gN(\1_d, \mI_d) + \frac{1-w}{2} \gN(-\1_d, \mI_d). 
\end{equation*}
Then, for every patch $q \in [d/Q]$, the conditional distribution is given by 
\begin{equation*}
    \P \left( \lgv^{(q)} | \lgv^{(1)}, \cdots, \lgv^{(q-1)} \right) = w \gN(\0_Q, \nu^2 \mI_Q) + \frac{1-w}{2} \gN(\1_Q, \mI_d) + \frac{1-w}{2} \gN(-\1_Q, \mI_Q), 
\end{equation*}
which is independent from the dimension $d$ of the whole vector $\lgv$. Therefore, the parameters $\LQ, \mQ, \RQ$ depend only on the patch size $Q$, which is set as a constant. In contrast, without dimension reduction, $-\log \P_{w,\nu,\1_d}(\lgv)$ is non-convex for $\lgv = \1_d$. Therefore, under the assumption that the distribution of the whole vector $-\log \P_{w,\nu,\1_d}(\lgv)$ is strongly-convex for $\nm{\lgv} > R$ where $R > \sqrt{d}$, the upper bound on the iteration complexity of Langevin dynamics obtained by \citep{cheng2018sharp} and \citep{ma2019sampling} scales as $\gO(\exp(cLR^2 \text{poly}(d,1/\varepsilon))) \ge \gO(\exp(cLd))$, which is exponential in dimension $d$.

\section{Numerical Results} \label{sec:numerical}
\begin{figure}[t]
    \centering
    \begin{tabular}{c}
       \includegraphics[width=0.9\textwidth]{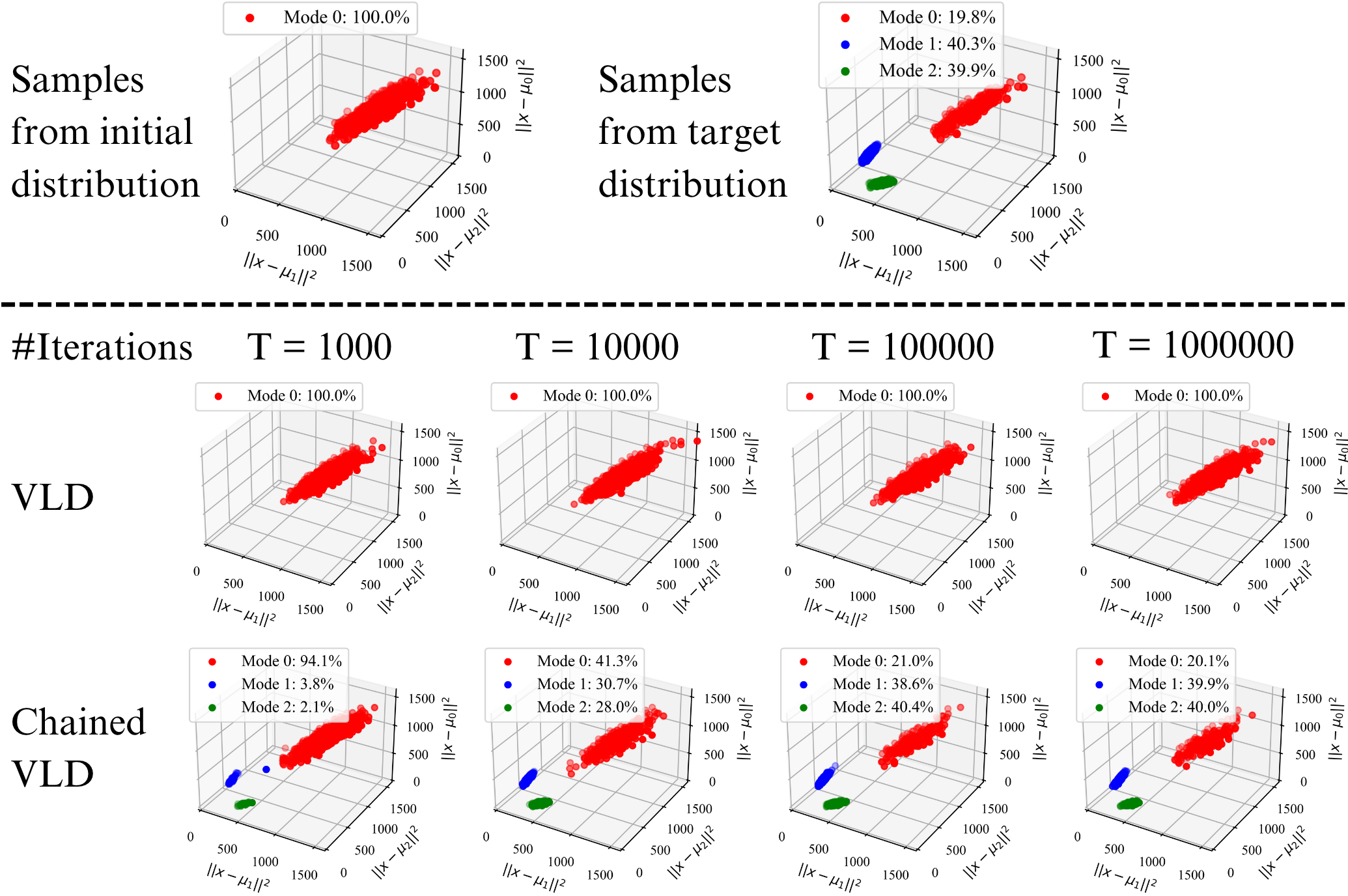}
      \end{tabular}
    \caption{Samples from a mixture of three Gaussian modes generated by vanilla Langevin dynamics (VLD) and chained vanilla Langevin dynamics (Chained-VLD) with patch size $Q=10$. Three axes are $\ell_2$ distance from samples to the mean of the three modes. The samples are initialized in mode 0.} \label{fig:syn:mode0}
\end{figure}
\vspace{-2mm}

In this section, we empirically validated our theoretical findings of vanilla and chained Langevin dynamics. We performed numerical experiments on synthetic Gaussian mixture models and real image datasets including MNIST \citep{lecun1998mnist} and Fashion-MNIST \citep{xiao2017fashion}. Details on the experiment setup are deferred to Appendix \ref{app:exp}.

\textbf{Synthetic Gaussian mixture model:} We consider the data distribution $\P$ as symmetric 3-Gaussian model with $w = 0.2$, $\nu = \sqrt{3}$, and $\rvmu = \1_d$, i.e., 
\begin{equation} \label{eq:def:syn-gaussian}
    \P = 0.2 \P^{(0)} + 0.4 \P^{(1)} + 0.4 \P^{(2)} = 0.2 \gN(\0_d, 3\mI_d) + 0.4 \gN(\1_d, \mI_d) + 0.4 \gN(-\1_d, \mI_d). 
\end{equation}
In the synthetic experiments, we give the samplers access to the true score function calculated from the target distribution. As shown in Figure \ref{fig:syn:mode0}, vanilla Langevin dynamics (VLD) cannot find mode 1 or 2 within $10^6$ iterations if the sample is initialized in mode 0, while chained vanilla Langevin dynamics (Chained-VLD) with patch size $Q=10$ can find the other two modes in 1000 steps and correctly recover their frequencies as gradually increasing the number of iterations. When the sample is initialized in mode 1, as shown in Figure \ref{fig:syn:mode1} in Appendix \ref{app:exp:synthetic}, VLD is also likely to be trapped by the high-variance mode 0 and cannot find mode 2, while Chained-VLD is capable of finding all modes. Additional experiments on samples initialized in mode 2 are presented in Appendix \ref{app:exp:synthetic}, which also verify the convergence hardness of vanilla Langevin dynamics. We also investigated the effect of different choices of patch size $Q$ on the performance of Chained-LD. As shown in Figures \ref{fig:syn:mode0_patch}, \ref{fig:syn:mode1_patch}, and \ref{fig:syn:mode2_patch} in Appendix \ref{app:exp:synthetic}, the convergence of Chained-LD are insensitive to moderate values of constant $Q \in \{1,4,10\}$; for large $Q = 20$, it takes more steps to find the other modes; while for overly large $Q=50$, Chained-LD has convergence hardness similar to LD.

\textbf{Applications of Chained-LD in generative modeling:} We also test the application of Chained-LD as a sampling methodology in generative modeling. We consider a mixture distribution of two modes by using the original images from MNIST/Fashion-MNIST training dataset (black background and white digits/objects) as the first mode and constructing the second mode by i.i.d. randomly flipping an image (white background and black digits/objects) with probability 0.5. Following from \citep{song2019generative}, we train an estimator to approximate the score function from training samples, and apply Chained-LD using the estimated score function. More implementation details are deferred to Appendix \ref{app:score_estimator}.

We numerically validate our theoretical findings of annealed Langevin dynamics (ALD) and Chained-ALD. As shown in Figures \ref{fig:mnist_flip_ald}, ALD with bounded noise levels (i.e., the maximum noise $\sigma_{\max}=1$) tends to sample from the same mode as initialization, aligning with our theoretical analysis in Theorem \ref{thm:ald:gaussian_mixture}. Then, if we apply larger noise levels (i.e., the maximum noise $\sigma_{\max}=50$ as suggested by Technique 1 in \citep{song2020improved}), ALD could generate samples from both modes. On the other hand, Chained-ALD, even with bounded noise levels  (i.e., $\sigma_{\max}=1$), is capable of finding both modes. Further experiments are deferred to Appendix \ref{app:exp:image}.

\begin{figure}
    \centering
    \begin{tabular}{c}
       \includegraphics[width=0.7\textwidth]{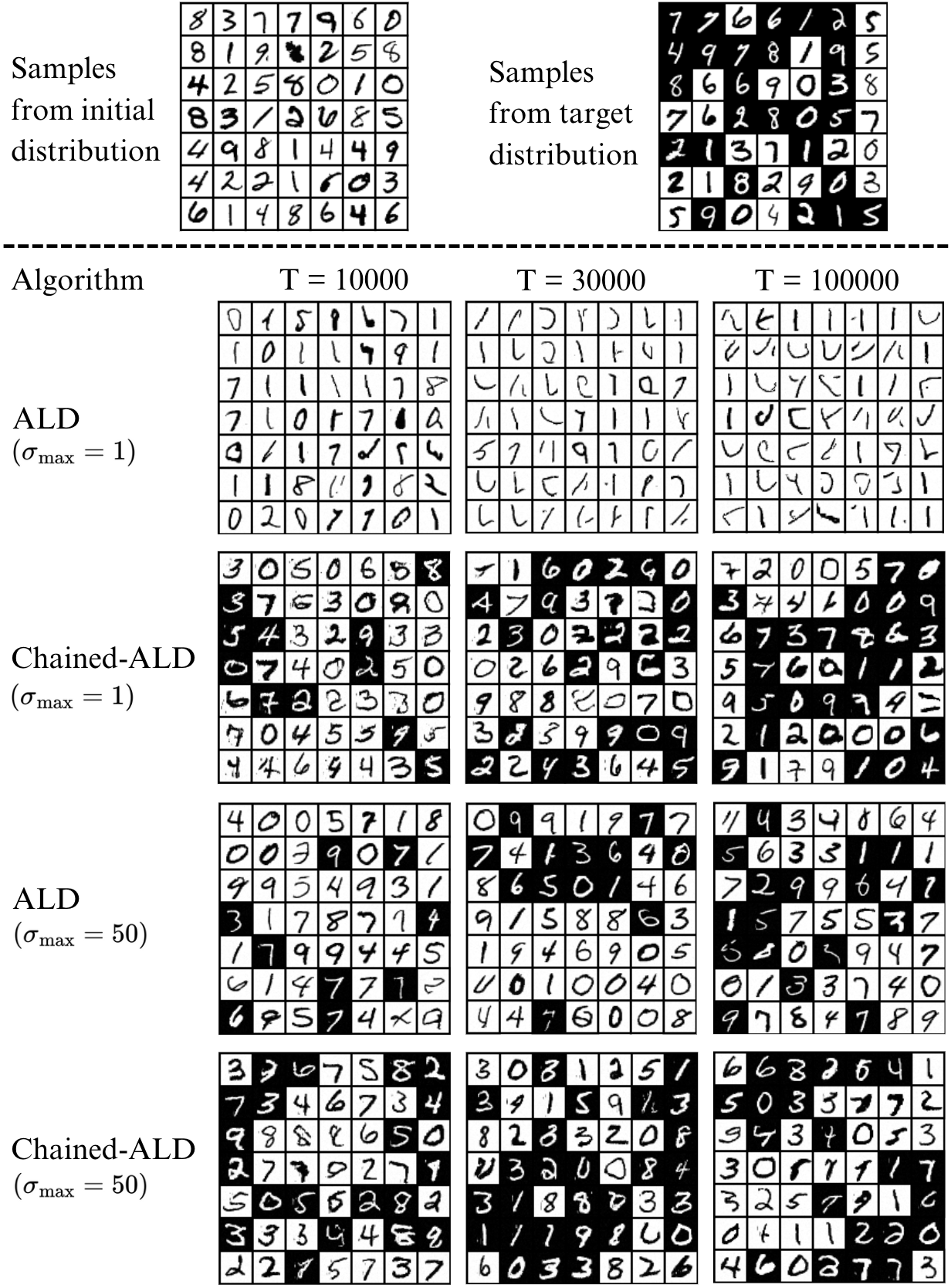}
      \end{tabular}
    \caption{Samples from a mixture distribution of the original and flipped images from the MNIST dataset generated by annealed Langevin dynamics (ALD) and chained annealed Langevin dynamics (Chained-ALD) with patch size $Q=14$ for different numbers of iterations. The maximum noise level $\sigma_{\max}$ is set to be 1 or 50. The samples are initialized as flipped images from MNIST. } \label{fig:mnist_flip_ald}
\end{figure}

\section{Conclusion} \label{sec:conclusion}
In this work, we theoretically and numerically studied the hardness of Langevin dynamics sampling methods under a multi-modal distribution. We characterized Gaussian and sub-Gaussian mixture models under which Langevin dynamics are unlikely to find all the components within a sub-exponential number of iterations. To reduce the exponential iteration complexity of Langevin dynamics, we proposed Chained Langevin Dynamics (Chained-LD), as a complementary solution to Annealed-LD in \citep{song2019generative} and analyzed its convergence behavior. Further investigation on the applications of Chained-LD in generative models will be an interesting topic for future exploration. Another future direction could be to study the convergence of Chained-LD under an imperfect score estimation.

\bibliographystyle{unsrt}
\bibliography{references}

\clearpage
\appendix
\section{Iteration Complexity of Langevin Dynamics in Gaussian Mixture Models} \label{app:prf}

We begin by introducing some well-established lemmas used in our proof. We use the following lemma on the tail bound for multivariate Gaussian random variables. 
\begin{lemma}[Lemma 1, \citep{laurent2000adaptive}] \label{lem:gaussian_tail_bound}
    Suppose that a random variable $\rvz \sim \gN(\0_d, \mI_d)$. Then for any $\lambda > 0$, 
    \begin{align*}
        \Pr \left( \nm{\rvz}^2 \ge d + 2\sqrt{d \lambda} + 2 \lambda \right) &\le \exp(-\lambda), \\ 
        \Pr \left( \nm{\rvz}^2 \le d - 2\sqrt{d \lambda} \right) &\le \exp(-\lambda).
    \end{align*}
\end{lemma}

We also use a tail bound for one-dimensional Gaussian random variables and provide the proof here for completeness. 
\begin{lemma}\label{lem:one_dim_gaussian_tail_bound}
    Suppose a random variable $Z \sim \gN(0,1)$. Then for any $t > 0$,
    \begin{align*}
        \Pr (Z \ge t) = \Pr (Z \le -t) \le \frac{\exp(-t^2/2)}{\sqrt{2\pi} t}.
    \end{align*}
\end{lemma}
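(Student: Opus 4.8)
The plan is to establish the equality by symmetry and the inequality by a one-line integral comparison. For the equality $\Pr(Z \ge t) = \Pr(Z \le -t)$, I would simply invoke the fact that the standard normal density $\phi(x) = \frac{1}{\sqrt{2\pi}} e^{-x^2/2}$ is even, so the substitution $x \mapsto -x$ carries the integral over $[t, \infty)$ onto the integral over $(-\infty, -t]$.

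For the upper bound, I would start from the representation $\Pr(Z \ge t) = \int_t^\infty \frac{1}{\sqrt{2\pi}} e^{-x^2/2} \dif x$ and use the observation that for every $x$ in the range of integration we have $x \ge t > 0$, hence $1 \le x/t$. Multiplying the integrand by this factor only increases it, giving
\begin{equation*}
    \Pr(Z \ge t) \le \int_t^\infty \frac{x}{t} \cdot \frac{1}{\sqrt{2\pi}} e^{-x^2/2} \dif x = \frac{1}{t\sqrt{2\pi}} \int_t^\infty x\, e^{-x^2/2} \dif x.
\end{equation*}
The remaining integral is elementary, since $\frac{\dif}{\dif x}\bigl(-e^{-x^2/2}\bigr) = x\, e^{-x^2/2}$, so it evaluates to $e^{-t^2/2}$, and we obtain $\Pr(Z \ge t) \le \frac{e^{-t^2/2}}{\sqrt{2\pi}\, t}$ as claimed.

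There is essentially no obstacle here: the only thing to be slightly careful about is that the bound $1 \le x/t$ genuinely holds throughout the domain of integration (which it does precisely because we integrate from $t$ upward, and $t > 0$ by hypothesis), and that the antiderivative manipulation is justified by the rapid decay of $e^{-x^2/2}$ at infinity. I would present it as a two- or three-line computation without further fuss.
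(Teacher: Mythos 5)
Your proposal is correct and uses exactly the same argument as the paper: bounding the integrand by the factor $z/t \ge 1$ on $[t,\infty)$, integrating $z e^{-z^2/2}$ explicitly, and invoking symmetry of the Gaussian for the equality. Nothing to add.
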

\begin{proof}[Proof of Lemma \ref{lem:one_dim_gaussian_tail_bound}]
    Since $\frac{z}{t} \ge 1$ for all $z \in [t,\infty)$, we have 
    \begin{align*}
        \Pr (Z \ge t) = \frac{1}{\sqrt{2\pi}} \int_t^\infty \exp \left( -\frac{z^2}{2} \right) \dif z \le \frac{1}{\sqrt{2\pi}} \int_t^\infty \frac{z}{t} \exp \left( -\frac{z^2}{2} \right) \dif z = \frac{\exp(-t^2/2)}{\sqrt{2\pi} t}. 
    \end{align*}
    Since the Gaussian distribution is symmetric, we have $\Pr (Z \ge t) = \Pr (Z \le -t)$. Hence we obtain the desired bound. 
\end{proof}

\subsection{Proof of Theorem \ref{thm:3g:3gaussian}}
\label{app:prf:3g}

\begin{proof}[Proof of Theorem \ref{thm:3g:3gaussian}]
Denote $\R = \frac{\rvmu}{\nm{\rvmu}} \in \sR^{d \times 1}$, and denote $\N \in \sR^{d \times (d-1)}$ an orthonormal basis of the null space of $\rvmu$. Now consider decomposing the sample $\lt$ by \footnote{To be consistent with the notations in other parts of this work, we abuse the notations of $\R$ and $\rt$ in the proof of Theorem \ref{thm:3g:3gaussian}, i.e., $\R$ is a vector instead of a matrix, and $\rt$ is a scalar instead of a vector. }
\begin{equation*} 
    \rt := \R^T \lt \text{, and } \nt := \N^T \lt, 
\end{equation*}
where $\rt \in \sR$, $\nt \in \sR^{d-1}$. Then we have 
\begin{equation*} 
    \lt = \R \rt + \N \nt. 
\end{equation*}
Similarly, we decompose the noise $\epst$ into 
\begin{equation*} 
    \ert := \R^T \epst \text{, and } \ent := \N^T \epst, 
\end{equation*}
where $\ert \in \sR$, $\ent \in \sR^{d-1}$. Then we have 
\begin{equation*} 
    \epst = \R \ert + \N \ent. 
\end{equation*}
Since a linear combination of a Gaussian random variable still follows Gaussian distribution, by $\epst \sim \gN(\0_d, \mI_d)$, $\R^T \R = 1$, and $\N^T \N = \mI_{d-1}$ we obtain 
\begin{equation*} 
    \ert \sim \gN(0,1) \text{, and } \ent \sim \gN(\0_{d-1}, \mI_{d-1}). 
\end{equation*}
By the definition of Langevin dynamics in \eqref{eq:ld:ld_def}, $\nt$ follow from the update rule: 
\begin{align}
    \nt &= \ntm + \frac{\dt}{2} \N^T \nabla_\lgv \log \P (\ltm) + \sqrt{\dt} \ent \label{eq:3g:nt_update} .
\end{align}
It is worth noting that by Definition \ref{def:dist}, we have 
\begin{equation} \label{eq:3g:conn-nt-du}
    \du{\lt} = \nm{\lt - \frac{\rvmu^T \lt}{\nm{\rvmu}^2} \rvmu} = \nm{\lt - \R \R^T \lt} = \nm{\N \N^T \lt} = \nm{\nt}. 
\end{equation}

To establish a lower bound on $\nm{\nt}$, we consider different cases of the step size $\dt$. Intuitively, when $\dt$ is large enough, $\nt$ will be too noisy due to the introduction of random noise $\sqrt{\dt} \ent$ in \eqref{eq:3g:nt_update}. While for small $\dt$, the update of $\nt$ is bounded and thus we can iteratively analyze $\nt$. We first handle the case of large $\dt$ in the following lemma.

\begin{lemma} \label{lem:3g:large_dt}
    If $\dt > \nu^2$, with probability at least $1-\exp(-0.04d)$, for $\nt$ satisfying \eqref{eq:3g:nt_update}, we have $\nm{\nt}^2 \ge \frac{3\nu^2+1}{4} d$ regardless of the previous state $\ltm$. 
\end{lemma}
\begin{proof}[Proof of Lemma \ref{lem:3g:large_dt}]
    Denote $\rvv := \ntm + \frac{\dt}{2} \N^T \nabla_\lgv \log \P (\ltm)$ for simplicity. Note that $\rvv$ is fixed for any given $\ltm$. We decompose $\ent$ into a vector aligning with $\rvv$ and another vector orthogonal to $\rvv$. Consider an orthonormal matrix $\rmM \in \sR^{d \times (d-1)}$ such that $\rmM^T \rvv = \0_{d-1}$ and $\rmM^T \rmM = \mI_{d-1}$. By denoting $\rvu := \ent - \rmM \rmM^T \ent$ we have $\rmM^T \rvu = \0_{d-1}$, thus we obtain
    \begin{align*}
        \nm{\nt}^2 &= \nm{\rvv + \sqrt{\dt} \ent}^2 \\ 
        &= \nm{\rvv + \sqrt{\dt} \rvu + \sqrt{\dt} \rmM \rmM^T \ent}^2 \\ 
        &= \nm{\rvv + \sqrt{\dt} \rvu}^2 + \nm{\sqrt{\dt} \rmM \rmM^T \ent}^2 \\ 
        &\ge \nm{\sqrt{\dt} \rmM \rmM^T \ent}^2 \\ 
        &\ge \nu^2 \nm{\rmM^T \ent}^2. 
    \end{align*}
    Since $\ent \sim \gN(\0_d, \mI_d)$ and $\rmM^T \rmM = \mI_{d-1}$, we obtain $\rmM^T \ent \sim \gN(\0_{d-1}, \mI_{d-1})$. Therefore, by Lemma \ref{lem:gaussian_tail_bound} we can bound 
    \begin{align*}
        \Pr \left( \nm{\nt}^2 \le \frac{3\nu^2 + 1}{4} d \right) 
        &\le \Pr \left( \nm{\rmM^T \ent}^2 \le \frac{3\nu^2 + 1}{4\nu^2} d \right) \\ 
        &= \Pr \left( \nm{\rmM^T \ent}^2 \le d - 2 \sqrt{d \cdot \left( \frac{\nu^2-1}{8\nu^2} \right)^2 d} \right) \\ 
        &\le \Pr \left( \nm{\rmM^T \ent}^2 \le (d-1) - 2 \sqrt{(d-1) \left( \frac{\nu^2-1}{8\nu^2} \right)^2 \frac{d}{2}} \right) \\ 
        &\le \exp \left( - \left( \frac{\nu^2-1}{8\nu^2} \right)^2 \frac{d}{2} \right) \le \exp \left( -\frac{d}{24} \right).
    \end{align*}
    Hence we complete the proof of Lemma \ref{lem:3g:large_dt}. 
\end{proof}

We then consider the case when $\dt \le \nu^2$. We first show that when $\nm{\nv}^2 \ge \frac{\nu^2+1}{2} d$, $\probo(\lgv)$ and $\probt(\lgv)$ are exponentially smaller than $\probz(\lgv)$ in the following lemma. 

\begin{lemma} \label{lem:3g:ratio_probi_probz}
    Given that $\nm{\nv}^2 \ge \frac{\nu^2+1}{2} d$ and $\nm{\rvmu}^2 \le 0.2d$, we have both $\frac{\probo(\lgv)}{\probz(\lgv)} \le \exp(-0.06d)$ and $\frac{\probt(\lgv)}{\probz(\lgv)} \le \exp(-0.06d)$. 
\end{lemma}
\begin{proof}[Proof of Lemma \ref{lem:3g:ratio_probi_probz}]
    By the density function of Gaussian distribution, we have
    \begin{align*}
        \frac{\probo(\lgv)}{\probz(\lgv)} &= \frac{(2\pi)^{-d/2} \exp \left( -\frac{1}{2} \nm{\lgv - \rvmu}^2 \right)}{(2\pi \nu^2)^{-d/2} \exp \left( -\frac{1}{2\nu^2} \nm{\lgv}^2 \right)} \\ 
        &= \nu^d \exp \left( \frac{1}{2\nu^2} \nm{\lgv}^2 - \frac{1}{2} \nm{\lgv - \rvmu}^2 \right) \\ 
        &= \nu^d \exp \left( \left( \frac{1}{2\nu^2}-\frac{1}{2} \right) \nm{\N\nv}^2 + \left( \frac{\nm{\R\rv}^2}{2\nu^2} - \frac{\nm{\R\rv-\rvmu}^2}{2} \right) \right) \\ 
        &= \nu^d \exp \left( \left( \frac{1}{2\nu^2}-\frac{1}{2} \right) \nm{\nv}^2 + \left( \frac{\nm{\rv}^2}{2\nu^2} - \frac{\nm{\rv-\R^T\rvmu}^2}{2} \right) \right), 
    \end{align*}
    Since $\nu^2 \ge 3$, the quadratic term $\frac{\nm{\rv}^2}{2\nu^2} - \frac{\nm{\rv - \R^T\rvmu}^2}{2}$ is maximized at $\rv = \frac{\nu^2 \R^T \rvmu}{\nu^2-1}$. Therefore, 
    \begin{align*}
        \frac{\nm{\rv}^2}{2\nu^2} - \frac{\nm{\rv - \R^T\rvmu}^2}{2} &\le \frac{\nu^4\nm{\R^T \rvmu}^2}{2\nu^2(\nu^2-1)^2} - \frac{1}{2} \left( \frac{\nu^2}{\nu^2-1}-1 \right)^2 \nm{\R^T \rvmu}^2 = \frac{\nm{\rvmu}^2}{2(\nu^2-1)}. 
    \end{align*}
    Hence, for $\nm{\nv}^2 \ge \frac{\nu^2+1}{2} d$, by $\nu^2 \ge 3$ and $\nm{\rvmu}^2 \le 0.2 d$ we have 
    \begin{align*}
        \frac{\probo(\lgv)}{\probz(\lgv)} &= \nu^d \exp \left( \left( \frac{1}{2\nu^2}-\frac{1}{2} \right) \nm{\nv}^2 + \left( \frac{\nm{\rv}^2}{2\nu^2} - \frac{\nm{\rv-\R^T\rvmu}^2}{2} \right) \right) \\ 
        &\le \exp \left( d \log \nu - \frac{\nu^4-1}{4\nu^2} d + \frac{1}{10(\nu^2-1)} d \right) \le \exp(-0.06 d). 
    \end{align*}
    We can similarly obtain the same result for $\frac{\probt(\lgv)}{\probz(\lgv)}$. Therefore we finish the proof of Lemma \ref{lem:3g:ratio_probi_probz}. 
\end{proof}

Lemma \ref{lem:3g:ratio_probi_probz} implies that when $\nm{\nv}$ is large, the Gaussian mode $\probz$ dominates other modes $\probo$ and $\probt$. To bound $\nm{\nt}$, we first consider a simpler case that $\nm{\ntm}$ is large. Intuitively, the following lemma proves that when the previous state $\ntm$ is far from the low-variance modes, a single step of Langevin dynamics with a bounded step size is not enough to find the modes. 
\begin{lemma} \label{lem:3g:large_ntm_norm}
    Suppose $\dt \le \nu^2$ and $\nm{\ntm}^2 > 36 \nu^2 d$, then for $\nt$ following from \eqref{eq:3g:nt_update}, we have $\nm{\nt}^2 \ge \nu^2 d$ with probability at least $1-\exp(-0.02d)$. 
\end{lemma}
\begin{proof}[Proof of Lemma \ref{lem:3g:large_ntm_norm}]
    From the recursion of $\nt$ in \eqref{eq:3g:nt_update} we have
    \begin{align}
        \nt &= \ntm + \frac{\dt}{2} \N^T \nabla_\lgv \log \P (\ltm) + \sqrt{\dt} \ent \notag \\ 
        &= \ntm - \frac{\dt}{2} \sum_{i=0}^2 \frac{\wi\probi(\ltm)}{\P(\ltm)} \cdot \frac{\N^T (\ltm-\mui)}{\vi^2} + \sqrt{\dt} \ent \notag \\ 
        &= \left( 1 - \frac{\dt}{2} \sum_{i=0}^2 \frac{\wi\probi(\ltm)}{\P(\ltm)} \cdot \frac{1}{\vi^2} \right) \ntm + \sqrt{\dt} \ent. \label{eq:3g:large_ntm_norm:decompose_nt}
    \end{align}
    By Lemma \ref{lem:3g:ratio_probi_probz}, we have $\frac{\probo(\ljm)}{\probz(\ljm)} \le \exp(-0.06d)$ and $\frac{\probt(\ljm)}{\probz(\ljm)} \le \exp(-0.06d)$, therefore 
    \begin{equation}
        1 - \frac{\dt}{2} \sum_{i=0}^2 \frac{\wi\probi(\ltm)}{\P(\ltm)} \cdot \frac{1}{\vi^2} \ge 1 - \frac{\dt}{2} \cdot \frac{1}{\nu^2} - \frac{(1-w)\dt}{2w} \exp(-0.06d) > \frac{1}{3}. \label{eq:3g:large_ntm_norm:bound_param}
    \end{equation}
    On the other hand, from $\ent \sim \gN(\0_{d-1}, \mI_{d-1})$ we know $\frac{\langle \ntm , \ent \rangle}{\nm{\ntm}} \sim \gN(0, 1)$ for any fixed $\ntm \neq \0_\n$, hence by Lemma \ref{lem:one_dim_gaussian_tail_bound} we have 
    \begin{equation} \label{eq:3g:large_ntm_norm:bound_inner_product}
        \Pr \left( \frac{\langle \ntm , \ent \rangle}{\nm{\ntm}} \ge \frac{\sqrt{d}}{4} \right) = \Pr \left( \frac{\langle \ntm , \ent \rangle}{\nm{\ntm}} \le -\frac{\sqrt{d}}{4} \right) \le \frac{4}{\sqrt{2\pi d}} \exp \left( -\frac{d}{32} \right)
    \end{equation}
    Combining \eqref{eq:3g:large_ntm_norm:decompose_nt}, \eqref{eq:3g:large_ntm_norm:bound_param} and \eqref{eq:3g:large_ntm_norm:bound_inner_product} gives that 
    \begin{align*}
        \nm{\nt}^2 &\ge \left( \frac{1}{3} \right)^2 \nm{\ntm}^2 - 2 \nu \lvert \langle \ntm, \ent \rangle \rvert \\ 
        &\ge \frac{1}{9} \nm{\ntm}^2 - \frac{\nu \sqrt{d}}{2} \nm{\ntm} \\ 
        &\ge \frac{1}{9} \cdot 36 \nu^2 d - \frac{\nu \sqrt{d}}{2} \cdot 6 \nu \sqrt{d} \\ 
        &= \nu^2 d
    \end{align*}
    with probability at least $1-\frac{8}{\sqrt{2\pi d}} \exp \left( -\frac{d}{32} \right) \ge 1 - \exp(-0.02d)$. This proves Lemma \ref{lem:3g:large_ntm_norm}. 
\end{proof}

We then proceed to bound $\nm{\nt}$ iteratively for $\nm{\ntm}^2 \le 36\nu^2 d$. Recall that \eqref{eq:3g:nt_update} gives 
\begin{equation*}
    \nt = \ntm + \frac{\dt}{2} \N^T \nabla_\lgv \log \P (\ltm) + \sqrt{\dt} \ent. 
\end{equation*}
We notice that the difficulty of solving $\nt$ exhibits in the dependence of $\log \P (\ltm)$ on $\rtm$. Since $\P = \sum_{i=0}^2 \wi \probi = \sum_{i=0}^2 \wi \gN(\mui, \vi^2 \mI_d)$, we can rewrite the score function as 
\begin{align}
    \nabla_\lgv \log \P(\lgv) &= \frac{\nabla_\lgv \P(\lgv)}{\P(\lgv)} = -\sum_{i=0}^2 \frac{\wi\probi(\lgv)}{\P(\lgv)} \cdot (\lgv-\mui) \notag \\
    &= -\frac{\lgv}{\nu^2} + \sum_{i=1}^2 \frac{\wi\probi(\lgv)}{\P(\lgv)} \left( \frac{\lgv}{\nu^2} - (\lgv-\mui) \right). \label{eq:3g:score_rewrite}
\end{align}
Now, instead of directly working with $\nt$, we consider a surrogate recursion $\hnt$ such that $\hnz = \nz$ and for all $t \ge 1$, 
\begin{equation} \label{eq:3g:def:hnt}
    \hnt = \hntm - \frac{\dt}{2\nu^2} \hntm + \sqrt{\dt} \ent. 
\end{equation}
The advantage of the surrogate recursion is that $\hnt$ is independent of $\rv$, thus we can obtain the closed-form solution to $\hnt$. Before we proceed to bound $\hnt$, we first show that $\hnt$ is sufficiently close to the original recursion $\nt$ in the following lemma.

\begin{lemma} \label{lem:3g:hnt_close_to_nt}
    For any $t \ge 1$, given that $\dj \le \nu^2$ and $\frac{\nu^2+1}{2} d \le \nm{\njm}^2 \le 36\nu^2 d$ for all $j \in [t]$ and $\nm{\rvmu}^2 \le 0.2d$, we have $\nm{\hnt - \nt} \le \frac{t}{\exp(0.04d)} \sqrt{d}$. 
\end{lemma}
\begin{proof}[Proof of Lemma \ref{lem:3g:hnt_close_to_nt}]
    Upon comparing \eqref{eq:3g:nt_update} and \eqref{eq:3g:def:hnt}, by \eqref{eq:3g:score_rewrite} we have that for all $j \in [t]$,
    \begin{align*}
        \nm{\hnj - \nj} &= \nm{\hnjm - \frac{\dj}{2\nu^2} \hnjm - \njm - \frac{\dj}{2} \N^T \nabla_\lgv \log \P (\ljm)} \\ 
        &= \nm{ \left( 1 - \frac{\dj}{2\nu^2} \right)(\hnjm - \njm) + \frac{\dj}{2} \sum_{i=1}^2 \frac{\wi\probi(\ljm)}{\P(\ljm)} \left( 1 - \frac{1}{\nu^2} \right) \njm } \\ 
        &\le \left( 1 - \frac{\dj}{2\nu^2} \right) \nm{\hnjm-\njm} + \sum_{i=1}^2 \frac{\dj}{2} \frac{\wi\probi(\ljm)}{\P(\ljm)} \left( 1 - \frac{1}{\nu^2} \right) \nm{\njm} \\ 
        &\le \nm{\hnjm-\njm} + \sum_{i=1}^2 \frac{\dj}{2} \frac{\wi\probi(\ljm)}{\wz\probz(\ljm)} \left( 1 - \frac{1}{\nu^2} \right) 6\nu \sqrt{d} . 
    \end{align*}
    By Lemma \ref{lem:3g:ratio_probi_probz}, we have $\frac{\probo(\ljm)}{\probz(\ljm)} \le \exp(-0.06d)$ and $\frac{\probt(\ljm)}{\probz(\ljm)} \le \exp(-0.06d)$, hence we obtain a recursive bound
    \begin{equation*}
        \nm{\hnj - \nj} \le \nm{\hnjm-\njm} + \frac{1}{\exp(0.04d)} \sqrt{d}. 
    \end{equation*}
    Finally, by $\hnz = \nz$, we have 
    \begin{equation*}
        \nm{\hnt-\nt} = \sum_{j\in[t]} \left( \nm{\hnj - \nj} - \nm{\hnjm-\njm} \right) \le \frac{t}{\exp(0.04d)} \sqrt{d}.
    \end{equation*}
    Hence we obtain Lemma \ref{lem:3g:hnt_close_to_nt}. 
\end{proof}

We then proceed to analyze $\hnt$, The following lemma gives us the closed-form solution of $\hnt$. We slightly abuse the notations here, e.g., $\prod_{i=c_1}^{c_2} \left( 1-\frac{\di}{2\nu^2} \right) = 1$ and $\sum_{j=c_1}^{c_2} \dj = 0$ for $c_1 > c_2$. 
\begin{lemma} \label{lem:3g:hnt_solution}
    For all $t \ge 0$, $\hnt \sim \gN \left( \prod_{i=1}^t \left( 1-\frac{\di}{2\nu^2} \right) \nz ,\; \sum_{j=1}^t \prod_{i=j+1}^t \left( 1-\frac{\di}{2\nu^2} \right)^2 \dj \mI_{d-1} \right)$, where the mean and covariance satisfy $\prod_{i=1}^t \left( 1-\frac{\di}{2\nu^2} \right)^2 + \frac{1}{\nu^2} \sum_{j=1}^t \prod_{i=j+1}^t \left( 1-\frac{\di}{2\nu^2} \right)^2 \dj \ge 1$. 
\end{lemma}
\begin{proof}[Proof of Lemma \ref{lem:3g:hnt_solution}]
    We prove the two properties by induction. When $t=0$, they are trivial. Suppose they hold for $t-1$, then for the distribution of $\hnt$, we have
    \begin{align*}
        \hnt &= \hntm - \frac{\dt}{2\nu^2} \hntm + \sqrt{\dt} \ent \\ 
        &\sim \gN \left( \left( 1-\frac{\dt}{2\nu^2} \right) \prod_{i=1}^{t-1} \left( 1-\frac{\di}{2\nu^2} \right) \nz ,\; \left( 1-\frac{\dt}{2\nu^2} \right)^2 \sum_{j=1}^{t-1} \prod_{i=j+1}^{t-1} \left( 1-\frac{\di}{2\nu^2} \right)^2 \dj \mI_{d-1} + \dt \mI_{d-1} \right) \\ 
        &= \gN \left( \prod_{i=1}^t \left( 1-\frac{\di}{2\nu^2} \right) \nz ,\; \sum_{j=1}^t \prod_{i=j+1}^t \left( 1-\frac{\di}{2\nu^2} \right)^2 \dj \mI_{d-1} \right). 
    \end{align*}
    For the second property, 
    \begin{align*}
        &\quad\prod_{i=1}^t \left( 1-\frac{\di}{2\nu^2} \right)^2 + \frac{1}{\nu^2} \sum_{j=1}^t \prod_{i=j+1}^t \left( 1-\frac{\di}{2\nu^2} \right)^2 \dj \\ 
        &= \left( 1-\frac{\dt}{2\nu^2} \right)^2 \left( \prod_{i=1}^{t-1} \left( 1-\frac{\di}{2\nu^2} \right)^2 + \frac{1}{\nu^2} \sum_{j=1}^{t-1} \prod_{i=j+1}^{t-1} \left( 1-\frac{\di}{2\nu^2} \right)^2 \dj \right) + \frac{1}{\nu^2} \dt \\ 
        &\ge \left( 1-\frac{\dt}{2\nu^2} \right)^2 + \frac{1}{\nu^2} \dt = 1 + \frac{\dt^2}{4\nu^4} \ge 1. 
    \end{align*}
    Hence we finish the proof of Lemma \ref{lem:3g:hnt_solution}. 
\end{proof}

Armed with Lemma \ref{lem:3g:hnt_solution}, we are now ready to establish the lower bound on $\nm{\hnt}$. For simplicity, denote $\hkalpha := \prod_{i=1}^t \left( 1-\frac{\di}{2\nu^2} \right)^2$ and $\hkbeta := \frac{1}{\nu^2} \sum_{j=1}^t \prod_{i=j+1}^t \left( 1-\frac{\di}{2\nu^2} \right)^2 \dj$. By Lemma \ref{lem:3g:hnt_solution} we know $\hnt \sim \gN(\hkalpha \nz, \hkbeta \nu^2 \mI_{d-1})$, so we can write $\hnt = \hkalpha \nz + \sqrt{\hkbeta} \nu \hkeps$, where $\hkeps \sim \gN(\0_{d-1}, \mI_{d-1})$.

\begin{lemma} \label{lem:3g:hnt_norm_lower_bound}
    Given that $\nm{\hnz}^2 \ge \frac{3\nu^2 + 1}{4} d$, we have $\nm{\hnt}^2 \ge \frac{5\nu^2+3}{8} d$ with probability at least $1-\exp \left( -d/300 \right)$. 
\end{lemma}
\begin{proof}[Proof of Lemma \ref{lem:3g:hnt_norm_lower_bound}]
    By $\hnt = \hkalpha \nz + \sqrt{\hkbeta} \nu \hkeps$ we have 
    \begin{equation*}
        \nm{\hnt}^2 = \hkalpha^2 \nm{\nz}^2 + \hkbeta \nu^2 \nm{\hkeps}^2 + 2\hkalpha \sqrt{\hkbeta} \nu \langle \nz , \hkeps \rangle
    \end{equation*}
    By Lemma \ref{lem:gaussian_tail_bound} we can bound 
    \begin{align*}
        \Pr \left( \nm{\hkeps}^2 \le \frac{3\nu^2+1}{4\nu^2} d \right) &= \Pr \left( \nm{\hkeps}^2 \le d - 2 \sqrt{ d \cdot \left( \frac{\nu^2-1}{8\nu^2} \right)^2 d} \right) \\ 
        &\le \Pr \left( \nm{\hkeps}^2 \le (d-1) - 2 \sqrt{(d-1) \left( \frac{\nu^2-1}{8\nu^2} \right)^2 \frac{d}{2}} \right) \\ 
        &\le \exp \left( - \left( \frac{\nu^2-1}{8\nu^2} \right)^2 \frac{d}{2} \right) \le \exp(-d/288).
    \end{align*}
    Since $\hkeps \sim \gN(\0_{d-1}, \mI_{d-1})$, we know $\frac{\langle \nz , \hkeps \rangle}{\nm{\nz}} \sim \gN(0, 1)$. Therefore by Lemma \ref{lem:one_dim_gaussian_tail_bound}, 
    \begin{equation*}
        \Pr \left( \frac{\langle \nz , \hkeps \rangle}{\nm{\nz}} \le -\frac{\nu^2-1}{4\nu \sqrt{3\nu^2+1}} \sqrt{d} \right) \le \frac{4\nu \sqrt{3\nu^2+1}}{\sqrt{2\pi}(\nu^2-1) \sqrt{d}} \exp \left( -\frac{(\nu^2-1)^2 d}{32\nu^2 (3\nu^2+1)} \right) \le \exp(-0.004d). 
    \end{equation*}
    Conditioned on $\nm{\hnz}^2 \ge \frac{3\nu^2 + 1}{4} d$,  $\nm{\hkeps}^2 > \frac{3\nu^2+1}{4\nu^2} d$ and $\frac{1}{\nm{\nz}} \langle \nz , \hkeps \rangle > -\frac{\nu^2-1}{4\nu \sqrt{3\nu^2+1}} \sqrt{d}$, since Lemma \ref{lem:3g:hnt_solution} gives $\hkalpha^2 + \hkbeta \ge 1$ we have
    \begin{align*}
        \nm{\hnt}^2 &= \hkalpha^2 \nm{\nz}^2 + \hkbeta \nu^2 \nm{\hkeps}^2 + 2\hkalpha \sqrt{\hkbeta} \nu \langle \nz , \hkeps \rangle \\ 
        &\ge \hkalpha^2 \nm{\nz}^2 + \hkbeta \nu^2 \nm{\hkeps}^2 - 2\hkalpha \sqrt{\hkbeta} \nu \nm{\nz} \frac{\nu^2-1}{4\nu \sqrt{3\nu^2+1}} \sqrt{d} \\ 
        &\ge \hkalpha^2 \nm{\nz}^2 + \hkbeta \nu^2 \nm{\hkeps}^2 - 2\hkalpha \sqrt{\hkbeta} \nu \nm{\nz} \nm{\hkeps} \cdot \frac{\nu^2-1}{6\nu^2+21} \\ 
        &\ge \left( 1 - \frac{\nu^2-1}{6\nu^2+21} \right) \left( \hkalpha^2 \nm{\nz}^2 + \hkbeta \nu^2 \nm{\hkeps}^2 \right) \\ 
        &\ge \frac{5\nu^2+3}{6\nu^2+21} \left( \hkalpha^2 + \hkbeta \right) \cdot \frac{3\nu^2 + 1}{4} d \\ 
        &\ge \frac{5\nu^2+3}{8} d.
    \end{align*}
    Hence by union bound, we complete the proof of Lemma \ref{lem:3g:hnt_norm_lower_bound}. 
\end{proof}

Upon having all the above lemmas, we are now ready to establish Theorem \ref{thm:3g:3gaussian} by induction. Suppose the theorem holds for all $T$ values of $1, \cdots, T-1$. We consider the following 3 cases:
\begin{itemize}
    \item If there exists some $t \in [T]$ such that $\dt > \nu^2$, by Lemma \ref{lem:3g:large_dt} we know that with probability at least $1-\exp(-d/25)$, we have $\nm{\nt}^2 \ge \frac{3\nu^2+1}{4}d$, thus the problem reduces to the two sub-arrays $\nz,\cdots,\ntm$ and $\nt,\cdots,\nT$, which can be solved by induction. 
    \item Suppose $\dt \le \nu^2$ for all $t \in [T]$. If there exists some $t \in [T]$ such that $\nm{\ntm}^2 > 36 \nu^2 d$, by Lemma \ref{lem:3g:large_ntm_norm} we know that with probability at least $1-\exp(-d/50)$, we have $\nm{\nt}^2 \ge \nu^2 d > \frac{3\nu^2+1}{4}d$, thus the problem similarly reduces to the two sub-arrays $\nz,\cdots,\ntm$ and $\nt,\cdots,\nT$, which can be solved by induction. 
    \item Suppose $\dt \le \nu^2$ and $\nm{\ntm}^2 \le 36 \nu^2 d$ for all $t \in [T]$. Conditioned on $\nm{\ntm}^2 > \frac{\nu^2+1}{2} d$ for all $t \in [T]$, by Lemma \ref{lem:3g:hnt_close_to_nt} we have that for $T \le \exp(d/300)$, 
        \begin{equation*}
            \nm{\hnT - \nT} < \left( \sqrt{\frac{5\nu^2+3}{8}} - \sqrt{\frac{\nu^2+1}{2}} \right) \sqrt{d}. 
        \end{equation*}
        By Lemma \ref{lem:3g:hnt_norm_lower_bound} we have that with probability at least $1-\exp(-d/300)$,
        \begin{equation*}
            \nm{\hnT}^2 \ge \frac{5\nu^2+3}{8} d.
        \end{equation*}
        Combining the two inequalities implies the desired bound
        \begin{equation*}
            \nm{\nT} \ge \nm{\hnT} - \nm{\hnT-\nT} > \sqrt{\frac{\nu^2+1}{2} d}.
        \end{equation*} 
        Hence by induction we obtain $\nm{\nt}^2 > \frac{\nu^2+1}{2} d$ for all $t \in [T]$ with probability at least 
        \begin{equation*}
            (1-(T-1)\exp(-d/300)) \cdot (1-\exp(-d/300)) \ge 1-T\exp(-d/300). 
        \end{equation*}
\end{itemize}
Therefore we complete the proof of Theorem \ref{thm:3g:3gaussian}.

\end{proof}

\begin{proof}[Proof of Corollary \ref{coro:3g:tv}]
    By the definition of total variation distance, we have 
    \begin{equation*}
        \TV(\hP_T, \P) = \sup_A | \hP_T(A) - \P(A) |. 
    \end{equation*}
    Specifically, by choosing the event $A$ as $\set{\lgv: \du{\lgv}^2 \ge \frac{\nu^2+1}{2} d}$, from Theorem \ref{thm:3g:3gaussian} we know $\hP_T(A) \ge 1 - T \cdot \exp(-d/300)$. On the other hand, by Definition \ref{def:dist}, $\du{\lgv}^2 \ge \frac{\nu^2+1}{2} d$ implies $\nm{\lgv-\rvmu}^2 \ge \frac{\nu^2+1}{2} d$. Therefore, from Lemma \ref{lem:gaussian_tail_bound} we have 
    \begin{align*}
        \probo(A) &\le \probo \left(\nm{\lgv-\rvmu}^2 \ge \frac{\nu^2+1}{2} d \right) \le \exp \left( - \left( \frac{\nu-1}{2} \right)^2 d \right) \le \exp \left(-\frac{d}{10} \right). 
    \end{align*}
    From the same derivation we can obtain $\probt(A) \le \exp(-d/10)$. Combining all bounds gives an lower bound on the total variation distance 
    \begin{align*}
        \TV(\hP_T,\P) &\ge \hP_T(A) - \P(A) \ge 1- T \cdot \exp \left(-\frac{d}{300}\right) - \P(A) \\ 
        &\ge 1- T \cdot \exp \left(-\frac{d}{300}\right) - \left( w \probz(A) + \frac{1-w}{2} \probo(A) + \frac{1-w}{2} \probt(A) \right) \\ 
        &\ge 1- T \cdot \exp \left(-\frac{d}{300}\right) - \left(w + (1-w) \exp \left(-\frac{d}{10} \right) \right) \\
        &\ge 0.99 - w - \frac{T}{\exp(-d/300)}. 
    \end{align*}
\end{proof}

\subsection{Proof of Theorem \ref{thm:ld:gaussian_mixture}} \label{app:prf:ld_gaussian_mixture}

\begin{proof}[Proof of Theorem \ref{thm:ld:gaussian_mixture}]
The proof of Theorem \ref{thm:ld:gaussian_mixture} follows from a similar framework to the proof of Theorem \ref{thm:3g:3gaussian}.  Let $\r$ and $\n$ respectively denote the rank and nullity of the vector space $\set{\mui}_{i \in [k]}$, then we have $\r+\n = d$ and $0 \le \r \le k = o(d)$. Denote $\R \in \sR^{d \times \r}$ an orthonormal basis of the vector space $\set{\mui}_{i \in [k]}$, and denote $\N \in \sR^{d \times \n}$ an orthonormal basis of the null space of $\set{\mui}_{i \in [k]}$. Now consider decomposing the sample $\lt$ by 
\begin{equation*} 
    \rt := \R^T \lt \text{, and } \nt := \N^T \lt, 
\end{equation*}
where $\rt \in \sR^\r$, $\nt \in \sR^\n$. Then we have 
\begin{equation*} 
    \lt = \R \rt + \N \nt. 
\end{equation*}
Similarly, we decompose the noise $\epst$ into 
\begin{equation*} 
    \ert := \R^T \epst \text{, and } \ent := \N^T \epst, 
\end{equation*}
where $\ert \in \sR^\r$, $\ent \in \sR^\n$. Then we have 
\begin{equation*} 
    \epst = \R \ert + \N \ent. 
\end{equation*}
Since a linear combination of a Gaussian random variable still follows Gaussian distribution, by $\epst \sim \gN(\0_d, \mI_d)$, $\R^T \R = \mI_{\r}$, and $\N^T \N = \mI_{\n}$ we obtain 
\begin{equation*} 
    \ert \sim \gN(\0_{\r}, \mI_{\r}) \text{, and } \ent \sim \gN(\0_{\n}, \mI_{\n}). 
\end{equation*}
By the definition of Langevin dynamics in \eqref{eq:ld:ld_def}, $\nt$ follow from the update rule: 
\begin{align}
    \nt &= \ntm + \frac{\dt}{2} \N^T \nabla_\lgv \log \P (\ltm) + \sqrt{\dt} \ent \label{eq:ld:nt_update} .
\end{align}
By Definition \ref{def:dist}, since $\nt$ is the projection onto the null space of $\set{\mui}_{i \in [k]}$, we have 
\begin{equation*}
    \duk{\lt} = \min_{\lambda_1, \cdots, \lambda_k} \nm{\lt - \sum_{i=1}^k \lambda_i \mui} = \nm{\nt}. 
\end{equation*}
Then, with the assumption that the initialization satisfies $\nm{\nz}^2 \ge \frac{3\vz^2+\vmax^2}{4} d$, the objective is to show that $\nm{\nt}$ remains large with high probability.

To establish a lower bound on $\nm{\nt}$, we consider different cases of the step size $\dt$. Intuitively, when $\dt$ is large enough, $\nt$ will be too noisy due to the introduction of random noise $\sqrt{\dt} \ent$ in \eqref{eq:ld:nt_update}. While for small $\dt$, the update of $\nt$ is bounded and thus we can iteratively analyze $\nt$. We first handle the case of large $\dt$ in the following lemma.

\begin{lemma} \label{lem:ld:large_dt}
    If $\dt > \vz^2$, with probability at least $1-\exp(-0.04d)$, for $\nt$ satisfying \eqref{eq:ld:nt_update}, we have $\nm{\nt}^2 \ge \frac{3\vz^2+\vmax^2}{4} d$ regardless of the previous state $\ltm$. 
\end{lemma}
\begin{proof}[Proof of Lemma \ref{lem:ld:large_dt}]
    Denote $\rvv := \ntm + \frac{\dt}{2} \N^T \nabla_\lgv \log \P (\ltm)$ for simplicity. Note that $\rvv$ is fixed for any given $\ltm$. We decompose $\ent$ into a vector aligning with $\rvv$ and another vector orthogonal to $\rvv$. Consider an orthonormal matrix $\rmM \in \sR^{\n \times (\n-1)}$ such that $\rmM^T \rvv = \0_{\n-1}$ and $\rmM^T \rmM = \mI_{\n-1}$. By denoting $\rvu := \ent - \rmM \rmM^T \ent$ we have $\rmM^T \rvu = \0_{\n-1}$, thus we obtain
    \begin{align*}
        \nm{\nt}^2 &= \nm{\rvv + \sqrt{\dt} \ent}^2 \\ 
        &= \nm{\rvv + \sqrt{\dt} \rvu + \sqrt{\dt} \rmM \rmM^T \ent}^2 \\ 
        &= \nm{\rvv + \sqrt{\dt} \rvu}^2 + \nm{\sqrt{\dt} \rmM \rmM^T \ent}^2 \\ 
        &\ge \nm{\sqrt{\dt} \rmM \rmM^T \ent}^2 \\ 
        &\ge \vz^2 \nm{\rmM^T \ent}^2. 
    \end{align*}
    Since $\ent \sim \gN(\0_\n, \mI_\n)$ and $\rmM^T \rmM = \mI_{\n-1}$, we obtain $\rmM^T \ent \sim \gN(\0_{\n-1}, \mI_{\n-1})$. Therefore, by Lemma \ref{lem:gaussian_tail_bound} we can bound 
    \begin{align*}
        \Pr \left( \nm{\nt}^2 \le \frac{3\vz^2 + \vmax^2}{4} d \right) 
        &\le \Pr \left( \nm{\rmM^T \ent}^2 \le \frac{3\vz^2 + \vmax^2}{4\vz^2} d \right) \\ 
        &= \Pr \left( \nm{\rmM^T \ent}^2 \le d - 2 \sqrt{d \cdot \left( \frac{\vz^2-\vmax^2}{8\vz^2} \right)^2 d} \right) \\ 
        &\le \Pr \left( \nm{\rmM^T \ent}^2 \le (\n-1) - 2 \sqrt{(\n-1) \left( \frac{\vz^2-\vmax^2}{8\vz^2} \right)^2 \frac{d}{2}} \right) \\ 
        &\le \exp \left( - \left( \frac{\vz^2-\vmax^2}{8\vz^2} \right)^2 \frac{d}{2} \right) \le \exp \left( -\frac{d}{24} \right),
    \end{align*}
    Hence we complete the proof of Lemma \ref{lem:ld:large_dt}. 
\end{proof}

We then consider the case when $\dt \le \vz^2$. We first show that when $\nm{\nv}^2 \ge \frac{\vz^2+\vmax^2}{2} d$, $\probi(\lgv)$ is exponentially smaller than $\probz(\lgv)$ for all $i \in [k]$ in the following lemma. 

\begin{lemma} \label{lem:ld:ratio_probi_probz}
    Given that $\nm{\nv}^2 \ge \frac{\vz^2+\vmax^2}{2} d$ and $\nm{\mui}^2 \le 0.2 d$ for all $i \in [k]$, we have $\frac{\probi(\lgv)}{\probz(\lgv)} \le \exp(-0.06d)$ for all $i \in [k]$. 
\end{lemma}
\begin{proof}[Proof of Lemma \ref{lem:ld:ratio_probi_probz}]
    For all $i \in [k]$, define $\rhoi(\lgv) := \frac{\probi(\lgv)}{\probz(\lgv)}$, then 
    \begin{align*}
        \rhoi(\lgv) &= \frac{\probi(\lgv)}{\probz(\lgv)} = \frac{(2\pi \vi^2)^{-d/2} \exp \left( -\frac{1}{2\vi^2} \nm{\lgv - \mui}^2 \right)}{(2\pi \vz^2)^{-d/2} \exp \left( -\frac{1}{2\vz^2} \nm{\lgv}^2 \right)} \\ 
        &= \left( \frac{\vz^2}{\vi^2} \right)^{d/2} \exp \left( \frac{1}{2\vz^2} \nm{\lgv}^2 - \frac{1}{2\vi^2} \nm{\lgv - \mui}^2 \right) \\ 
        &= \left( \frac{\vz^2}{\vi^2} \right)^{d/2} \exp \left( \left( \frac{1}{2\vz^2}-\frac{1}{2\vi^2} \right) \nm{\N\nv}^2 + \left( \frac{\nm{\R\rv}^2}{2\vz^2} - \frac{\nm{\R\rv-\mui}^2}{2\vi^2} \right) \right) \\ 
        &= \left( \frac{\vz^2}{\vi^2} \right)^{d/2} \exp \left( \left( \frac{1}{2\vz^2}-\frac{1}{2\vi^2} \right) \nm{\nv}^2 + \left( \frac{\nm{\rv}^2}{2\vz^2} - \frac{\nm{\rv-\R^T\mui}^2}{2\vi^2} \right) \right), 
    \end{align*}
    where the last step follows from the definition that $\R \in \sR^{d \times \r}$ an orthonormal basis of the vector space $\set{\mui}_{i \in [k]}$ and $\N^T \N = \mI_\n$. Since $\vz^2 > \vi^2$, the quadratic term $\frac{\nm{\rv}^2}{2\vz^2} - \frac{\nm{\rv - \R^T\mui}^2}{2\vi^2}$ is maximized at $\rv = \frac{\vz^2 \R^T \mui}{\vz^2-\vi^2}$. Therefore, 
    \begin{align*}
        \frac{\nm{\rv}^2}{2\vz^2} - \frac{\nm{\rv - \R^T\mui}^2}{2\vi^2} &\le \frac{\vz^4\nm{\R^T \mui}^2}{2\vz^2(\vz^2-\vi^2)^2} - \frac{1}{2\vi^2} \left( \frac{\vz^2}{\vz^2-\vi^2}-1 \right)^2 \nm{\R^T \mui}^2 = \frac{\nm{\mui}^2}{2(\vz^2-\vi^2)}. 
    \end{align*}
    Hence, for $\nm{\nv}^2 \ge \frac{\vz^2+\vmax^2}{2} d$ and $\nm{\mui}^2 \le 0.2d \le \frac{\vz^2-\vi^2}{2} \left( \log \left( \frac{\vi^2}{\vz^2} \right) - \frac{\vi^2}{2\vz^2} + \frac{\vz^2}{2\vi^2} \right) d$, we have 
    \begin{align*}
        \rhoi(\lgv) &= \left( \frac{\vz^2}{\vi^2} \right)^{d/2} \exp \left( \left( \frac{1}{2\vz^2}-\frac{1}{2\vi^2} \right) \nm{\nv}^2 + \left( \frac{\nm{\rv}^2}{2\vz^2} - \frac{\nm{\rv-\R^T\mui}^2}{2\vi^2} \right) \right) \\ 
        &\le \left( \frac{\vz^2}{\vi^2} \right)^{d/2} \exp \left( \left( \frac{1}{2\vz^2}-\frac{1}{2\vi^2} \right) \frac{\vz^2+\vi^2}{2} d + \frac{\nm{\mui}^2}{2(\vz^2-\vi^2)} \right) \\ 
        &= \exp \left( -\left( \log \left( \frac{\vi^2}{\vz^2} \right) - \frac{\vi^2}{2\vz^2} + \frac{\vz^2}{2\vi^2} \right) \frac{d}{2} + \frac{\nm{\mui}^2}{2(\vz^2-\vi^2)} \right) \\
        &\le \exp \left( -\left( \log \left( \frac{\vi^2}{\vz^2} \right) - \frac{\vi^2}{2\vz^2} + \frac{\vz^2}{2\vi^2} \right) \frac{d}{4} \right) \le \exp(-0.06d). 
    \end{align*}
    Therefore we finish the proof of Lemma \ref{lem:ld:ratio_probi_probz}. 
\end{proof}

Lemma \ref{lem:ld:ratio_probi_probz} implies that when $\nm{\nv}$ is large, the Gaussian mode $\probz$ dominates other modes $\probi$. To bound $\nm{\nt}$, we first consider a simpler case that $\nm{\ntm}$ is large. Intuitively, the following lemma proves that when the previous state $\ntm$ is far from a mode, a single step of Langevin dynamics with bounded step size is not enough to find the mode. 
\begin{lemma} \label{lem:ld:large_ntm_norm}
    Suppose $\dt \le \vz^2$ and $\nm{\ntm}^2 > 36 \vz^2 d$, then for $\nt$ following from \eqref{eq:ld:nt_update}, we have $\nm{\nt}^2 \ge \vz^2 d$ with probability at least $1-\exp(-0.02d)$. 
\end{lemma}
\begin{proof}[Proof of Lemma \ref{lem:ld:large_ntm_norm}]
    From the recursion of $\nt$ in \eqref{eq:ld:nt_update} we have
    \begin{align}
        \nt &= \ntm + \frac{\dt}{2} \N^T \nabla_\lgv \log \P (\ltm) + \sqrt{\dt} \ent \notag \\ 
        &= \ntm - \frac{\dt}{2} \sum_{i=0}^k \frac{\wi\probi(\ltm)}{\P(\ltm)} \cdot \frac{\N^T (\ltm-\mui)}{\vi^2} + \sqrt{\dt} \ent \notag \\ 
        &= \left( 1 - \frac{\dt}{2} \sum_{i=0}^k \frac{\wi\probi(\ltm)}{\P(\ltm)} \cdot \frac{1}{\vi^2} \right) \ntm + \sqrt{\dt} \ent. \label{eq:ld:large_ntm_norm:decompose_nt}
    \end{align}
    By Lemma \ref{lem:ld:ratio_probi_probz}, we have $\frac{\probi(\ljm)}{\probz(\ljm)} \le \exp(-0.06d)$ for all $i \in [k]$, therefore 
    \begin{equation}
        1 - \frac{\dt}{2} \sum_{i=0}^k \frac{\wi\probi(\ltm)}{\P(\ltm)} \cdot \frac{1}{\vi^2} \ge 1 - \frac{\dt}{2} \cdot \frac{1}{\vz^2} - \frac{\dt(1-w)}{2\vi^2 w} \exp (-0.06d) > \frac{1}{3}. \label{eq:ld:large_ntm_norm:bound_param}
    \end{equation}
    On the other hand, from $\ent \sim \gN(\0_\n, \mI_\n)$ we know $\frac{\langle \ntm , \ent \rangle}{\nm{\ntm}} \sim \gN(0, 1)$ for any fixed $\ntm \neq \0_\n$, hence by Lemma \ref{lem:one_dim_gaussian_tail_bound} we have 
    \begin{equation} \label{eq:ld:large_ntm_norm:bound_inner_product}
        \Pr \left( \frac{\langle \ntm , \ent \rangle}{\nm{\ntm}} \ge \frac{\sqrt{d}}{4} \right) = \Pr \left( \frac{\langle \ntm , \ent \rangle}{\nm{\ntm}} \le -\frac{\sqrt{d}}{4} \right) \le \frac{4}{\sqrt{2\pi d}} \exp \left( -\frac{d}{32} \right)
    \end{equation}
    Combining \eqref{eq:ld:large_ntm_norm:decompose_nt}, \eqref{eq:ld:large_ntm_norm:bound_param} and \eqref{eq:ld:large_ntm_norm:bound_inner_product} gives that 
    \begin{align*}
        \nm{\nt}^2 &\ge \left( \frac{1}{3} \right)^2 \nm{\ntm}^2 - 2 \vz \lvert \langle \ntm, \ent \rangle \rvert \\ 
        &\ge \frac{1}{9} \nm{\ntm}^2 - \frac{\vz \sqrt{d}}{2} \nm{\ntm} \\ 
        &\ge \frac{1}{9} \cdot 36 \vz^2 d - \frac{\vz \sqrt{d}}{2} \cdot 6 \vz \sqrt{d} \\ 
        &= \vz^2 d
    \end{align*}
    with probability at least $1-\frac{8}{\sqrt{2\pi d}} \exp \left( -\frac{d}{32} \right) \ge 1 - \exp(-0.02d)$. This proves Lemma \ref{lem:ld:large_ntm_norm}. 
\end{proof}

We then proceed to bound $\nm{\nt}$ iteratively for $\nm{\ntm}^2 \le 36\vz^2 d$. Recall that \eqref{eq:ld:nt_update} gives 
\begin{equation*}
    \nt = \ntm + \frac{\dt}{2} \N^T \nabla_\lgv \log \P (\ltm) + \sqrt{\dt} \ent. 
\end{equation*}
We notice that the difficulty of solving $\nt$ exhibits in the dependence of $\log \P (\ltm)$ on $\rtm$. Since $\P = \sum_{i=0}^k \wi \probi = \sum_{i=0}^k \wi \gN(\mui, \vi^2 \mI_d)$, we can rewrite the score function as 
\begin{align}
    \nabla_\lgv \log \P(\lgv) &= \frac{\nabla_\lgv \P(\lgv)}{\P(\lgv)} = -\sum_{i=0}^k \frac{\probi(\lgv)}{\P(\lgv)} \cdot \frac{\lgv-\mui}{\vi^2} 
    = -\frac{\lgv}{\vz^2} + \sum_{i\in[k]} \frac{\probi(\lgv)}{\P(\lgv)} \left( \frac{\lgv}{\vz^2} - \frac{\lgv-\mui}{\vi^2} \right). \label{eq:ld:score_rewrite}
\end{align}
Now, instead of directly working with $\nt$, we consider a surrogate recursion $\hnt$ such that $\hnz = \nz$ and for all $t \ge 1$, 
\begin{equation} \label{eq:ld:def:hnt}
    \hnt = \hntm - \frac{\dt}{2\vz^2} \hntm + \sqrt{\dt} \ent. 
\end{equation}
The advantage of the surrogate recursion is that $\hnt$ is independent of $\rv$, thus we can obtain the closed-form solution to $\hnt$. Before we proceed to bound $\hnt$, we first show that $\hnt$ is sufficiently close to the original recursion $\nt$ in the following lemma.

\begin{lemma} \label{lem:ld:hnt_close_to_nt}
    For any $t \ge 1$, given that $\dj \le \vz^2$ and $\frac{\vz^2+\vmax^2}{2} d \le \nm{\njm}^2 \le 36\vz^2 d$ for all $j \in [t]$ and $\nm{\mui}^2 \le 0.2 d$ for all $i \in [k]$, we have $\nm{\hnt - \nt} \le \frac{t}{\exp(0.04d)} \sqrt{d}$. 
\end{lemma}
\begin{proof}[Proof of Lemma \ref{lem:ld:hnt_close_to_nt}]
    Upon comparing \eqref{eq:ld:nt_update} and \eqref{eq:ld:def:hnt}, by \eqref{eq:ld:score_rewrite} we have that for all $j \in [t]$,
    \begin{align*}
        \nm{\hnj - \nj} &= \nm{\hnjm - \frac{\dj}{2\vz^2} \hnjm - \njm - \frac{\dj}{2} \N^T \nabla_\lgv \log \P (\ljm)} \\ 
        &= \nm{ \left( 1 - \frac{\dj}{2\vz^2} \right)(\hnjm - \njm) + \frac{\dj}{2} \sum_{i\in[k]} \frac{\wi\probi(\ljm)}{\P(\ljm)} \left( \frac{1}{\vi^2} - \frac{1}{\vz^2} \right) \njm } \\ 
        &\le \left( 1 - \frac{\dj}{2\vz^2} \right) \nm{\hnjm-\njm} + \sum_{i \in [k]} \frac{\dj}{2} \frac{\wi\probi(\ljm)}{\P(\ljm)} \left( \frac{1}{\vi^2} - \frac{1}{\vz^2} \right) \nm{\njm} \\ 
        &\le \nm{\hnjm-\njm} + \sum_{i \in [k]} \frac{\dj}{2} \frac{\wi\probi(\ljm)}{\wz\probz(\ljm)} \left( \frac{1}{\vi^2} - \frac{1}{\vz^2} \right) 6\vz \sqrt{d} . 
    \end{align*}
    By Lemma \ref{lem:ld:ratio_probi_probz}, we have $\frac{\probi(\ljm)}{\probz(\ljm)} \le \exp(-0.06d)$ for all $i \in [k]$, hence we obtain a recursive bound
    \begin{equation*}
        \nm{\hnj - \nj} \le \nm{\hnjm-\njm} + \frac{1}{\exp(0.04d)} \sqrt{d}. 
    \end{equation*}
    Finally, by $\hnz = \nz$, we have 
    \begin{equation*}
        \nm{\hnt-\nt} = \sum_{j\in[t]} \left( \nm{\hnj - \nj} - \nm{\hnjm-\njm} \right) \le \frac{t}{\exp(0.04d)} \sqrt{d}.
    \end{equation*}
    Hence we obtain Lemma \ref{lem:ld:hnt_close_to_nt}. 
\end{proof}

We then proceed to analyze $\hnt$, The following lemma gives us the closed-form solution of $\hnt$. We slightly abuse the notations here, e.g., $\prod_{i=c_1}^{c_2} \left( 1-\frac{\di}{2\vz^2} \right) = 1$ and $\sum_{j=c_1}^{c_2} \dj = 0$ for $c_1 > c_2$. 
\begin{lemma} \label{lem:ld:hnt_solution}
    For all $t \ge 0$, $\hnt \sim \gN \left( \prod_{i=1}^t \left( 1-\frac{\di}{2\vz^2} \right) \nz ,\; \sum_{j=1}^t \prod_{i=j+1}^t \left( 1-\frac{\di}{2\vz^2} \right)^2 \dj \mI_{\n} \right)$, where the mean and covariance satisfy $\prod_{i=1}^t \left( 1-\frac{\di}{2\vz^2} \right)^2 + \frac{1}{\vz^2} \sum_{j=1}^t \prod_{i=j+1}^t \left( 1-\frac{\di}{2\vz^2} \right)^2 \dj \ge 1$. 
\end{lemma}
\begin{proof}[Proof of Lemma \ref{lem:ld:hnt_solution}]
    We prove the two properties by induction. When $t=0$, they are trivial. Suppose they hold for $t-1$, then for the distribution of $\hnt$, we have
    \begin{align*}
        \hnt &= \hntm - \frac{\dt}{2\vz^2} \hntm + \sqrt{\dt} \ent \\ 
        &\sim \gN \left( \left( 1-\frac{\dt}{2\vz^2} \right) \prod_{i=1}^{t-1} \left( 1-\frac{\di}{2\vz^2} \right) \nz ,\; \left( 1-\frac{\dt}{2\vz^2} \right)^2 \sum_{j=1}^{t-1} \prod_{i=j+1}^{t-1} \left( 1-\frac{\di}{2\vz^2} \right)^2 \dj \mI_{\n} + \dt \mI_{\n} \right) \\ 
        &= \gN \left( \prod_{i=1}^t \left( 1-\frac{\di}{2\vz^2} \right) \nz ,\; \sum_{j=1}^t \prod_{i=j+1}^t \left( 1-\frac{\di}{2\vz^2} \right)^2 \dj \mI_{\n} \right). 
    \end{align*}
    For the second property, 
    \begin{align*}
        &\quad\prod_{i=1}^t \left( 1-\frac{\di}{2\vz^2} \right)^2 + \frac{1}{\vz^2} \sum_{j=1}^t \prod_{i=j+1}^t \left( 1-\frac{\di}{2\vz^2} \right)^2 \dj \\ 
        &= \left( 1-\frac{\dt}{2\vz^2} \right)^2 \left( \prod_{i=1}^{t-1} \left( 1-\frac{\di}{2\vz^2} \right)^2 + \frac{1}{\vz^2} \sum_{j=1}^{t-1} \prod_{i=j+1}^{t-1} \left( 1-\frac{\di}{2\vz^2} \right)^2 \dj \right) + \frac{1}{\vz^2} \dt \\ 
        &\ge \left( 1-\frac{\dt}{2\vz^2} \right)^2 + \frac{1}{\vz^2} \dt = 1 + \frac{\dt^2}{4\vz^4} \ge 1. 
    \end{align*}
    Hence we finish the proof of Lemma \ref{lem:ld:hnt_solution}. 
\end{proof}

Armed with Lemma \ref{lem:ld:hnt_solution}, we are now ready to establish the lower bound on $\nm{\hnt}$. For simplicity, denote $\hkalpha := \prod_{i=1}^t \left( 1-\frac{\di}{2\vz^2} \right)^2$ and $\hkbeta := \frac{1}{\vz^2} \sum_{j=1}^t \prod_{i=j+1}^t \left( 1-\frac{\di}{2\vz^2} \right)^2 \dj$. By Lemma \ref{lem:ld:hnt_solution} we know $\hnt \sim \gN(\hkalpha \nz, \hkbeta \vz^2 \mI_{\n})$, so we can write $\hnt = \hkalpha \nz + \sqrt{\hkbeta} \vz \hkeps$, where $\hkeps \sim \gN(\0_\n, \mI_{\n})$.

\begin{lemma} \label{lem:ld:hnt_norm_lower_bound}
    Given that $\nm{\hnz}^2 \ge \frac{3\vz^2 + \vmax^2}{4} d$, we have $\nm{\hnt}^2 \ge \frac{5\vz^2+3\vmax^2}{8} d$ with probability at least $1-\exp \left( -d/300 \right)$. 
\end{lemma}
\begin{proof}[Proof of Lemma \ref{lem:ld:hnt_norm_lower_bound}]
    By $\hnt = \hkalpha \nz + \sqrt{\hkbeta} \vz \hkeps$ we have 
    \begin{equation*}
        \nm{\hnt}^2 = \hkalpha^2 \nm{\nz}^2 + \hkbeta \vz^2 \nm{\hkeps}^2 + 2\hkalpha \sqrt{\hkbeta} \vz \langle \nz , \hkeps \rangle
    \end{equation*}
    By Lemma \ref{lem:gaussian_tail_bound} we can bound 
    \begin{align*}
        \Pr \left( \nm{\hkeps}^2 \le \frac{3\vz^2+\vmax^2}{4\vz^2} d \right) &= \Pr \left( \nm{\hkeps}^2 \le d - 2 \sqrt{ d \cdot \left( \frac{\vz^2-\vmax^2}{8\vz^2} \right)^2 d} \right) \\ 
        &\le \Pr \left( \nm{\hkeps}^2 \le \n - 2 \sqrt{\n \left( \frac{\vz^2-\vmax^2}{8\vz^2} \right)^2 \frac{d}{2}} \right) \\ 
        &\le \exp \left( - \left( \frac{\vz^2-\vmax^2}{8\vz^2} \right)^2 \frac{d}{2} \right) \le \exp(-d/288),
    \end{align*}
    where the second last step follows from the assumption $d-\n = \r = o(d)$. Since $\hkeps \sim \gN(\0_\n, \mI_{\n})$, we know $\frac{\langle \nz , \hkeps \rangle}{\nm{\nz}} \sim \gN(0, 1)$. Therefore by Lemma \ref{lem:one_dim_gaussian_tail_bound}, 
    \begin{align*}
        \Pr \left( \frac{\langle \nz , \hkeps \rangle}{\nm{\nz}} \le -\frac{\vz^2-\vmax^2}{4\vz \sqrt{3\vz^2+\vmax^2}} \sqrt{d} \right) &\le \frac{4\vz \sqrt{3\vz^2+\vmax^2}}{\sqrt{2\pi}(\vz^2-\vmax^2) \sqrt{d}} \exp \left( -\frac{(\vz^2-\vmax^2)^2 d}{32\vz^2 (3\vz^2+\vmax^2)} \right)  \\ 
        &\le \exp(-0.004d).
    \end{align*}
    Conditioned on $\nm{\hnz}^2 \ge \frac{3\vz^2 + \vmax^2}{4} d$,  $\nm{\hkeps}^2 > \frac{3\vz^2+\vmax^2}{4\vz^2} d$ and $\frac{1}{\nm{\nz}} \langle \nz , \hkeps \rangle > -\frac{\vz^2-\vmax^2}{4\vz \sqrt{3\vz^2+\vmax^2}} \sqrt{d}$, since Lemma \ref{lem:ld:hnt_solution} gives $\hkalpha^2 + \hkbeta \ge 1$ we have
    \begin{align*}
        \nm{\hnt}^2 &= \hkalpha^2 \nm{\nz}^2 + \hkbeta \vz^2 \nm{\hkeps}^2 + 2\hkalpha \sqrt{\hkbeta} \vz \langle \nz , \hkeps \rangle \\ 
        &\ge \hkalpha^2 \nm{\nz}^2 + \hkbeta \vz^2 \nm{\hkeps}^2 - 2\hkalpha \sqrt{\hkbeta} \vz \nm{\nz} \frac{\vz^2-\vmax^2}{4\vz \sqrt{3\vz^2+\vmax^2}} \sqrt{d} \\ 
        &\ge \hkalpha^2 \nm{\nz}^2 + \hkbeta \vz^2 \nm{\hkeps}^2 - 2\hkalpha \sqrt{\hkbeta} \vz \nm{\nz} \nm{\hkeps} \cdot \frac{\vz^2-\vmax^2}{6\vz^2+2\vmax^2} \\ 
        &\ge \left( 1 - \frac{\vz^2-\vmax^2}{6\vz^2+2\vmax^2} \right) \left( \hkalpha^2 \nm{\nz}^2 + \hkbeta \vz^2 \nm{\hkeps}^2 \right) \\ 
        &\ge \frac{5\vz^2+3\vmax^2}{6\vz^2+2\vmax^2} \left( \hkalpha^2 + \hkbeta \right) \cdot \frac{3\vz^2 + \vmax^2}{4} d \\ 
        &\ge \frac{5\vz^2+3\vmax^2}{8} d.
    \end{align*}
    Hence by union bound, we complete the proof of Lemma \ref{lem:ld:hnt_norm_lower_bound}. 
\end{proof}

Upon having all the above lemmas, we are now ready to establish Theorem \ref{thm:ld:gaussian_mixture} by induction. Suppose the theorem holds for all $T$ values of $1, \cdots, T-1$. We consider the following 3 cases:
\begin{itemize}
    \item If there exists some $t \in [T]$ such that $\dt > \vz^2$, by Lemma \ref{lem:ld:large_dt} we know that with probability at least $1-\exp(-d/25)$, we have $\nm{\nt}^2 \ge \frac{3\vz^2+\vmax^2}{4}d$, thus the problem reduces to the two sub-arrays $\nz,\cdots,\ntm$ and $\nt,\cdots,\nT$, which can be solved by induction. 
    \item Suppose $\dt \le \vz^2$ for all $t \in [T]$. If there exists some $t \in [T]$ such that $\nm{\ntm}^2 > 36 \vz^2 d$, by Lemma \ref{lem:ld:large_ntm_norm} we know that with probability at least $1-\exp(-d/50)$, we have $\nm{\nt}^2 \ge \vz^2 d > \frac{3\vz^2+\vmax^2}{4}d$, thus the problem similarly reduces to the two sub-arrays $\nz,\cdots,\ntm$ and $\nt,\cdots,\nT$, which can be solved by induction. 
    \item Suppose $\dt \le \vz^2$ and $\nm{\ntm}^2 \le 36 \vz^2 d$ for all $t \in [T]$. Conditioned on $\nm{\ntm}^2 > \frac{\vz^2+\vmax^2}{2} d$ for all $t \in [T]$, by Lemma \ref{lem:ld:hnt_close_to_nt} we have that for $T = \exp(\gO(d))$, 
        \begin{equation*}
            \nm{\hnT - \nT} < \left( \sqrt{\frac{5\vz^2+3\vmax^2}{8}} - \sqrt{\frac{\vz^2+\vmax^2}{2}} \right) \sqrt{d}. 
        \end{equation*}
        By Lemma \ref{lem:ld:hnt_norm_lower_bound} we have that with probability at least $1-\exp(-d/300)$,
        \begin{equation*}
            \nm{\hnT}^2 \ge \frac{5\vz^2+3\vmax^2}{8} d.
        \end{equation*}
        Combining the two inequalities implies the desired bound
        \begin{equation*}
            \nm{\nT} \ge \nm{\hnT} - \nm{\hnT-\nT} > \sqrt{\frac{\vz^2+\vmax^2}{2} d}.
        \end{equation*} 
        Hence by induction we obtain $\nm{\nt}^2 > \frac{\vz^2+\vmax^2}{2} d$ for all $t \in [T]$ with probability at least 
        \begin{equation*}
            (1-(T-1)\exp(-d/300)) \cdot (1-\exp(-d/300)) \ge 1-T\exp(-d/300). 
        \end{equation*}
\end{itemize}
Therefore we complete the proof of Theorem \ref{thm:ld:gaussian_mixture}. 

\end{proof}

\subsection{Proof of Theorem \ref{thm:ald:gaussian_mixture}}
\label{app:prf:ald:gaussian-mixture}

\begin{proof}[Proof of Theorem \ref{thm:ald:gaussian_mixture}]
From \eqref{eq:ald:perturb} we note that the perturbed distribution is the convolution of the original distribution and a Gaussian random variable, i.e., for random variables $\rvz \sim p$ and $\rvt \sim \gN(\0_d, \mI_d)$, their sum $\rvz + \rvt \sim p_\nl$ follows the perturbed distribution with noise level $\nl$. Therefore, a perturbed (sub)Gaussian distribution remains (sub)Gaussian. We formalize this property in Proposition \ref{prop:perturb_property}.

\begin{proposition} \label{prop:perturb_property}
    Suppose the perturbed distribution of a $d$-dimensional probability distribution $p$ with noise level $\nl$ is $p_\nl$, then the mean of the perturbed distribution is the same as the original distribution, i.e., $\E_{\rvz \sim p_\nl} [\rvz] = \E_{\rvz \sim p} [\rvz]$. If $p = \gN(\rvmu, \Sig)$ is a Gaussian distribution, $p_\nl = \gN(\rvmu, \Sig + \nl^2 \mI_d)$ is also a Gaussian distribution. If $p$ is a sub-Gaussian distribution with parameter $\var^2$, $p_\nl$ is a sub-Gaussian distribution with parameter $(\var^2+\nl^2)$. 
\end{proposition}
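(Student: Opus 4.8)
The plan is to build everything on the convolution representation noted just before the statement: writing $\rvt \sim \gN(\0_d, \nl^2 \mI_d)$ for a Gaussian vector drawn independently of $\rvz \sim p$, the perturbed distribution $p_\nl$ defined by \eqref{eq:ald:perturb} is exactly the law of $\rvz + \rvt$. Granting this, all three assertions reduce to standard facts about sums of independent random vectors. For the mean, I would apply linearity of expectation together with independence: $\E_{\rvz \sim p_\nl}[\rvz] = \E[\rvz + \rvt] = \E_{\rvz \sim p}[\rvz] + \E[\rvt] = \rvmu + \0_d$, where the last term vanishes because $\rvt$ is centered; this step uses only that the mean of $p$ exists, which holds under either hypothesis considered below.

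For the Gaussian case $p = \gN(\rvmu, \Sig)$, I would invoke the fact that a sum of independent Gaussian vectors is again Gaussian, with mean and covariance equal to the respective sums. To keep the argument self-contained I would verify it through characteristic functions, computing
\[
\E\bigl[\exp(\mathrm{i}\,\rvalpha^T(\rvz+\rvt))\bigr] = \exp\!\Bigl(\mathrm{i}\,\rvalpha^T\rvmu - \tfrac12 \rvalpha^T\Sig\,\rvalpha\Bigr)\,\exp\!\Bigl(-\tfrac12 \nl^2\nm{\rvalpha}_2^2\Bigr) = \exp\!\Bigl(\mathrm{i}\,\rvalpha^T\rvmu - \tfrac12 \rvalpha^T(\Sig+\nl^2\mI_d)\rvalpha\Bigr),
\]
and then concluding by uniqueness of characteristic functions that $p_\nl = \gN(\rvmu, \Sig + \nl^2\mI_d)$.

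For the sub-Gaussian case, I would factor the moment generating function across the independent sum. Using the mean $\rvmu$ established above and the independence of $\rvz$ and $\rvt$,
\[
\E_{\rvz \sim p_\nl}\bigl[\exp(\rvalpha^T(\rvz - \rvmu))\bigr] = \E_{\rvz \sim p}\bigl[\exp(\rvalpha^T(\rvz - \rvmu))\bigr]\cdot \E_{\rvt}\bigl[\exp(\rvalpha^T\rvt)\bigr] \le \exp\!\Bigl(\tfrac{\var^2\nm{\rvalpha}_2^2}{2}\Bigr)\exp\!\Bigl(\tfrac{\nl^2\nm{\rvalpha}_2^2}{2}\Bigr),
\]
where the first factor is bounded using the defining inequality \eqref{eq:def:subgaussian} for $p$ and the second is the exact MGF of $\gN(\0_d,\nl^2\mI_d)$. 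Multiplying the two exponentials yields $\exp\bigl((\var^2+\nl^2)\nm{\rvalpha}_2^2/2\bigr)$, which is precisely \eqref{eq:def:subgaussian} with parameter $\var^2 + \nl^2$, proving the claim.

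I do not expect a genuine obstacle here, since the proposition is essentially a bookkeeping lemma; the only points deserving care are (i) justifying that the integral defining $p_\nl$ in \eqref{eq:ald:perturb} really is the distribution of the independent sum $\rvz + \rvt$ (a change-of-variables and Fubini argument), and (ii) in the Gaussian sub-case, routing the derivation through characteristic functions rather than MGFs so that positive-semidefiniteness of $\Sig$ causes no integrability concern. Everything else is a one-line computation.
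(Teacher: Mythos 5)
Your proof is correct and follows essentially the same route as the paper's: both rest on recognizing $p_\nl$ as the law of the independent sum $\rvz + \rvt$ with $\rvt \sim \gN(\0_d, \nl^2\mI_d)$ and then deducing the three claims from that representation. The paper simply asserts the Gaussian and sub-Gaussian conclusions after stating the sum representation, whereas you fill in the one-line verifications via characteristic functions and MGF factorization, which is a helpful but not substantively different elaboration. (Minor note: the paper's proof contains a typo writing $\rvy \sim \gN(\0_d, \mI_d)$ where it means $\gN(\0_d, \nl^2\mI_d)$; your version has it right.)
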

\begin{proof}[Proof of Proposition \ref{prop:perturb_property}] \label{prf:prop:perturb_property}
    By the definition in \eqref{eq:ald:perturb}, we have
    \begin{equation*}
        p_\nl (\rvz) = \int p(\rvt) \gN(\rvz \mid \rvt, \nl^2 \mI_d) \dif \rvt = \int p(\rvt) \gN(\rvz - \rvt \mid \0_d, \nl^2 \mI_d) \dif \rvt. 
    \end{equation*}
    For random variables $\rvt \sim p$ and $\rvy \sim \gN(\0_d, \mI_d)$, their sum $\rvz = \rvt + \rvy \sim p_\nl$ follows the perturbed distribution with noise level $\nl$. Therefore, 
    \begin{equation*}
        \E_{\rvz \sim p_\nl} [\rvz] = \E_{(\rvt+\rvy) \sim p_\nl} [\rvt + \rvy] = \E_{\rvt \sim p} [\rvt] + \E_{\rvy \sim \gN(\0_d, \mI_d)} [\rvy] = \E_{\rvt \sim p} [\rvt]. 
    \end{equation*}
    If $\rvt \sim p = \gN(\rvmu, \Sig)$ follows a Gaussian distribution, we have $\rvz = \rvt + \rvy \sim p_\nl = \gN(\rvmu, \Sig + \nl^2 \mI_d)$. If $p$ is a sub-Gaussian distribution with parameter $\var^2$, we have $\rvz = \rvt + \rvy \sim p_\nl$ is a sub-Gaussian distribution with parameter $(\var^2+\nl^2)$. Hence we obtain Proposition \ref{prop:perturb_property}. 
\end{proof}

To establish Theorem \ref{thm:ald:gaussian_mixture}, we first note from Proposition \ref{prop:perturb_property} that perturbing a Gaussian distribution $\gN(\rvmu, \var^2 \mI_d)$ with noise level $\nl$ results in a Gaussian distribution $\gN(\rvmu, (\var^2+\nl^2) \mI_d)$. Therefore, for a Gaussian mixture $\P = \sum_{i=0}^k \wi \probi = \sum_{i=0}^k \wi \gN(\mui, \vi^2 \mI_d)$, the perturbed distribution of noise level $\nl$ is 
\begin{equation*}
    \P_\nl = \sum_{i=0}^k \wi \gN(\mui, (\vi^2 + \nl^2) \mI_d). 
\end{equation*}
Similar to the proof of Theorem \ref{thm:ld:gaussian_mixture}, we decompose 
\begin{equation*}
    \lt = \R \rt + \N \nt \text{, and } \epst = \R \ert + \N \ent, 
\end{equation*}
where $\R \in \sR^{d \times \r}$ an orthonormal basis of the vector space $\set{\mui}_{i \in [k]}$ and $\N \in \sR^{d \times \n}$ an orthonormal basis of the null space of $\set{\mui}_{i \in [k]}$. Now, we prove Theorem \ref{thm:ald:gaussian_mixture} by applying the techniques developed in Appendix \ref{app:prf:ld_gaussian_mixture} via substituting $\var^2$ with $\var^2+\nlt^2$ at time step $t$.

By Definition \ref{def:dist}, since $\nt$ is the projection onto the null space of $\set{\mui}_{i \in [k]}$, we have 
\begin{equation*}
    \duk{\lt} = \min_{\lambda_1, \cdots, \lambda_k} \nm{\lt - \sum_{i=1}^k \lambda_i \mui} = \nm{\nt}. 
\end{equation*}
We prove Theorem \ref{thm:ald:gaussian_mixture} by induction. Suppose the theorem holds for all $T$ values of $1, \cdots, T-1$. We consider the following 3 cases:
\begin{itemize}
    \item If there exists some $t \in [T]$ such that $\dt > \vz^2 + \nlt^2$, by Lemma \ref{lem:ld:large_dt} we know that with probability at least $1-\exp \left(- \left( \frac{\vz^2-\vmax^2}{8(\vz^2+\nlt^2)} \right)^2 \frac{d}{2} \right) \ge 1 - \exp (-d/32)$, we have $\nm{\nt}^2 \ge \frac{3(\vz^2+\nlt^2)+(\vmax^2+\nlt^2)}{4}d = \frac{3\vz^2+\vmax^2+4\nlt^2}{4}d$, thus the problem reduces to the two sub-arrays $\nz,\cdots,\ntm$ and $\nt,\cdots,\nT$, which can be solved by induction. 
    \item Suppose $\dt \le \vz^2 + \nlt^2$ for all $t \in [T]$. If there exists some $t \in [T]$ such that $\nm{\ntm}^2 > 36 (\vz^2 + \nltm^2) d \ge 36 (\vz^2 + \nlt^2) d$, by Lemma \ref{lem:ld:large_ntm_norm} we know that with probability at least 
    \begin{align*}
        &1-\exp \left( - \left( \log \frac{\vi^2+\nlt^2}{\vz^2+\nlt^2} - \frac{\vi^2+\nlt^2}{2(\vz^2 + \nlt^2) + \frac{\vz^2 + \nlt^2}{2(\vi^2 + \nlt^2)}} \right) \frac{d}{4} \right) - \frac{4}{\sqrt{2\pi d}} \exp \left( -\frac{d}{32} \right) \\ &\ge 1 - \exp(-0.01d),
    \end{align*} 
    we have $\nm{\nt}^2 \ge (\vz^2+\nlt^2) d > \frac{3\vz^2+\vmax^2 + 4\nlt^2}{4}d$, thus the problem similarly reduces to the two sub-arrays $\nz,\cdots,\ntm$ and $\nt,\cdots,\nT$, which can be solved by induction. 
    \item Suppose $\dt \le \vz^2 + \nlt^2$ and $\nm{\ntm}^2 \le 36 (\vz^2 + \nltm^2) d$ for all $t \in [T]$. Consider a surrogate sequence $\hnt$ such that $\hnz = \nz$ and for all $t \ge 1$, 
    \begin{equation*} 
        \hnt = \hntm - \frac{\dt}{2\vz^2 + 2\nlt^2} \hntm + \sqrt{\dt} \ent. 
    \end{equation*}
    Conditioned on $\nm{\ntm}^2 > \frac{\vz^2+\vmax^2 + 2\nltm^2}{2} d$ for all $t \in [T]$, by Lemma \ref{lem:ld:hnt_close_to_nt} we have that for $T \le \exp(d/150)$, 
    \begin{equation*}
        \nm{\hnT - \nT} < \left( \sqrt{\frac{5\vz^2+3\vmax^2 + 8\nlT^2}{8}} - \sqrt{\frac{\vz^2+\vmax^2 + 2\nlT^2}{2}} \right) \sqrt{d}. 
    \end{equation*}
    By Lemma \ref{lem:ld:hnt_norm_lower_bound} we have
    \begin{equation*}
        \nm{\hnT}^2 \ge \frac{5\vz^2+3\vmax^2 + 8\nlT^2}{8} d
    \end{equation*}
    with probability at least 
    \begin{align*}
        &1- \exp \left( -\left( \frac{\vz^2-\vmax^2}{8\vz^2 + 8\nlz^2} \right)^2 \frac{d}{2} \right) - \frac{4\sqrt{7}}{\sqrt{\pi d}} \exp \left( -\frac{(\vz^2-\vmax^2)^2 d}{32(\vz^2+\nlz^2)(3\vz^2+\vmax^2+4\nlz^2)} \right) \\ 
        &\ge 1 - \exp \left( -\frac{d}{512} \right) - \frac{4\sqrt{7}}{\sqrt{\pi d}} \exp \left( -\frac{d}{448} \right) \ge 1 - \exp \left( -\frac{d}{1500} \right).
    \end{align*}
    Combining the two inequalities implies the desired bound
    \begin{equation*}
        \nm{\nT} \ge \nm{\hnT} - \nm{\hnT-\nT} > \sqrt{\frac{\vz^2+\vmax^2 + 2\nlT^2}{2} d} \ge \sqrt{\frac{\vz^2+\vmax^2}{2} d}.
    \end{equation*} 
    Hence by induction we obtain $\nm{\nt}^2 > \frac{\vz^2+\vmax^2}{2} d$ for all $t \in \set{0} \cup [T]$ with probability at least 
    \begin{equation*}
        (1-(T-1)\exp(-d/1500)) \cdot (1-\exp(-d/1500)) \ge 1-T\exp(-d/1500). 
    \end{equation*}
\end{itemize}
Therefore we complete the proof of Theorem \ref{thm:ald:gaussian_mixture}. 
\end{proof}

\section{Iteration Complexity of Langevin Dynamics in sub-Gaussian Mixtures}
\label{app:prf:sub-Gaussian}

A probability distribution $p(\rvz)$ of dimension $d$ is defined as a sub-Gaussian distribution with parameter $\var^2$ if, given the mean vector $\rvmu := \E_{\rvz \sim p} [\rvz]$, the moment generating function (MGF) of $p$ satisfies the following inequality for every vector $\rvalpha \in \sR^d$:
\begin{equation} \label{eq:def:subgaussian}
    \E_{\rvz \sim p} \left[ \exp \left( \rvalpha^T (\rvz - \rvmu \right) \right] \le \exp \Bigl( \frac{\var^2 \nm{\rvalpha}_2^2}{2} \Bigr). 
\end{equation}

\begin{assumption} \label{asmpt:ld:subgaussian_mixture}
    Consider a data distribution $\P := \sum_{i=0}^k \wi \probi$ as a mixture of sub-Gaussian distributions, where $1 \le k = o(d)$ and $\wi > 0$ is a positive constant such that $\sum_{i=0}^k \wi = 1$. Suppose that $\probz = \gN(\muz, \vz^2 \mI_d)$ is Gaussian and for all $i \in [k]$, $\probi$ satisfies 
    \begin{enumerate}[label=\roman*., leftmargin=*, noitemsep]
        \vspace{-2.5mm}
        \item $\probi$ is a sub-Gaussian distribution of mean $\mui$ with parameter $\vi^2$, \label{asmpt:ld:pi:subgaussian}
        \vspace{0.9mm}
        \item $\probi$ is differentiable and $\nabla \probi(\mui) = \0_d$, \label{asmpt:ld:pi:differentiable}
        \vspace{0.9mm}
        \item the score function of $\probi$ is $\lipi$-Lipschitz such that $\lipi \le \frac{\clip}{\vi^2}$ for some constant $\clip > 0$, \label{asmpt:ld:pi:score_lipschitz}
        \vspace{0.9mm}
        \item $\vz^2 > \max \set{ 1, \frac{4(\clip^2+\cv\clip)}{\cv (1-\cv)} } \frac{\vmax^2}{1-\cv}$ for constant $\cv \in (0,1)$, where $\vmax := \max_{i \in [k]} \vi$, \label{asmpt:ld:var} 
        \vspace{0.9mm}
        \item $\nm{\mui - \muz}^2 \le \frac{(1-\cv)\vz^2 - \vi^2}{2(1-\cv)} \left( \log \frac{\cv \vi^2}{(\clip^2 + \cv \clip)\vz^2} - \frac{\vi^2}{2(1-\cv) \vz^2} + \frac{(1-\cv)\vz^2}{2\vi^2} \right) d$. \label{asmpt:ld:mui} 
    \end{enumerate}
\end{assumption}

The feasibility of Assumption \ref{asmpt:ld:subgaussian_mixture}.\ref{asmpt:ld:mui} is validated by Lemma \ref{lem:ld:subgaussian:validate_asmpt_ld_mui} in Appendix \ref{app:prf:ld_subgaussian_mixture}. With Assumption \ref{asmpt:ld:subgaussian_mixture}, we show the hardness of Langevin dynamics under sub-Gaussian distributions in Theorem \ref{thm:ld:subgaussian_mixture} and defer the proof to Appendix \ref{app:prf:ld_subgaussian_mixture}. 

\vspace{2mm}
\begin{theorem} \label{thm:ld:subgaussian_mixture}
    Consider a data distribution $\P$ satisfying Assumption \ref{asmpt:ld:subgaussian_mixture}. We initialize the sample $\lz$ such that $\duk{\lz}^2 \ge \left( \frac{3\vz^2}{4} + \frac{\vmax^2}{4(1-\cv)} \right) d$ and apply Langevin dynamics for $T$ steps, then 
    $$ \Pr \left( \duk{\lT}^2 \ge \left( \frac{\vz^2}{2} + \frac{\vmax^2}{2(1-\cv)} \right) d \right) \ge 1-T \cdot \exp \left(-\Omega(d) \right). $$  
\end{theorem}

Then, we slightly modify Assumption \ref{asmpt:ld:subgaussian_mixture} and extend our results to annealed Langevin dynamics (with bounded noise levels) under sub-Gaussian mixtures in Theorem \ref{thm:ald:subgaussian_mixture}. The proof of Theorem \ref{thm:ald:subgaussian_mixture} is deferred to Appendix \ref{app:prf:ald_subgaussian_mixture}.

\begin{assumption} \label{asmpt:ald:subgaussian_mixture}
    Consider a data distribution $\P := \sum_{i=0}^k \wi \probi$ as a mixture of sub-Gaussian distributions, where $1 \le k = o(d)$ and $\wi > 0$ is a positive constant such that $\sum_{i=0}^k \wi = 1$. Suppose that $\probz = \gN(\muz, \vz^2 \mI_d)$ is Gaussian and for all $i \in [k]$, $\probi$ satisfies 
    \begin{enumerate}[label=\roman*., leftmargin=*, noitemsep]
        \vspace{-2.5mm}
        \item $\probi$ is a sub-Gaussian distribution of mean $\mui$ with parameter $\vi^2$, \label{asmpt:ald:pi:subgaussian}
        \vspace{0.9mm}
        \item $\probi$ is differentiable and $\nabla \probi_{\nlt}(\mui) = \0_d$ for all $t \in \set{0}\cup[T]$, \label{asmpt:ald:pi:differentiable}
        \vspace{0.9mm}
        \item for all $t \in \set{0}\cup[T]$, the score function of $\probi_{\nlt}$ is $\lipit$-Lipschitz such that $\lipit \le \frac{\clip}{\vi^2+\nlt^2}$ for some constant $\clip > 0$, \label{asmpt:ald:pi:score_lipschitz}
        \vspace{0.9mm}
        \item $\vz^2 > \max \set{ 1, \frac{4(\clip^2+\cv\clip)}{\cv (1-\cv)} } \frac{\vmax^2 + \cnl^2}{1-\cv} - \cnl^2$ for constant $\cv \in (0,1)$, where $\vmax := \max_{i \in [k]} \vi$, \label{asmpt:ald:var} 
        \vspace{0.9mm}
        \item $\nm{\mui - \muz}^2 \le \frac{(1-\cv)\vz^2 - \vi^2 -\cv\cnl^2}{2(1-\cv)} \left( \log \frac{\cv (\vi^2 + \cnl^2)}{(\clip^2 + \cv \clip)(\vz^2 + \cnl^2)} - \frac{(\vi^2 + \cnl^2)}{2(1-\cv) (\vz^2 + \cnl^2)} + \frac{(1-\cv)(\vz^2 + \cnl^2)}{2(\vi^2 + \cnl^2)} \right) d$.\label{asmpt:ald:mui} 
    \end{enumerate}
\end{assumption}
\vspace{2mm}

\begin{theorem} \label{thm:ald:subgaussian_mixture}
    Consider a data distribution $\P$ satisfying Assumption \ref{asmpt:ald:subgaussian_mixture}. We initialize the sample $\lz$ such that $\duk{\lz}^2 \ge \left( \frac{3\vz^2+3\cnl^2}{4} + \frac{\vmax^2+\cnl^2}{4(1-\cv)} \right) d$ and apply annealed Langevin dynamics for $T$ steps with noise levels $\cnl \ge \nlz \ge \cdots \ge \nlT \ge 0$, then 
    $$ \Pr \left( \duk{\lT}^2 \ge \left( \frac{\vz^2}{2} + \frac{\vmax^2}{2(1-\cv)} \right) d \right) \ge 1-T \cdot \exp \left(-\Omega(d) \right). $$ 
\end{theorem}

We noticed that a central requirement of Theorems \ref{thm:ld:gaussian_mixture} and \ref{thm:ld:subgaussian_mixture} is that the initial sample $\lz$ must be far from the low-variance modes. In the following Theorem \ref{thm:ng:subgaussian}, we relax this constraint by considering low-variance modes $\P^{(1)}, \P^{(2)}, \cdots, \P^{(k)}$ with random mean vectors, as characterized by Assumption \ref{asmpt:ng:subgaussian}. The proof of Theorem \ref{thm:ng:subgaussian} is deferred to Appendix \ref{app:prf:ng-subgaussian}

\begin{assumption} \label{asmpt:ng:subgaussian}
    Consider a data distribution $\P := \sum_{i=0}^k \wi \probi$, where $k \ge 1$ and $\wi > 0$ are positive constants such that $\sum_{i=0}^k \wi = 1$. Suppose the density of mode 0 is lower bounded by $\probz(\lgv) \ge (2\pi \vz^2)^{-d/2} \exp \left( -\frac{(1+\cz)\nm{\lgv}^2}{2\vz^2} \right)$ for some constant $\vz$ and $\cz \ge 0$. In addition, assume $\log \probz(\lgv)$ is concave and $\nm{\nabla_\lgv \log \probz(\0_d)} = \exp(o(d))$, and its score function $\nabla_\lgv \log \probz(\lgv)$ is $\lipz$-Lipschitz. For all $i \in [k]$, suppose $\probi$ satisfies 
    \begin{enumerate}[label=\roman*., leftmargin=*, noitemsep]
        \item the mean $\mui$ of $\probi$ is i.i.d. uniform over an $\ell_2$ ball $\S$ centered at $\0_d$ of radius $\r$, \label{asmpt:ng:pi:random}
        \item $\probi$ is a sub-Gaussian distribution with parameter $\vi^2$. \label{asmpt:ng:pi:subgaussian}
        \vspace{0.9mm}
        \item $\probi$ is differentiable and $\nabla \probi(\mui) = \0_d$, \label{asmpt:ng:pi:differentiable}
        \vspace{0.9mm}
        \item the score function of $\probi$ is $\lipi$-Lipschitz such that $\lipi \le \frac{\clip}{\vi^2}$ for some constant $\clip > 0$, \label{asmpt:ng:pi:score_lipschitz}
        \vspace{0.9mm}
        \item $\vi$ satisfies $\left( \frac{1-\cv}{2\vi^2} - \frac{1+\cz}{\vz^2} \right) \left( \frac{\vz^2}{2} + \frac{\vi^2}{2(1-\cv)} \right) - \frac{1+\cz}{\vz^2} r^2 - \frac{1}{2} \log \frac{\cv\vi^2}{(\clip^2+\cv\clip)\vz^2} > 0$ for some constant $\cv \in (0,1)$
    \end{enumerate}
\end{assumption}

\begin{theorem} \label{thm:ng:subgaussian}
    Consider a data distribution $\P$ satisfying Assumption \ref{asmpt:ng:subgaussian}. For any initial sample $\lz$, we follow Langevin dynamics for $T$ steps with step size $\dt \le 4/\lipz$, then 
    $$ \Pr \left( \nm{\lT}^2 \ge \left( \frac{\vz^2}{2} + \frac{\vmax^2}{2(1-\cv)} \right) d \right) \ge 1-T \cdot \exp \left(-\Omega(d) \right). $$ 
\end{theorem}

\subsection{Proof of Theorem \ref{thm:ld:subgaussian_mixture}} \label{app:prf:ld_subgaussian_mixture}
\begin{proof}[Proof of Theorem \ref{thm:ld:subgaussian_mixture}]
The proof framework is similar to the proof of Theorem \ref{thm:ld:gaussian_mixture}. To begin with, we validate Assumption \ref{asmpt:ld:subgaussian_mixture}.\ref{asmpt:ld:mui} in the following lemma:
\begin{lemma} \label{lem:ld:subgaussian:validate_asmpt_ld_mui}
    For constants $\vz, \vi, \cv, \clip$ satisfying Assumptions \ref{asmpt:ld:subgaussian_mixture}.\ref{asmpt:ld:pi:score_lipschitz} and \ref{asmpt:ld:subgaussian_mixture}.\ref{asmpt:ld:var}, we have $\frac{(1-\cv)\vz^2 - \vi^2}{2(1-\cv)} > 0$ and $\log \frac{\cv \vi^2}{(\clip^2 + \cv \clip)\vz^2} - \frac{\vi^2}{2(1-\cv) \vz^2} + \frac{(1-\cv)\vz^2}{2\vi^2} > 0$ are both positive constants. 
\end{lemma}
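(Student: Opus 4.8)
The plan is to treat Lemma~\ref{lem:ld:subgaussian:validate_asmpt_ld_mui} as a one-variable calculus check. I would introduce the scalars $u := \vi^2/\big((1-\cv)\vz^2\big)$ and $c := \cv(1-\cv)/(\clip^2+\cv\clip)$, both positive since $\cv\in(0,1)$ and, by Assumption~\ref{asmpt:ld:subgaussian_mixture}.\ref{asmpt:ld:pi:score_lipschitz}, $\clip>0$. The only substantive input is Assumption~\ref{asmpt:ld:subgaussian_mixture}.\ref{asmpt:ld:var}: reading off its two branches and using $\vi\le\vmax$ gives precisely $u \le \vmax^2/\big((1-\cv)\vz^2\big) < 1$ and $u < c/4$. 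Since both expressions in the lemma are built from the fixed constants $\vz,\vi,\cv,\clip$, they are automatically constants, so only positivity needs to be argued.

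For the first quantity, write $(1-\cv)\vz^2 - \vi^2 = (1-\cv)\vz^2\,(1-u)$, which is positive because $u<1$, while $2(1-\cv)>0$; hence $\frac{(1-\cv)\vz^2-\vi^2}{2(1-\cv)}>0$.

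For the second quantity I would substitute $u$ and $c$ to obtain the identity
\[
\log\frac{\cv\vi^2}{(\clip^2+\cv\clip)\vz^2} - \frac{\vi^2}{2(1-\cv)\vz^2} + \frac{(1-\cv)\vz^2}{2\vi^2} \;=\; \log c + f(u), \qquad f(z):=\log z-\tfrac{z}{2}+\tfrac{1}{2z},
\]
where $f$ is the same auxiliary function used in the proof of Lemma~\ref{lem:ld:ratio_probi_probz}, with $f(1)=0$ and $f'(z)=-\tfrac12(1/z-1)^2\le 0$, so $f$ is nonincreasing. Using $u<\min\{1,c/4\}$ I split into two cases. If $c\ge 4$, then $f(u)>f(1)=0$ and $\log c\ge\log 4>0$, so the sum is positive. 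If $c<4$, then $c/4<1$ and $f(u)>f(c/4)=\log(c/4)-c/8+2/c$, whence the expression exceeds $2\log c-\log 4-c/8+2/c$; bounding $-c/8\ge-\tfrac12$ on $(0,4]$ and using that $c\mapsto 2\log c+2/c$ is minimized at $c=1$ with value $2$, this is at least $2-\log 4-\tfrac12=\tfrac32-\log 4>0$ since $e^{3/2}>4$. Positivity of both quantities follows.

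I do not expect a genuine obstacle, since this is a feasibility check, but the one point needing care is that $\log c$ can be negative (when $c<1$), so it is not enough to note $f(u)\ge 0$; one must exploit that the bound $u<c/4$ forces $f(u)$ to grow like $\log(1/c)$ precisely when $c$ is small. The two-case split above, together with the crude estimate $-c/8\ge-\tfrac12$ on $(0,4]$ and the elementary minimization of $2\log c+2/c$, is what makes this transparent.
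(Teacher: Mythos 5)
Your proof is correct, and it is essentially the same argument as the paper's, rewritten after normalizing to the dimensionless quantities $u = \vi^2/\big((1-\cv)\vz^2\big)$ and $c = \cv(1-\cv)/(\clip^2+\cv\clip)$. The paper instead views the second expression directly as a function of $z=\vz^2$, shows it is increasing (its derivative is the perfect square $\tfrac{\vi^2}{2(1-\cv)}\bigl(\tfrac{1-\cv}{\vi^2}-\tfrac{1}{z}\bigr)^2\ge 0$), and then evaluates it at the threshold $z_0 = \max\{1,4/c\}\cdot\vi^2/(1-\cv)$ dictated by Assumption \ref{asmpt:ld:subgaussian_mixture}.\ref{asmpt:ld:var}; your substitution converts that monotonicity-in-$z$ into the already-established monotonicity of the auxiliary $f(z)=\log z - z/2 + 1/(2z)$ from Lemma \ref{lem:ld:ratio_probi_probz}, so you avoid introducing and differentiating a second ad hoc function. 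The case split is identical ($c\ge 4$ versus $c<4$, equivalently whether $4(\clip^2+\cv\clip)/(\cv(1-\cv))\le 1$ or $>1$), and both routes land on the same final bound $\tfrac32-\log 4>0$. The only thing to keep precise is the strictness: $f'(z)=-\tfrac12(1/z-1)^2$ vanishes only at $z=1$, so $f$ is strictly decreasing on $(0,1)$, which is what licenses $f(u)>f(\min\{1,c/4\})$ when $u<\min\{1,c/4\}$ — you implicitly use this and it is fine. Your reparameterization is a modest but genuine gain in clarity, especially in making visible that the second term is $\log c + f(u)$ and thus that the content of the assumption is precisely $u<\min\{1,c/4\}$.
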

\begin{proof}[Proof of Lemma \ref{lem:ld:subgaussian:validate_asmpt_ld_mui}]
    From Assumption \ref{asmpt:ld:subgaussian_mixture}.\ref{asmpt:ld:var} that $\vz^2 > \frac{\vmax^2}{1-\cv} \ge \frac{\vi^2}{1-\cv}$, we easily obtain $\frac{(1-\cv)\vz^2 - \vi^2}{2(1-\cv)} > 0$ is a positive constant. For the second property, let $f(z) := \log \frac{\cv \vi^2}{(\clip^2 + \cv \clip)z} - \frac{\vi^2}{2(1-\cv) z} + \frac{(1-\cv)z}{2\vi^2}$. For any $z > \frac{\vi^2}{1-\cv}$, the derivative of $f(z)$ satisfies 
    \begin{equation*}
        \frac{\dif}{\dif z} f(z) = -\frac{1}{z} + \frac{\vi^2}{2(1-\cv) z^2} + \frac{1-\cv}{2\vi^2} = \frac{\vi^2}{2(1-\cv)} \left( \frac{1-\cv}{\vi^2} - \frac{1}{z} \right)^2 > 0. 
    \end{equation*}
    Therefore, when $\frac{4(\clip^2+\cv\clip)}{\cv(1-\cv)} \le 1$, we have 
    \begin{equation*}
        f(\vz^2) > f\left( \frac{\vi^2}{1-\cv} \right) = \log \frac{\cv(1-\cv)}{\clip^2+\cv\clip} \ge \log 4 > 0. 
    \end{equation*}
    When $\frac{4(\clip^2+\cv\clip)}{\cv(1-\cv)} > 1$, we have 
    \begin{align*}
        f(\vz^2) &> f \left( \frac{4(\clip^2 + \cv\clip)}{\cv(1-\cv)} \frac{\vi^2}{1-\cv} \right) = 2 \log \frac{\cv(1-\cv)}{2(\clip^2+\cv\clip)} - \frac{\cv(1-\cv)}{8(\clip^2 + \cv\clip)} + \frac{2(\clip^2 + \cv\clip)}{\cv(1-\cv)} \\ 
        &\ge 2 -2\log2 - \frac{2(\clip^2+\cv\clip)}{\cv(1-\cv)} - \frac{\cv(1-\cv)}{8(\clip^2 + \cv\clip)} + \frac{2(\clip^2 + \cv\clip)}{\cv(1-\cv)} > 2 - 2\log 2 - \frac{1}{2} > 0.
    \end{align*}
    Thus we obtain Lemma \ref{lem:ld:subgaussian:validate_asmpt_ld_mui}. 
\end{proof}

Without loss of generality, we assume $\muz = \0_d$. Similar to the proof of Theorem \ref{thm:ld:gaussian_mixture}, we decompose 
\begin{equation*}
    \lt = \R \rt + \N \nt \text{, and } \epst = \R \ert + \N \ent, 
\end{equation*}
where $\R \in \sR^{d \times \r}$ an orthonormal basis of the vector space $\set{\mui}_{i \in [k]}$ and $\N \in \sR^{d \times \n}$ an orthonormal basis of the null space of $\set{\mui}_{i \in [k]}$. Then, conditioned on $\nm{\nz}^2 \ge \left( \frac{3 \vz^2}{4} + \frac{\vmax^2}{4(1-\cv)} \right) d$, we prove that $\nm{\nt}$ remains large with high probability.

Firstly, by Lemma \ref{lem:ld:large_dt}, if $\dt > \vz^2$, since $\vz^2 > \frac{\vmax^2}{1-\cv}$, we similarly have that $\nm{\nt}^2 \ge \left( \frac{3 \vz^2}{4} + \frac{\vmax^2}{4(1-\cv)} \right)d$ with probability at least $1-\exp(-\Omega(d))$ regardless of the previous state $\ltm$. We then consider the case when $\dt \le \vz^2$. Intuitively, we aim to prove that the score function is close to $-\frac{\lgv}{\vz^2}$ when $\nm{\nv}^2 \ge \left( \frac{\vz^2}{2} + \frac{\vmax^2}{2(1-\cv)} \right) d$. Towards this goal, we first show that $\probz(\lgv)$ is exponentially larger than $\probi(\lgv)$ for all $i \in [k]$ in the following lemma:

\begin{lemma} \label{lem:ld:subgaussian_exp_decay}
    Suppose $\P$ satisfies Assumption \ref{asmpt:ld:subgaussian_mixture}. Then for any $\nm{\nv}^2 \ge \left( \frac{\vz^2}{2} + \frac{\vmax^2}{2(1-\cv)} \right) d$, we have $\frac{\probi(\lgv)}{\probz(\lgv)} \le \exp(-\Omega(d))$ and $\frac{\nm{\nabla_\lgv \probi(\lgv)}}{\P(\lgv)} \le \exp(-\Omega(d))$ for all $i \in [k]$. 
\end{lemma}
\begin{proof}[Proof of Lemma \ref{lem:ld:subgaussian_exp_decay}]
    We first give an upper bound on the sub-Gaussian probability density. For any vector $\rvv \in \sR^d$, by considering some vector $\rvm \in \sR^d$, from Markov's inequality and the definition in \eqref{eq:def:subgaussian} we can bound 
    \begin{align*}
        \Pr_{\rvz \sim \probi} \left( \rvm^T (\rvz-\mui) \ge \rvm^T (\rvv-\mui) \right) &\le \frac{\E_{\rvz \sim \probi} \left[ \exp \left( \rvm^T (\rvz-\mui) \right) \right]}{\exp \left( \rvm^T (\rvv-\mui) \right)} \\
        &\le \exp \left( \frac{\vi^2 \nm{\rvm}^2}{2} - \rvm^T (\rvv-\mui) \right). 
    \end{align*}
    Upon optimizing the last term at $\rvm = \frac{\rvv-\mui}{\vi^2}$, we obtain 
    \begin{equation} \label{eq:ld:subgaussian_exp_decay:step1}
        \Pr_{\rvz \sim \probi} \left( (\rvv-\mui)^T (\rvv-\rvz) \le 0 \right) \le \exp \left( -\frac{\nm{\rvv-\mui}^2}{2 \vi^2} \right). 
    \end{equation}
    Denote $\Ball := \set{\rvz: (\rvv-\mui)^T (\rvv-\rvz) \le 0}$. To bound $\Pr_{\rvz \sim \probi} (\rvz \in \Ball)$, we first note that 
    \begin{align*}
        &\log \probi(\rvv) - \log \probi(\rvz) \\
        &= \int_0^1 \langle \rvv-\rvz, \nabla \log \probi(\rvv + \lambda(\rvz-\rvv)) \rangle \dif \lambda \\ 
        &= \langle \rvv-\rvz, \nabla \log \probi(\rvv) \rangle + \int_0^1 \langle \rvv-\rvz, \nabla \log \probi(\rvv + \lambda(\rvz-\rvv)) - \nabla \log \probi(\rvv) \rangle \dif \lambda \\ 
        &\le \nm{\rvv-\rvz} \nm{\nabla \log \probi(\rvv)} + \int_0^1 \nm{\rvv-\rvz} \nm{\nabla \log \probi(\rvv + \lambda(\rvz-\rvv)) - \nabla \log \probi(\rvv)} \dif \lambda \\ 
        &\le \nm{\rvv-\rvz} \cdot \lipi \nm{\rvv-\mui} + \int_0^1 \nm{\rvv-\rvz} \cdot \lipi \nm{\lambda (\rvz-\rvv)} \dif \lambda \\ 
        &\le \frac{\lipi\cv}{2\clip} \nm{\rvv-\mui}^2 + \left( \frac{\clip + \cv}{2\cv} \right) \lipi \nm{\rvv-\rvz}^2, 
    \end{align*}
    where the second last inequality follows from Assumption \ref{asmpt:ld:subgaussian_mixture}.\ref{asmpt:ld:pi:differentiable} that $\nabla \log \probi(\mui) = \0_d$ and Assumption \ref{asmpt:ld:subgaussian_mixture}.\ref{asmpt:ld:pi:score_lipschitz} that the score function $\nabla \log \probi$ is $\lipi$-Lipschitz. Therefore we obtain 
    \begin{align}
        &\Pr_{\rvz \sim \probi} (\rvz \in \Ball) = \int_{\rvz \in \Ball} \probi(\rvz) \dif \rvz \notag \\ 
        &\ge \int_{\rvz \in \Ball} \probi(\rvv) \exp \left( - \frac{\lipi\cv}{2\clip} \nm{\rvv-\mui}^2 - \frac{\clip + \cv}{2\cv} \lipi \nm{\rvv-\rvz}^2 \right) \dif \rvz \notag \\ 
        &= \probi(\rvv) \exp \left( - \frac{\lipi\cv}{2\clip} \nm{\rvv-\mui}^2 \right) \int_{\rvz \in \Ball} \exp \left( - \frac{\clip + \cv}{2\cv} \lipi \nm{\rvv-\rvz}^2 \right) \dif \rvz.  \label{eq:ld:subgaussian_exp_decay:step2}
    \end{align}
    By observing that $g: \Ball \to \set{\rvz: (\rvv-\mui)^T (\rvv-\rvz) \ge 0}$ with $g(\rvz) = 2\rvv-\rvz$ is a bijection such that $\nm{\rvv-\rvz} = \nm{\rvv-g(\rvz)}$ for any $\rvz \in \Ball$, we have 
    \begin{align}
        \int_{\rvz \in \Ball} \exp \left( - \frac{\clip + \cv}{2\cv} \lipi \nm{\rvv-\rvz}^2 \right) \dif \rvz &= \frac{1}{2} \int_{\rvz \in \sR^d} \exp \left( - \frac{\clip + \cv}{2\cv} \lipi \nm{\rvv-\rvz}^2 \right) \dif \rvz \notag \\ 
        &= \frac{1}{2} \left( \frac{2\pi\cv}{(\clip+\cv)\lipi} \right)^{\frac{d}{2}}. \label{eq:ld:subgaussian_exp_decay:step3}
    \end{align}
    Hence, by combining \eqref{eq:ld:subgaussian_exp_decay:step1}, \eqref{eq:ld:subgaussian_exp_decay:step2}, and \eqref{eq:ld:subgaussian_exp_decay:step3}, we obtain 
    \begin{align*}
        \exp \left( -\frac{\nm{\rvv-\mui}^2}{2 \vi^2} \right) &\ge \Pr_{\rvz \sim \probi} \left( (\rvv-\mui)^T (\rvv-\rvz) \le 0 \right) \\ 
        &\ge \probi(\rvv) \exp \left( - \frac{\lipi\cv}{2\clip} \nm{\rvv-\mui}^2 \right) \cdot \frac{1}{2} \left( \frac{2\pi\cv}{(\clip+\cv)\lipi} \right)^{\frac{d}{2}}. 
    \end{align*}
    By Assumption \ref{asmpt:ld:subgaussian_mixture}.\ref{asmpt:ld:pi:score_lipschitz} that $\lipi \le \frac{\clip}{\vi^2}$ we obtain the following bound on the probability density:
    \begin{equation} \label{eq:ld:subgaussian_bound_pdf}
        \probi(\rvv) \le 2 \left( \frac{2\pi\cv \vi^2}{(\clip+\cv)\clip} \right)^{-\frac{d}{2}} \exp \left( -\frac{1-\cv}{2\vi^2} \nm{\rvv-\mui}^2 \right). 
    \end{equation}
    Then we can bound the ratio of $\probi$ and $\probz$. For all $i \in [k]$, define $\rhoi(\lgv) := \frac{\probi(\lgv)}{\probz(\lgv)}$, then we have 
    \begin{align*}
        &\rhoi(\lgv) = \frac{\probi(\lgv)}{\probz(\lgv)} \le \frac{ 2 (2\pi\cv\vi^2/(\clip^2+\cv\clip))^{-d/2} \exp \left( -(1-\cv) \nm{\lgv-\mui}^2 / 2\vi^2 \right) }{(2\pi \vz^2)^{-d/2} \exp \left( - \nm{\lgv}^2 / 2\vz^2 \right)} \\ 
        &= 2 \left( \frac{(\clip^2+\cv\clip)\vz^2}{\cv\vi^2} \right)^{\frac{d}{2}} \exp \left( \frac{\nm{\lgv}^2}{2\vz^2} - \frac{(1-\cv)\nm{\lgv-\mui}^2}{2\vi^2} \right) \\ 
        &= 2 \left( \frac{(\clip^2+\cv\clip)\vz^2}{\cv\vi^2} \right)^{\frac{d}{2}} \exp \left( \left( \frac{1}{2\vz^2}-\frac{1-\cv}{2\vi^2} \right) \nm{\N\nv}^2 + \left( \frac{\nm{\R\rv}^2}{2\vz^2} - \frac{(1-\cv)\nm{\R\rv - \mui}^2}{2\vi^2} \right) \right) \\ 
        &= 2 \left( \frac{(\clip^2+\cv\clip)\vz^2}{\cv\vi^2} \right)^{\frac{d}{2}} \exp \left( \left( \frac{1}{2\vz^2}-\frac{1-\cv}{2\vi^2} \right) \nm{\nv}^2 + \left( \frac{\nm{\rv}^2}{2\vz^2} - \frac{(1-\cv)\nm{\rv - \R^T\mui}^2}{2\vi^2} \right) \right),
    \end{align*}
    where the last step follows from the definition that $\R \in \sR^{d \times \r}$ an orthogonal basis of the vector space $\set{\mui}_{i \in [k]}$ and $\N^T \N = \mI_\n$. Since $\vi^2 < (1-\cv)\vz^2$, the quadratic term $\frac{\nm{\rv}^2}{2\vz^2} - \frac{(1-\cv)\nm{\rv - \R^T\mui}^2}{2\vi^2}$ is maximized at $\rv = \frac{(1-\cv)\vz^2 \R^T \mui}{(1-\cv)\vz^2 - \vi^2}$. Therefore, we obtain
    \begin{align*}
        \frac{\nm{\rv}^2}{2\vz^2} - \frac{(1-\cv)\nm{\rv - \R^T\mui}^2}{2\vi^2} \le \frac{(1-\cv) \nm{\mui}^2}{2((1-\cv)\vz^2 - \vi^2)}. 
    \end{align*}
    Hence, for $\nm{\mui - \muz}^2 \le \frac{(1-\cv)\vz^2 - \vi^2}{2(1-\cv)} \left( \log \frac{\cv \vi^2}{(\clip^2 + \cv \clip)\vz^2} - \frac{\vi^2}{2(1-\cv) \vz^2} + \frac{(1-\cv)\vz^2}{2\vi^2} \right) d$ and $\nm{\nv}^2 \ge \left( \frac{\vz^2}{2} + \frac{\vmax^2}{2(1-\cv)} \right) d$, we have 
    \begin{align*}
        &\rhoi(\lgv) \le 2 \left( \frac{(\clip^2+\cv\clip)\vz^2}{\cv\vi^2} \right)^{\frac{d}{2}} \exp \left( \left( \frac{1}{2\vz^2}-\frac{1-\cv}{2\vi^2} \right) \nm{\nv}^2 + \frac{(1-\cv) \nm{\mui}^2}{2((1-\cv)\vz^2 - \vi^2)} \right) \\ 
        &\le 2 \left( \frac{(\clip^2+\cv\clip)\vz^2}{\cv\vi^2} \right)^{\frac{d}{2}} \exp \left( \left( \frac{1}{2\vz^2}-\frac{1-\cv}{2\vi^2} \right) \left( \frac{\vz^2}{2} + \frac{\vi^2}{2(1-\cv)} \right) d + \frac{(1-\cv) \nm{\mui}^2}{2((1-\cv)\vz^2 - \vi^2)} \right) \\ 
        &= 2 \exp \left( - \left( \log \frac{\cv \vi^2}{(\clip^2 + \cv \clip)\vz^2} - \frac{\vi^2}{2(1-\cv) \vz^2} + \frac{(1-\cv)\vz^2}{2\vi^2} \right) \frac{d}{2} + \frac{(1-\cv) \nm{\mui}^2}{2((1-\cv)\vz^2 - \vi^2)} \right) \\ 
        &\le 2 \exp \left( - \left( \log \frac{\cv \vi^2}{(\clip^2 + \cv \clip)\vz^2} - \frac{\vi^2}{2(1-\cv) \vz^2} + \frac{(1-\cv)\vz^2}{2\vi^2} \right) \frac{d}{4} \right). 
    \end{align*}
    From Lemma \ref{lem:ld:subgaussian:validate_asmpt_ld_mui}, we obtain $\rhoi(\lgv) \le \exp(-\Omega(d))$. 

    To show $\frac{\nm{\nabla_\lgv \probi(\lgv)}}{\P(\lgv)} \le \exp(-\Omega(d))$, from Assumptions \ref{asmpt:ld:subgaussian_mixture}.\ref{asmpt:ld:pi:differentiable} and \ref{asmpt:ld:subgaussian_mixture}.\ref{asmpt:ld:pi:score_lipschitz} we have 
    \begin{align*}
        \nm{ \frac{\nabla_\lgv \probi(\lgv)}{\probi(\lgv)} } &= \nm{ \frac{\nabla_\lgv \probi(\lgv)}{\probi(\lgv)} -  \frac{\nabla_\lgv \probi(\mui)}{\probi(\mui)}} = \nm{\nabla_\lgv \log \probi(\lgv) - \nabla_\lgv \log \probi(\mui)} \\ 
        &\le \lipi \nm{\lgv - \mui} \le \frac{\clip}{\vi^2} \nm{\lgv - \mui}. 
    \end{align*}
    Therefore, we can bound $\frac{\nm{\nabla_\lgv \probi(\lgv)}}{\P(\lgv)} \le \frac{\clip}{\vi^2} \rhoi(\lgv) \nm{\lgv - \mui}$. When $\nm{\lgv - \mui} = \exp(o(d))$ is small, by $\rhoi(\lgv) \le \exp(-\Omega(d))$ we directly have $\frac{\nm{\nabla_\lgv \probi(\lgv)}}{\P(\lgv)} \le \exp(-\Omega(d))$. When $\nm{\lgv - \mui} = \exp(\Omega(d))$ is exceedingly large, from \eqref{eq:ld:subgaussian_bound_pdf} we have 
    \begin{align*}
        \frac{\nm{\nabla_\lgv \probi(\lgv)}}{\P(\lgv)} &\le \frac{2\clip}{\vi^2} \left( \frac{(\clip^2+\cv\clip)\vz^2}{\cv\vi^2} \right)^{\frac{d}{2}} \exp \left( \frac{\nm{\lgv}^2}{2\vz^2} - \frac{(1-\cv)\nm{\lgv-\mui}^2}{2\vi^2} \right) \nm{\lgv - \mui}. 
    \end{align*}
    Since $\vz^2 > \frac{\vi^2}{1-\cv}$, when $\nm{\lgv - \mui} = \exp(\Omega(d)) \gg \nm{\mui}$ we have 
    \begin{equation*}
        \exp \left( \frac{\nm{\lgv}^2}{2\vz^2} - \frac{(1-\cv)\nm{\lgv-\mui}^2}{2\vi^2} \right) = \exp( -\Omega(\nm{\lgv-\mui}^2) ). 
    \end{equation*}
    Therefore $\frac{\nm{\nabla_\lgv \probi(\lgv)}}{\P(\lgv)} \le \exp(-\Omega(d))$. Thus we complete the proof of Lemma \ref{lem:ld:subgaussian_exp_decay}. 
\end{proof}

Similar to Lemma \ref{lem:ld:large_ntm_norm}, the following lemma proves that when the previous state $\ntm$ is far from a mode, a single step of Langevin dynamics with bounded step size is not enough to find the mode. 
\begin{lemma} \label{lem:ld:subgaussian:large_ntm_norm}
    Suppose $\dt \le \vz^2$ and $\nm{\ntm}^2 > 36 \vz^2 d$, then we have $\nm{\nt}^2 \ge \vz^2 d$ with probability at least $1-\exp(-\Omega(d))$. 
\end{lemma}
\begin{proof}[Proof of Lemma \ref{lem:ld:subgaussian:large_ntm_norm}]
    For simplicity, denote $\rvv := \ntm + \frac{\dt}{2} \N^T \nabla_\lgv \log \P(\ltm)$. Since $\P = \sum_{i=0}^k \wi \probi$ and $\probz = \gN(\muz, \vz^2 \mI_d)$, the score function can be written as 
    \begin{align}
        \nabla_\lgv \log \P(\lgv) &= \frac{\nabla_\lgv \P(\lgv)}{\P(\lgv)} = \frac{\nabla_\lgv \wz \probz(\lgv)}{\P(\lgv)} + \sum_{i \in [k]} \frac{\nabla_\lgv \wi \probi(\lgv)}{\P(\lgv)} \notag \\ 
        &= - \frac{\wz \probz(\lgv)}{\P(\lgv)} \cdot \frac{\lgv}{\vz^2} + \sum_{i \in [k]} \frac{\wi \nabla_\lgv \probi(\lgv)}{\P(\lgv)} \notag \\ 
        &= -\frac{\lgv}{\vz^2} + \sum_{i \in [k]} \frac{\wi \probi(\lgv)}{\P(\lgv)} \cdot \frac{\lgv}{\vz^2} + \sum_{i \in [k]} \frac{\wi \nabla_\lgv \probi(\lgv)}{\P(\lgv)}. \label{eq:ld:subgaussian:score_rewrite}
    \end{align}
    For $\nm{\ntm}^2 > 36 \vz^2 d$ by Lemma \ref{lem:ld:subgaussian_exp_decay} we have  $\frac{\nm{\nabla_\lgv \probi(\ltm)}}{\P(\ltm)} \le \exp(-\Omega(d))$. Since $\dt \le \vz^2$, we can bound the norm of $\rvv$ by 
    \begin{align*}
        \nm{\rvv} &= \nm{\ntm + \frac{\dt}{2} \N^T \nabla_\lgv \log \P(\ltm)} \\ 
        &= \nm{\ntm - \frac{\dt}{2\vz^2} \ntm + \sum_{i \in [k]} \frac{\wi\dt}{2\vz^2} \frac{\probi(\ltm)}{\P(\ltm)} \ntm + \sum_{i \in [k]} \frac{\wi\dt}{2} \frac{\N^T \nabla_\lgv \probi(\ltm)}{\P(\ltm)}} \\ 
        &\ge \nm{\left( 1 - \frac{\dt}{2\vz^2} + \sum_{i \in [k]} \frac{\wi\dt}{2\vz^2} \frac{\probi(\ltm)}{\P(\ltm)} \right) \ntm} - \sum_{i \in [k]} \frac{\wi\dt}{2} \frac{\nm{\nabla_\lgv \probi(\ltm)}}{\P(\ltm)} \\ 
        &\ge \frac{1}{2} \nm{\ntm} - \sum_{i \in [k]} \frac{\wi\dt}{2} \exp(-\Omega(d)) \\ 
        &> 2\vz \sqrt{d}. 
    \end{align*}
    On the other hand, from $\ent \sim \gN(\0_\n, \mI_\n)$ we know $\frac{\langle \rvv , \ent \rangle}{\nm{\rvv}} \sim \gN(0, 1)$ for any fixed $\rvv \neq \0_\n$, hence by Lemma \ref{lem:one_dim_gaussian_tail_bound} we have 
    \begin{equation*} 
        \Pr \left( \frac{\langle \rvv , \ent \rangle}{\nm{\rvv}} \ge \frac{\sqrt{d}}{4} \right) = \Pr \left( \frac{\langle \rvv , \ent \rangle}{\nm{\rvv}} \le -\frac{\sqrt{d}}{4} \right) \le \frac{4}{\sqrt{2\pi d}} \exp \left( -\frac{d}{32} \right)
    \end{equation*}
    Combining the above inequalities gives 
    \begin{align*}
        \nm{\nt}^2 &= \nm{\rvv + \sqrt{\dt} \ent}^2 \ge \nm{\rvv}^2 - 2\vz \lvert\langle \rvv, \ent \rangle\rvert \ge \nm{\rvv}^2 - \frac{\vz \sqrt{d}}{2} \nm{\rvv} > \vz^2 d
    \end{align*}
    with probability at least $1-\frac{8}{\sqrt{2\pi d}} \exp \left( -\frac{d}{32} \right) = 1 - \exp(-\Omega(d))$. This proves Lemma \ref{lem:ld:subgaussian:large_ntm_norm}. 
\end{proof}

When $\nm{\ntm}^2 \le 36\vz^2 d$, similar to Theorem \ref{thm:ld:gaussian_mixture}, we consider a surrogate recursion $\hnt$ such that $\hnz = \nz$ and for all $t \ge 1$, 
\begin{equation} \label{eq:ld:def:subgaussian_hnt}
    \hnt = \hntm - \frac{\dt}{2\vz^2} \hntm + \sqrt{\dt} \ent. 
\end{equation}
The following Lemma shows that $\hnt$ is sufficiently close to the original recursion $\nt$. 

\begin{lemma} \label{lem:ld:subgaussian_hnt_close_to_nt}
    For any $t \ge 1$, given that for all $j \in [t]$, $\dj \le \vz^2$ and $\left( \frac{\vz^2}{2} + \frac{\vmax^2}{2(1-\cv)} \right) d \le \nm{\njm}^2 \le 36\vz^2 d$, if $\mui$ satisfies Assumption \ref{asmpt:ld:subgaussian_mixture}.\ref{asmpt:ld:mui} for all $i \in [k]$, we have $\nm{\hnt - \nt} \le \frac{t}{\exp(\Omega(d))} \sqrt{d}$. 
\end{lemma}
\begin{proof}[Proof of Lemma \ref{lem:ld:subgaussian_hnt_close_to_nt}]
    By \eqref{eq:ld:subgaussian:score_rewrite} we have that for all $j \in [t]$,
    \begin{align*}
        \nm{\hnj - \nj} &= \nm{\hnjm - \njm - \frac{\dj}{2\vz^2} \hnjm - \frac{\dj}{2} \N^T \nabla_\lgv \log \P (\ljm)} \\ 
        &= \nm{ \hnjm - \njm - \sum_{i \in [k]} \frac{\wi \probi(\ljm)}{\vz^2\P(\ljm)} \njm - \sum_{i \in [k]} \frac{\wi \N^T \nabla_\lgv \probi(\ljm)}{\P(\ljm)} } \\ 
        &\le \nm{\hnjm-\njm} + \sum_{i \in [k]} \frac{\wi \probi(\ljm)}{\vz^2\P(\ljm)} \nm{\njm} + \sum_{i \in [k]} \frac{\wi \nm{\nabla_\lgv \probi(\ljm)}}{\P(\ljm)} . 
    \end{align*}
    By Lemma \ref{lem:ld:subgaussian_exp_decay}, we have $\frac{\probi(\ljm)}{\probz(\ljm)} \le \exp(-\Omega(d))$ and $\frac{\nm{\nabla_\lgv \probi(\ljm)}}{\P(\ljm)} \le \exp(-\Omega(d))$ for all $i \in [k]$, hence from $\nm{\njm} \le 6\vz \sqrt{d}$ we obtain a recursive bound
    \begin{equation*}
        \nm{\hnj - \nj} \le \nm{\hnjm-\njm} + \frac{1}{\exp(\Omega(d))} \sqrt{d}. 
    \end{equation*}
    Finally, by $\hnz = \nz$, we have 
    \begin{equation*}
        \nm{\hnt-\nt} = \sum_{j\in[t]} \left( \nm{\hnj - \nj} - \nm{\hnjm-\njm} \right) \le \frac{t}{\exp(\Omega(d))} \sqrt{d}.
    \end{equation*}
    Hence we obtain Lemma \ref{lem:ld:subgaussian_hnt_close_to_nt}. 
\end{proof}

Armed with the above lemmas, we are now ready to establish Theorem \ref{thm:ld:subgaussian_mixture} by induction. Please note that we recycle some lemmas from the proof of Theorem \ref{thm:ld:gaussian_mixture} by substituting $\vmax^2$ with $\frac{\vmax^2}{1-\cv}$. Suppose the theorem holds for all $T$ values of $1, \cdots, T-1$. We consider the following 3 cases:
\begin{itemize}
    \item If there exists some $t \in [T]$ such that $\dt > \vz^2$, by Lemma \ref{lem:ld:large_dt} we know that with probability at least $1-\exp(-\Omega(d))$, we have $\nm{\nt}^2 \ge \left( \frac{3\vz^2}{4} + \frac{\vmax^2}{4(1-\cv)} \right) d$, thus the problem reduces to the two sub-arrays $\nz,\cdots,\ntm$ and $\nt,\cdots,\nT$, which can be solved by induction. 
    \item Suppose $\dt \le \vz^2$ for all $t \in [T]$. If there exists some $t \in [T]$ such that $\nm{\ntm}^2 > 36 \vz^2 d$, by Lemma \ref{lem:ld:subgaussian:large_ntm_norm} we know that with probability at least $1-\exp(-\Omega(d))$, we have $\nm{\nt}^2 \ge \vz^2 d > \left( \frac{3\vz^2}{4} + \frac{\vmax^2}{4(1-\cv)} \right) d$, thus the problem similarly reduces to the two sub-arrays $\nz,\cdots,\ntm$ and $\nt,\cdots,\nT$, which can be solved by induction. 
    \item Suppose $\dt \le \vz^2$ and $\nm{\ntm}^2 \le 36 \vz^2 d$ for all $t \in [T]$. Conditioned on $\nm{\ntm}^2 > \left( \frac{\vz^2}{2} + \frac{\vmax^2}{2(1-\cv)} \right) d$ for all $t \in [T]$, by Lemma \ref{lem:ld:subgaussian_hnt_close_to_nt} we have that for $T = \exp(\gO(d))$, 
        \begin{equation*}
            \nm{\hnT - \nT} < \left( \sqrt{ \frac{5\vz^2}{8} + \frac{3\vmax^2}{8(1-\cv)} } - \sqrt{\frac{\vz^2}{2} + \frac{\vmax^2}{2(1-\cv)}} \right) \sqrt{d}. 
        \end{equation*}
        By Lemma \ref{lem:ld:hnt_norm_lower_bound} we have that with probability at least $1-\exp(-\Omega(d))$,
        \begin{equation*}
            \nm{\hnT}^2 \ge \left( \frac{5\vz^2}{8} + \frac{3\vmax^2}{8(1-\cv)} \right) d.
        \end{equation*}
        Combining the two inequalities implies the desired bound
        \begin{equation*}
            \nm{\nT} \ge \nm{\hnT} - \nm{\hnT-\nT} > \sqrt{ \left( \frac{\vz^2}{2} + \frac{\vmax^2}{2(1-\cv)} \right) d}.
        \end{equation*} 
        Hence by induction we obtain $\nm{\nt}^2 > \left( \frac{\vz^2}{2} + \frac{\vmax^2}{2(1-\cv)} \right) d$ for all $t \in [T]$ with probability at least 
        \begin{equation*}
            (1-(T-1)\exp(-\Omega(d))) \cdot (1-\exp(-\Omega(d))) \ge 1-T\exp(-\Omega(d)). 
        \end{equation*}
\end{itemize}
Therefore we complete the proof of Theorem \ref{thm:ld:subgaussian_mixture}. 
\end{proof}

\subsection{Proof of Theorem \ref{thm:ald:subgaussian_mixture}} \label{app:prf:ald_subgaussian_mixture}
\begin{proof}[Proof of Theorem \ref{thm:ald:subgaussian_mixture}]
The feasibility of Assumption \ref{asmpt:ald:subgaussian_mixture}.\ref{asmpt:ald:mui} can be validated by substituting $\var^2$ in Lemma \ref{lem:ld:subgaussian:validate_asmpt_ld_mui} with $\var^2 + \cnl^2$.  To establish Theorem \ref{thm:ald:subgaussian_mixture}, we first note from Proposition \ref{prop:perturb_property} that for a sub-Gaussian mixture $\P = \sum_{i=0}^k \wi \probi$, the perturbed distribution of noise level $\nl$ is $\P_\nl = \sum_{i=0}^k \wi \probi_{\nl}$, where $\probz=\gN(\muz, (\vi^2+\nl^2)\mI_d)$ and $\probi$ is a sub-Gaussian distribution with mean $\mui$ and sub-Gaussian parameter $(\vi^2+\nl^2)$. Similar to the proof of Theorem \ref{thm:ld:gaussian_mixture}, we decompose 
\begin{equation*}
    \lt = \R \rt + \N \nt \text{, and } \epst = \R \ert + \N \ent, 
\end{equation*}
where $\R \in \sR^{d \times \r}$ an orthonormal basis of the vector space $\set{\mui}_{i \in [k]}$ and $\N \in \sR^{d \times \n}$ an orthonormal basis of the null space of $\set{\mui}_{i \in [k]}$. Now, we prove Theorem \ref{thm:ald:subgaussian_mixture} by applying the techniques developed in Appendix \ref{app:prf:ld_gaussian_mixture} and \ref{app:prf:ld_subgaussian_mixture} via substituting $\var^2$ and $\frac{\var^2}{1-\cv}$ with $\frac{\var^2+\nlt^2}{1-\cv}$ at time step $t$. Note that for all $t \in \set{0}\cup[T]$, Assumption \ref{asmpt:ald:subgaussian_mixture}.\ref{asmpt:ald:var} implies $\vz^2 + \nlt^2 > \max \set{ 1, \frac{4(\clip^2+\cv\clip)}{\cv (1-\cv)} } \frac{\vmax^2 + \nlt^2}{1-\cv}$ because $\cnl \ge \nlt$.

Then, with the assumption that the initialization satisfies $\nm{\nz}^2 \ge \left( \frac{3 (\vz^2 + \nlz^2)}{4} + \frac{\vmax^2 + \nlz^2}{4(1-\cv)} \right) d$, we prove Theorem \ref{thm:ald:subgaussian_mixture} via showing that 
$$\Pr \left( \forall t \in [T], \,  \nm{\nt}^2 \ge \left( \frac{\vz^2+\nlt^2}{2} + \frac{\vmax^2+\nlt^2}{2(1-\cv)} \right) d \right) \ge 1-T \cdot \exp \left(-\Omega(d) \right) . $$ 
Suppose the theorem holds for all $T$ values of $1, \cdots, T-1$. We consider the following 3 cases:
\begin{itemize}
    \item If there exists some $t \in [T]$ such that $\dt > \vz^2 + \nlt^2$, by Lemma \ref{lem:ld:large_dt} we know that with probability at least $1-\exp(-\Omega(d))$, we have $\nm{\nt}^2 \ge \left( \frac{3 (\vz^2+\nlt^2)}{4} + \frac{\vmax^2+\nlt^2}{4(1-\cv)} \right) d$, thus the problem reduces to the two sub-arrays $\nz,\cdots,\ntm$ and $\nt,\cdots,\nT$, which can be solved by induction. 
    \item Suppose $\dt \le \vz^2 + \nlt^2$ for all $t \in [T]$. If there exists some $t \in [T]$ such that $\nm{\ntm}^2 > 36 (\vz^2 + \nltm^2) d \ge 36 (\vz^2 + \nlt^2) d$, by Lemma \ref{lem:ld:subgaussian:large_ntm_norm} we know that with probability at least $1-\exp(-\Omega(d))$, we have $\nm{\nt}^2 \ge (\vz^2+\nlt^2) d > \left( \frac{3 (\vz^2+\nlt^2)}{4} + \frac{\vmax^2+\nlt^2}{4(1-\cv)} \right) d$, thus the problem similarly reduces to the two sub-arrays $\nz,\cdots,\ntm$ and $\nt,\cdots,\nT$, which can be solved by induction. 
    \item Suppose $\dt \le \vz^2 + \nlt^2$ and $\nm{\ntm}^2 \le 36 (\vz^2 + \nltm^2) d$ for all $t \in [T]$. Consider a surrogate sequence $\hnt$ such that $\hnz = \nz$ and for all $t \ge 1$, 
    \begin{equation*} 
        \hnt = \hntm - \frac{\dt}{2\vz^2 + 2\nlt^2} \hntm + \sqrt{\dt} \ent. 
    \end{equation*}
    Since $\vz > \vi$ and $\cnl \ge \nlt$ for all $t \in \set{0} \cup [T]$, we have $\frac{\vi^2+\cnl^2}{\vz^2+\cnl^2} > \frac{\vi^2+\nlt^2}{\vz^2+\nlt^2}$. Notice that for function $f(z) = \log z - \frac{z}{2} + \frac{1}{2z}$, we have $\frac{\dif}{\dif z} f(z) = \frac{1}{z} - \frac{1}{2} - \frac{1}{2z^2} = -\frac{1}{2} \left( \frac{1}{z} - 1 \right)^2 \le 0$.
    
    Thus, by Assumption \ref{asmpt:ald:subgaussian_mixture}.\ref{asmpt:ald:mui} we have that for all $t \in [T]$,
    \begin{align*}
        \nm{\mui - \muz}^2 \le \frac{(1-\cv)\vz^2 - \vi^2 -\cv\cnl^2}{2(1-\cv)} &\left( \log \frac{\cv (\vi^2 + \cnl^2)}{(\clip^2 + \cv \clip)(\vz^2 + \cnl^2)} \right. \\ 
        &- \left. \frac{(\vi^2 + \cnl^2)}{2(1-\cv) (\vz^2 + \cnl^2)} + \frac{(1-\cv)(\vz^2 + \cnl^2)}{2(\vi^2 + \cnl^2)} \right) d \\ 
        \le \frac{(1-\cv)\vz^2 - \vi^2 -\cv\nlt^2}{2(1-\cv)} &\left( \log \frac{\cv (\vi^2 + \nlt^2)}{(\clip^2 + \cv \clip)(\vz^2 + \nlt^2)} \right. \\ 
        &- \left. \frac{(\vi^2 + \nlt^2)}{2(1-\cv) (\vz^2 + \nlt^2)} + \frac{(1-\cv)(\vz^2 + \nlt^2)}{2(\vi^2 + \nlt^2)} \right) d 
    \end{align*}
    Conditioned on $\nm{\ntm}^2 > \left( \frac{\vz^2 + \nltm^2}{2} + \frac{\vmax^2 + \nltm^2}{2(1-\cv)} \right) d$ for all $t \in [T]$, by Lemma \ref{lem:ld:subgaussian_hnt_close_to_nt} we have that for $T = \exp(\gO(d))$, 
    \begin{equation*}
        \nm{\hnT - \nT} < \left( \sqrt{\frac{5 (\vz^2 + \nlT^2)}{8} + \frac{3(\vmax^2 + \nlT^2)}{8(1-\cv)}} - \sqrt{\frac{\vz^2 + \nlT^2}{2} + \frac{\vmax^2 + \nlT^2}{2(1-\cv)}} \right) \sqrt{d}. 
    \end{equation*}
    By Lemma \ref{lem:ld:hnt_norm_lower_bound} we have that with probability at least $1-\exp(-\Omega(d))$,
    \begin{equation*}
        \nm{\hnT}^2 \ge \left( \frac{5 (\vz^2 + \nlT^2)}{8} + \frac{3(\vmax^2 + \nlT^2)}{8(1-\cv)} \right) d.
    \end{equation*}
    Combining the two inequalities implies the desired bound
    \begin{equation*}
        \nm{\nT} \ge \nm{\hnT} - \nm{\hnT-\nT} > \sqrt{\left( \frac{\vz^2 + \nlT^2}{2} + \frac{\vmax^2 + \nlT^2}{2(1-\cv)} \right) d}.
    \end{equation*} 
    Hence by induction we obtain $\nm{\nt}^2 > \left( \frac{\vz^2 + \nlt^2}{2} + \frac{\vmax^2 + \nlt^2}{2(1-\cv)} \right) d$ for all $t \in [T]$ with probability at least 
    \begin{equation*}
        (1-(T-1)\exp(-\Omega(d))) \cdot (1-\exp(-\Omega(d))) \ge 1-T\exp(-\Omega(d)). 
    \end{equation*}
\end{itemize}
Therefore we complete the proof of Theorem \ref{thm:ald:subgaussian_mixture}.
\end{proof}

\subsection{Proof of Theorem \ref{thm:ng:subgaussian}}
\label{app:prf:ng-subgaussian}
\label{app:non-gauss-pz}
\begin{proof}[Proof of Theorem \ref{thm:ng:subgaussian}]

The following Lemma \ref{lem:ng:subgaussian_exp_decay} gives an upper bound on the probability density $\probi$.
    
\begin{lemma} \label{lem:ng:subgaussian_exp_decay}
    Suppose $\P$ satisfies Assumption \ref{asmpt:ng:subgaussian}. Then for any $\lgv$ such that $\nm{\lgv-\mui}^2 \ge \left( \frac{\vz^2}{2} + \frac{\vi^2}{2(1-\cv)} \right) d$, we have $\frac{\probi(\lgv)}{\probz(\lgv)} \le \exp(-\Omega(d))$ and $\frac{\nm{\nabla_\lgv \probi(\lgv)}}{\P(\lgv)} \le \exp(-\Omega(d))$. 
\end{lemma}
\begin{proof}[Proof of Lemma \ref{lem:ng:subgaussian_exp_decay}]
    Similar to the proof of Lemma \ref{lem:ld:subgaussian_exp_decay}, we first give an upper bound on the sub-Gaussian probability density. For any vector $\rvv \in \sR^d$, by considering some vector $\rvm \in \sR^d$, from Markov's inequality and the definition in \eqref{eq:def:subgaussian} we can bound 
    \begin{align*}
        \Pr_{\rvz \sim \probi} \left( \rvm^T (\rvz-\mui) \ge \rvm^T (\rvv-\mui) \right) &\le \frac{\E_{\rvz \sim \probi} \left[ \exp \left( \rvm^T (\rvz-\mui) \right) \right]}{\exp \left( \rvm^T (\rvv-\mui) \right)} \\
        &\le \exp \left( \frac{\vi^2 \nm{\rvm}^2}{2} - \rvm^T (\rvv-\mui) \right). 
    \end{align*}
    Upon optimizing the last term at $\rvm = \frac{\rvv-\mui}{\vi^2}$, we obtain 
    \begin{equation} \label{eq:ng:subgaussian_exp_decay:step1}
        \Pr_{\rvz \sim \probi} \left( (\rvv-\mui)^T (\rvv-\rvz) \le 0 \right) \le \exp \left( -\frac{\nm{\rvv-\mui}^2}{2 \vi^2} \right). 
    \end{equation}
    Denote $\Ball := \set{\rvz: (\rvv-\mui)^T (\rvv-\rvz) \le 0}$. To bound $\Pr_{\rvz \sim \probi} (\rvz \in \Ball)$, we first note that 
    \begin{align*}
        &\log \probi(\rvv) - \log \probi(\rvz) \\
        &= \int_0^1 \langle \rvv-\rvz, \nabla \log \probi(\rvv + \lambda(\rvz-\rvv)) \rangle \dif \lambda \\ 
        &= \langle \rvv-\rvz, \nabla \log \probi(\rvv) \rangle + \int_0^1 \langle \rvv-\rvz, \nabla \log \probi(\rvv + \lambda(\rvz-\rvv)) - \nabla \log \probi(\rvv) \rangle \dif \lambda \\ 
        &\le \nm{\rvv-\rvz} \nm{\nabla \log \probi(\rvv)} + \int_0^1 \nm{\rvv-\rvz} \nm{\nabla \log \probi(\rvv + \lambda(\rvz-\rvv)) - \nabla \log \probi(\rvv)} \dif \lambda \\ 
        &\le \nm{\rvv-\rvz} \cdot \lipi \nm{\rvv-\mui} + \int_0^1 \nm{\rvv-\rvz} \cdot \lipi \nm{\lambda (\rvz-\rvv)} \dif \lambda \\ 
        &\le \frac{\lipi\cv}{2\clip} \nm{\rvv-\mui}^2 + \left( \frac{\clip + \cv}{2\cv} \right) \lipi \nm{\rvv-\rvz}^2, 
    \end{align*}
    where the second last inequality follows from Assumption \ref{asmpt:ng:subgaussian}.\ref{asmpt:ng:pi:differentiable} that $\nabla \log \probi(\mui) = \0_d$ and Assumption \ref{asmpt:ng:subgaussian}.\ref{asmpt:ng:pi:score_lipschitz} that the score function $\nabla \log \probi$ is $\lipi$-Lipschitz. Therefore we obtain 
    \begin{align}
        &\Pr_{\rvz \sim \probi} (\rvz \in \Ball) = \int_{\rvz \in \Ball} \probi(\rvz) \dif \rvz \notag \\ 
        &\ge \int_{\rvz \in \Ball} \probi(\rvv) \exp \left( - \frac{\lipi\cv}{2\clip} \nm{\rvv-\mui}^2 - \frac{\clip + \cv}{2\cv} \lipi \nm{\rvv-\rvz}^2 \right) \dif \rvz \notag \\ 
        &= \probi(\rvv) \exp \left( - \frac{\lipi\cv}{2\clip} \nm{\rvv-\mui}^2 \right) \int_{\rvz \in \Ball} \exp \left( - \frac{\clip + \cv}{2\cv} \lipi \nm{\rvv-\rvz}^2 \right) \dif \rvz.  \label{eq:ng:subgaussian_exp_decay:step2}
    \end{align}
    By observing that $g: \Ball \to \set{\rvz: (\rvv-\mui)^T (\rvv-\rvz) \ge 0}$ with $g(\rvz) = 2\rvv-\rvz$ is a bijection such that $\nm{\rvv-\rvz} = \nm{\rvv-g(\rvz)}$ for any $\rvz \in \Ball$, we have 
    \begin{align}
        \int_{\rvz \in \Ball} \exp \left( - \frac{\clip + \cv}{2\cv} \lipi \nm{\rvv-\rvz}^2 \right) \dif \rvz &= \frac{1}{2} \int_{\rvz \in \sR^d} \exp \left( - \frac{\clip + \cv}{2\cv} \lipi \nm{\rvv-\rvz}^2 \right) \dif \rvz \notag \\ 
        &= \frac{1}{2} \left( \frac{2\pi\cv}{(\clip+\cv)\lipi} \right)^{\frac{d}{2}}. \label{eq:ng:subgaussian_exp_decay:step3}
    \end{align}
    Hence, by combining \eqref{eq:ng:subgaussian_exp_decay:step1}, \eqref{eq:ng:subgaussian_exp_decay:step2}, and \eqref{eq:ng:subgaussian_exp_decay:step3}, we obtain 
    \begin{align*}
        \exp \left( -\frac{\nm{\rvv-\mui}^2}{2 \vi^2} \right) &\ge \Pr_{\rvz \sim \probi} \left( (\rvv-\mui)^T (\rvv-\rvz) \le 0 \right) \\ 
        &\ge \probi(\rvv) \exp \left( - \frac{\lipi\cv}{2\clip} \nm{\rvv-\mui}^2 \right) \cdot \frac{1}{2} \left( \frac{2\pi\cv}{(\clip+\cv)\lipi} \right)^{\frac{d}{2}}. 
    \end{align*}
    By Assumption \ref{asmpt:ng:subgaussian}.\ref{asmpt:ng:pi:score_lipschitz} that $\lipi \le \frac{\clip}{\vi^2}$ we obtain the following bound on the probability density:
    \begin{equation} \label{eq:ng:subgaussian_bound_pdf}
        \probi(\rvv) \le 2 \left( \frac{2\pi\cv \vi^2}{(\clip+\cv)\clip} \right)^{-\frac{d}{2}} \exp \left( -\frac{1-\cv}{2\vi^2} \nm{\rvv-\mui}^2 \right). 
    \end{equation}
    Then we can bound the ratio of $\probi$ and $\probz$. For all $i \in [k]$, we have 
    \begin{align}
        \frac{\probi(\lgv)}{\probz(\lgv)} &\le \frac{ 2 (2\pi\cv\vi^2/(\clip^2+\cv\clip))^{-d/2} \exp \left( -(1-\cv) \nm{\lgv-\mui}^2 / 2\vi^2 \right) }{(2\pi \vz^2)^{-d/2} \exp \left( - (1+\cz) \nm{\lgv}^2 / 2\vz^2 \right)} \\ 
        &= 2 \left( \frac{(\clip^2+\cv\clip)\vz^2}{\cv\vi^2} \right)^{\frac{d}{2}} \exp \left( \frac{(1+\cz)\nm{\lgv}^2}{2\vz^2} - \frac{(1-\cv)\nm{\lgv-\mui}^2}{2\vi^2} \right) \\ 
        &\le 2 \left( \frac{(\clip^2+\cv\clip)\vz^2}{\cv\vi^2} \right)^{\frac{d}{2}} \exp \left( \frac{(1+\cz)\nm{\lgv-\mui}^2 + (1+\cz)\nm{\mui}^2}{\vz^2} - \frac{(1-\cv)\nm{\lgv-\mui}^2}{2\vi^2} \right) \label{eq:ng:bound_density_ratio} \\ 
        &\le 2 \exp \left( - \left( \left( \frac{1-\cv}{2\vi^2} - \frac{1+\cz}{\vz^2} \right) \left( \frac{\vz^2}{2} + \frac{\vi^2}{2(1-\cv)} \right) - \frac{1+\cz}{\vz^2} r^2 - \frac{1}{2} \log \frac{\cv\vi^2}{(\clip^2+\cv\clip)\vz^2} \right) d \right)
    \end{align}
    where the second last step follows from triangle inequality, and the last step follows from $\nm{\lgv-\mui}^2 \ge \left( \frac{\vz^2}{2} + \frac{\vi^2}{2(1-\cv)} \right) d$ and Assumption \ref{asmpt:ng:subgaussian}.\ref{asmpt:ng:pi:random} that $\mui$ is chosen from ball $\S$ of radius $r$. Therefore, from Assumption \ref{asmpt:ng:subgaussian}, we obtain $\frac{\probi(\lgv)}{\probz(\lgv)} \le \exp(-\Omega(d))$. 

    To show $\frac{\nm{\nabla_\lgv \probi(\lgv)}}{\P(\lgv)} \le \exp(-\Omega(d))$, from Assumptions \ref{asmpt:ng:subgaussian}.\ref{asmpt:ng:pi:differentiable} and \ref{asmpt:ng:subgaussian}.\ref{asmpt:ng:pi:score_lipschitz} we have 
    \begin{align*}
        \nm{ \frac{\nabla_\lgv \probi(\lgv)}{\probi(\lgv)} } &= \nm{ \frac{\nabla_\lgv \probi(\lgv)}{\probi(\lgv)} -  \frac{\nabla_\lgv \probi(\mui)}{\probi(\mui)}} = \nm{\nabla_\lgv \log \probi(\lgv) - \nabla_\lgv \log \probi(\mui)} \\ 
        &\le \lipi \nm{\lgv - \mui} \le \frac{\clip}{\vi^2} \nm{\lgv - \mui}. 
    \end{align*}
    Therefore, we can bound $\frac{\nm{\nabla_\lgv \probi(\lgv)}}{\P(\lgv)} \le \frac{\clip}{\vi^2} \frac{\probi(\lgv)}{\P(\lgv)} \nm{\lgv - \mui}$. When $\nm{\lgv - \mui} = \exp(o(d))$ is small, by $\frac{\probi(\lgv)}{\probz(\lgv)} \le \exp(-\Omega(d))$ we directly have $\frac{\nm{\nabla_\lgv \probi(\lgv)}}{\P(\lgv)} \le \exp(-\Omega(d))$. When $\nm{\lgv - \mui} = \exp(\Omega(d))$ is exceedingly large, from \eqref{eq:ng:bound_density_ratio} we have 
    \begin{align*}
        &\frac{\nm{\nabla_\lgv \probi(\lgv)}}{\P(\lgv)} \le \frac{2\clip}{\vi^2} \left( \frac{(\clip^2+\cv\clip)\vz^2}{\cv\vi^2} \right)^{\frac{d}{2}} \exp \left( \frac{(1+\cz)\nm{\lgv}^2}{2\vz^2} - \frac{(1-\cv)\nm{\lgv-\mui}^2}{2\vi^2} \right) \nm{\lgv - \mui} \\ 
        &\le \frac{2\clip}{\vi^2} \exp \left( - \left( \left( \frac{1-\cv}{2\vi^2} - \frac{1+\cz}{\vz^2} \right) \frac{\nm{\lgv-\mui}}{d} - \frac{1+\cz}{\vz^2} r^2 - \frac{1}{2} \log \frac{\cv\vi^2}{(\clip^2+\cv\clip)\vz^2} \right) d \right) \nm{\lgv - \mui} 
    \end{align*}
    Since $\frac{1-\cv}{2\vi^2} > \frac{1+\cz}{\vz^2}$, when $\nm{\lgv - \mui} = \exp(\Omega(d))$ we have $\frac{\nm{\nabla_\lgv \probi(\lgv)}}{\P(\lgv)} \le \exp(-\Omega(d))$. Thus we complete the proof of Lemma \ref{lem:ng:subgaussian_exp_decay}. 
\end{proof}

\begin{lemma} \label{lem:ng:nabla_difference}
    Suppose $\P$ satisfies Assumption \ref{asmpt:ng:subgaussian}. If $\lgv$ satisfies $\nm{\lgv-\mui}^2 \ge \left( \frac{\vz^2}{2} + \frac{\vi^2}{2(1-\cv)} \right) d$ for all $i \in [k]$, we have $\nm{\nabla_\lgv \log \P(\lgv) - \nabla_\lgv \log \probz(\lgv)} \le \exp(-\Omega(d))$. 
\end{lemma}

\begin{proof}[Proof of Lemma \ref{lem:ng:nabla_difference}]
    Since $\P = \sum_{i=0}^k \wi \probi$, we can decompose $\nabla_\lgv \log \P(\lgv)$ as 
    \begin{align*}
        \nabla_\lgv \log \P(\lgv) &= \frac{\nabla_\lgv \P(\lgv)}{\P(\lgv)} = \frac{\sum_{i=0}^k \wi \nabla_\lgv \probi(\lgv)}{\sum_{i=0}^k \wi \probi(\lgv)} \\ 
        &= \frac{\nabla \probz(\lgv)}{\probz(\lgv)} - \frac{\sum_{i=1}^k \wi \probi(\lgv)}{\wz\P(\lgv)} \cdot \frac{\nabla \probz(\lgv)}{\probz(\lgv)} + \frac{\sum_{i=1}^k \wi \nabla_\lgv \probi(\lgv)}{\wz\P(\lgv)} \\ 
        &= \nabla_\lgv \log \probz(\lgv) - \frac{\sum_{i=1}^k \wi \probi(\lgv)}{\wz\P(\lgv)} \nabla_\lgv \log \probz(\lgv) + \frac{\sum_{i=1}^k \wi \nabla_\lgv \probi(\lgv)}{\wz\P(\lgv)}
    \end{align*}
    From Lemma \ref{lem:ng:subgaussian_exp_decay} we know $\frac{\nm{\nabla_\lgv \probi(\lgv)}}{\P(\lgv)} \le \exp(-\Omega(d))$. It remains to show $\nm{\frac{\probi(\lgv)}{\P(\lgv)} \nabla_\lgv \log \probz(\lgv)} \le \exp (-\Omega(d))$. Since by Assumption \ref{asmpt:ng:subgaussian} the score function of $\probz$ is $\lipz$-Lipschitz, we have 
    \begin{align*}
        \nm{\frac{\probi(\lgv)}{\P(\lgv)} \nabla_\lgv \log \probz(\lgv)} &\le \frac{\probi(\lgv)}{\P(\lgv)} \left( \nm{\nabla \log \probz(0)} + \lipz\nm{\lgv} \right) \\ 
        &\le \frac{\probi(\lgv)}{\P(\lgv)} \lipz \nm{\lgv-\mui} + \exp(-\Omega(d)) 
    \end{align*}
    When $\nm{\lgv-\mui} = \exp(o(d))$ is small, by $\frac{\probi(\lgv)}{\P(\lgv)} \le \exp(-\Omega(d))$ we directly have $\frac{\probi(\lgv)}{\P(\lgv)} \nm{\lgv-\mui} \le \exp(-\Omega(d))$. When $\nm{\lgv-\mui} = \exp(\Omega(d))$ is exceedingly large, by \eqref{eq:ng:bound_density_ratio} we have 
    \begin{align*}
        \frac{\probi(\lgv)}{\P(\lgv)} \nm{\lgv-\mui} &\le 2 \left( \frac{(\clip^2+\cv\clip)\vz^2}{\cv\vi^2} \right)^{\frac{d}{2}} 
        \exp \left( \frac{(1+\cz)(\nm{\lgv-\mui}^2 + \nm{\mui}^2)}{\vz^2} - \frac{(1-\cv)\nm{\lgv-\mui}^2}{2\vi^2} \right) \nm{\lgv-\mui}
    \end{align*}
    Since $\frac{1-\cv}{2\vi^2} > \frac{1+\cz}{\vz^2}$, when $\nm{\lgv - \mui} = \exp(\Omega(d))$ we have $\frac{\probi(\lgv)}{\P(\lgv)} \nm{\lgv-\mui} \le \exp(-\Omega(d))$. Therefore, by combining the above we obtain 
    \begin{align*}
        \nm{\nabla_\lgv \log \P(\lgv) - \nabla_\lgv \log \probz(\lgv)} &\le \sum_{i=1}^k \frac{\wi}{\wz} \nm{\frac{\probi(\lgv)}{\P(\lgv)} \nabla_\lgv \log \probz(\lgv)} + \sum_{i=1}^k \frac{\wi}{\wz} \nm{\frac{\nabla_\lgv \probi(\lgv)}{\P(\lgv)}} \\ 
        &\le \exp(-\Omega(d))
    \end{align*}
    which finishes the proof of Lemma \ref{lem:ng:nabla_difference}. 
\end{proof}

We consider an auxiliary trajectory such that $\lz' = \lz$ and 
\begin{equation*}
    \lt' = \ltm' + \frac{\dt}{2} \nabla_\lgv \log \probz(\ltm') + \sqrt{\dt} \epst. 
\end{equation*}
Since the update rule of the auxiliary trajectory is independent of the modes $\P^{(1)}, \cdots, \P^{(k)}$ and $\mui$ is uniformly randomly initialized in a ball of radius $\r$, for any given $\lt'$ we have 
\begin{equation*}
    \Pr \left(\nm{\lt'-\mui}^2 > \left( \frac{3\vz^2}{4} + \frac{\vi^2}{4(1-\cv)} \right) d \right) \le \exp \left( -d \log \frac{4r^2}{3\vz^2+\vi^2/(1-\cv)} \right)
\end{equation*}
Hence, by the union bound, we have 
\begin{align}
    &\Pr\left(\nm{\lt'-\mui}^2 > \left( \frac{3\vz^2}{4} + \frac{\vi^2}{4(1-\cv)} \right) d \; \forall t \in \{0\} \cup [T], i \in [k]  \right) \\ 
    &\ge 1- \sum_{t=0}^T \sum_{i=1}^k \Pr \left(\nm{\lt'-\mui}^2 > \left( \frac{3\vz^2}{4} + \frac{\vi^2}{4(1-\cv)} \right) d \right) \\ 
    &\ge 1- (T+1)k \exp \left( -d \log \frac{4r^2}{3\vz^2+\vi^2/(1-\cv)} \right).  \label{eq:ng:lgv'_high_prob_bound}
\end{align}

Now we are ready to prove Theorem \ref{thm:ng:subgaussian}. Notice that concavity and $\lipz$-smoothness of $\log \probz(\lgv)$ imply that the gradients are co-coercive, i.e., 
\begin{equation*}
    \left\langle \nabla_\lgv \log \probz(\lgv), \nabla_\lgv \log \probz(\lgv') \right\rangle \le -\frac{1}{\lipz} \nm{\nabla_\lgv \log \probz(\lgv) - \nabla_\lgv \log \probz(\lgv')}^2. 
\end{equation*}
Therefore, for step size $\lss \le \frac{4}{\lipz}$ we have 
\begin{align}
    &\nm{\lgv + \frac{\lss}{2} \nabla_\lgv \log \probz(\lgv) - \lgv' - \frac{\lss}{2} \nabla_\lgv \log \probz(\lgv')}^2 \\ 
    &= \nm{\lgv-\lgv'}^2 + \lss \left\langle \nabla_\lgv \log \probz(\lgv), \nabla_\lgv \log \probz(\lgv') \right\rangle + \frac{\lss^2}{4} \nm{\nabla_\lgv \log \probz(\lgv) - \nabla_\lgv \log \probz(\lgv')}^2 \\ 
    &\le \nm{\lgv-\lgv'}^2 + \left( \frac{\lss^2}{4} - \frac{\lss}{\lipz} \right) \nm{\nabla_\lgv \log \probz(\lgv) - \nabla_\lgv \log \probz(\lgv')}^2 \\ 
    &\le \nm{\lgv-\lgv'}^2 \label{eq:ng:co-coercive}
\end{align}
If $\ltm$ satisfies $\nm{\ltm-\mui}^2 \ge \left( \frac{\vz^2}{2} + \frac{\vi^2}{2(1-\cv)} \right) d$ for all $i \in [k]$, combining Lemma \ref{lem:ng:nabla_difference} and \eqref{eq:ng:co-coercive} gives 
\begin{align*}
    \nm{\lt-\lt'} &= \nm{\ltm + \frac{\dt}{2} \nabla_\lgv \log \P(\ltm) - \ltm' - \frac{\dt}{2} \nabla_\lgv \log \probz(\ltm')} \\ 
    &\le \nm{\ltm + \frac{\dt}{2} \nabla_\lgv \log \probz(\ltm) - \ltm' - \frac{\dt}{2} \nabla_\lgv \log \probz(\ltm')} \\ 
    &\quad \quad + \frac{\dt}{2} \nm{\nabla_\lgv \log \P(\ltm) - \nabla_\lgv \log \probz(\ltm)} \\ 
    &\le \nm{\ltm-\ltm} + \exp(-\Omega(d))
\end{align*}
Assuming $\nm{\lt'-\mui}^2 > \left( \frac{3\vz^2}{4} + \frac{\vi^2}{4(1-\cv)} \right) d$ for all $t \in \{0\} \cup [T] $ and $ i \in [k]$, which holds with probability $1-T \cdot \exp(-\Omega(d))$ due to \eqref{eq:ng:lgv'_high_prob_bound}, by induction we can easily obtain that when 
\begin{equation*}
    T \cdot \exp(-\Omega(d)) \le \sqrt{\left( \frac{3\vz^2}{4} + \frac{\vi^2}{4(1-\cv)} \right) d} - \sqrt{\left( \frac{\vz^2}{2} + \frac{\vi^2}{2(1-\cv)} \right) d}
\end{equation*}
we have $\nm{\lt-\lt'} \le T \cdot \exp(-\Omega(d)) $ and $\nm{\lt-\mui}^2 > \left( \frac{\vz^2}{2} + \frac{\vi^2}{2(1-\cv)} \right) d$ for all $t \in \{0\} \cup [T] $ and $ i \in [k]$, which completes the proof of Theorem \ref{thm:ng:subgaussian}. 
\end{proof}

\section{Proof of Theorem \ref{thm:chained}}
\label{app:prf:chainedld}

\begin{assumption} \label{asmpt:chained}
    For a target distribution $\P$, denote $U_q\left( \lgv^{(q)} \right) := U\left( \lgv^{(q)} | \lgv^{(1)}, \cdots, \lgv^{(q-1)} \right) = -\log \P \left( \lgv^{(q)} | \lgv^{(1)}, \cdots, \lgv^{(q-1)} \right)$. For all $q \in [d/Q]$ and $\lgv^{(1)}, \cdots, \lgv^{(q-1)} \in \sR^Q$, assume that $U_q\left( \lgv^{(q)} \right)$ satisfies: 
    \begin{enumerate}[label=\roman*., leftmargin=*, noitemsep]
        \vspace{-2.5mm}
        \item $U_q\left( \lgv^{(q)} \right)$ is $\LQ$-smooth, and the Hessian exists for all $\lgv^{(q)} \in \sR^Q$. \label{asmpt:chained:lipschitz}
        \\ That is: $\forall \rva,\rvb \in \sR^Q$, $\nm{\nabla U_q(\rva) - \nabla U_q(\rvb)} \le \LQ \nm{\rva - \rvb}$, and $\nabla^2 U_q(\rva)$ exists. 
        \vspace{0.9mm}
        \item $U_q\left( \lgv^{(q)} \right)$ is $\mQ$-strongly convex for $\nm{\lgv^{(q)}} > \RQ$. \label{asmpt:chained:strongly-convex}
        \vspace{0.9mm}
        \\ That is: $V^q (\rva) := U^q(\rva) - \frac{\mQ}{2} \nm{\rva}^2$ is convex on $\Gamma := \sR^{Q} \backslash \set{\rva : \nm{\rva}\le \RQ }$. We follow the definition of convexity on non-convex domains \citep{peters1986convex, min2012extension, ma2019sampling} that $\forall \rva \in \Gamma$, any convex combination of $\rva = \lambda_1 \rva_1 + \cdots + \lambda_m \rva_m$ with $\rva_1 , \cdots, \rva_m \in \Gamma$ satisfies 
        $$V^q(\rva) \le \lambda_1 V^q(\rva_1) + \cdots + \lambda_m V^q(\rva_m).$$
        \item $\nabla U_q(\0_Q) = \0_Q$. \label{asmpt:chained:local-extremum}
    \end{enumerate}
\end{assumption}

\begin{proposition} \label{prop:chained:convergence_patch}
     Consider a data distribution $\P$ satisfying Assumption \ref{asmpt:chained}. We initialize $\lz \sim \gN(\0_d, \frac{1}{\LQ} \mI_d)$ and apply chained Langevin dynamics in Algorithm \ref{alg:ChainedLD} with constant patch size $Q$, noise level $\nlt = 0$, and step size $\dt = \frac{\mQ\varepsilon^2 Q}{64\LQ^2 d^2} \exp(-16 \LQ \RQ^2)$. Then, Algorithm \ref{alg:ChainedLD} can achieve
    \begin{equation*}
        \TV \left( \hP \left( \lgv^{(q)} \mid \lgv^{(1)}, \cdots, \lgv^{(q-1)} \right) , \P \left( \lgv^{(q)} \mid \lgv^{(1)}, \cdots, \lgv^{(q-1)} \right) \right) \le \varepsilon \cdot \frac{Q}{d}
    \end{equation*}
    in $T = \frac{128 \LQ^2 d^3}{\mQ^2 Q^2 \varepsilon^2} \exp(32\LQ \RQ^2)  \log \gO \left( \frac{d^3}{\varepsilon^2 Q^2} \right)$ iterations. 
\end{proposition}
\begin{proof}[Proof of Proposition \ref{prop:chained:convergence_patch}]
    First, for $U_q$ satisfying Assumptions \ref{asmpt:chained},  by Proposition 1 of \citep{ma2019sampling}, the conditional distribution $\P \left( \lgv^{(q)} | \lgv^{(1)}, \cdots, \lgv^{(q-1)} \right)$ satisfies log-Sobolev inequality with constant $\rho_Q = \frac{\mQ}{2} \exp(-16\LQ\RQ^2)$. 
    
    Then, we note that chained Langevin dynamics in Algorithm \ref{alg:ChainedLD} applies $TQ/d$ iterations to sample patch $\lgv^{(q)}$ from the conditional distribution $\P \left( \lgv^{(q)} | \lgv^{(1)}, \cdots, \lgv^{(q-1)} \right)$. Denote $\hP \left( \lgv^{(q)}_t | \lgv^{(1)}, \cdots, \lgv^{(q-1)} \right)$ the law of the generated sample $\lgv^{(q)}_t$ at time $t$. Since $\lz \sim \gN(\0_d, \frac{1}{\LQ}\mI_d)$, we have 
    \begin{equation*}
        \hP \left( \lgv^{(q)}_0 | \lgv^{(1)}, \cdots, \lgv^{(q-1)} \right) = \gN(\0_Q, \frac{1}{\LQ} \mI_Q). 
    \end{equation*}
    Therefore, by Lemma 7 in \citep{ma2019sampling} and Assumption \ref{asmpt:chained}, we have 
    \begin{equation*}
        \KL \left( \hP \left( \lgv^{(q)}_0 | \lgv^{(1)}, \cdots, \lgv^{(q-1)} \right) || \P \left( \lgv^{(q)} | \lgv^{(1)}, \cdots, \lgv^{(q-1)} \right) \right) \le \frac{Q}{2} \log \frac{2\LQ}{\mQ} + \frac{32 \LQ^2}{\mQ^2} \LQ \RQ^2 \ll d. 
    \end{equation*}
    Since the conditional distribution $\P \left( \lgv^{(q)} | \lgv^{(1)}, \cdots, \lgv^{(q-1)} \right)$ satisfies log-Sobolev inequality with constant $\rhoQ = \frac{\mQ}{2} \exp(-16\LQ\RQ^2)$, for step size $\delta = \frac{\mQ\varepsilon^2 Q}{64\LQ^2 d^2} \exp(-16 \LQ \RQ^2)$, by Theorem 1 in \citep{vempala2019rapid} we obtain that at iteration $t$,
    \begin{align*}
        &\KL \left( \hP \left( \lgv^{(q)}_t | \lgv^{(1)}, \cdots, \lgv^{(q-1)} \right) || \P \left( \lgv^{(q)} | \lgv^{(1)}, \cdots, \lgv^{(q-1)} \right) \right) \\ 
        &\le \exp(- \rhoQ \delta t) \KL \left( \hP \left( \lgv^{(q)}_0 | \lgv^{(1)}, \cdots, \lgv^{(q-1)} \right) || \P \left( \lgv^{(q)} | \lgv^{(1)}, \cdots, \lgv^{(q-1)} \right) \right) + \frac{8 \delta Q \LQ^2}{\rhoQ} \\ 
        &\le \exp(- \rhoQ \delta t) d + \frac{\varepsilon^2 Q^2}{4d^2}. 
    \end{align*}
    Therefore, when the total number of iterations $T$ satisfies
    \begin{equation*}
        T \ge \frac{d}{Q\rhoQ \delta} \log \gO \left( \frac{d^3}{\varepsilon^2 Q^2} \right) = \frac{128 \LQ^2 d^3}{\mQ^2 Q^2 \varepsilon^2} \exp(32\LQ \RQ^2)  \log \left( \frac{d^3}{\varepsilon^2 Q^2} \right),
    \end{equation*}
    at iteration $t = TQ/d$ we have 
    \begin{equation*}
        \KL \left( \hP \left( \lgv^{(q)}_{TQ/d} | \lgv^{(1)}, \cdots, \lgv^{(q-1)} \right) || \P \left( \lgv^{(q)} | \lgv^{(1)}, \cdots, \lgv^{(q-1)} \right) \right) \le \frac{\varepsilon^2 Q^2}{2d^2}. 
    \end{equation*}
    Finally, by Pinsker's inequality, we have the total variation bound
    \begin{align*}
        &\TV \left( \hP \left( \lgv^{(q)} \mid \lgv^{(1)}, \cdots, \lgv^{(q-1)} \right) , \P \left( \lgv^{(q)} \mid \lgv^{(1)}, \cdots, \lgv^{(q-1)} \right) \right) \\ 
        &\le \sqrt{2 \left( \hP \left( \lgv^{(q)} \mid \lgv^{(1)}, \cdots, \lgv^{(q-1)} \right) , \P \left( \lgv^{(q)} \mid \lgv^{(1)}, \cdots, \lgv^{(q-1)} \right) \right)} \le \frac{\varepsilon Q}{d}. 
    \end{align*}
    Thus we finish the proof of Proposition \ref{prop:chained:convergence_patch}. 
\end{proof}

\begin{proposition} \label{prop:ChainedLD:linear_reduction}
    Consider a sampler algorithm taking the first $q-1$ patches $\lgv^{(1)}, \cdots, \lgv^{(q-1)}$ as input and outputing a sample of the next patch $\lgv^{(q)}$ with probability $\hP \left( \lgv^{(q)} \mid \lgv^{(1)}, \cdots, \lgv^{(q-1)} \right)$ for all $q \in [d/Q]$. Suppose that for every $q \in [d/Q]$ and any given previous patches $\lgv^{(1)}, \cdots, \lgv^{(q-1)}$, the sampler algorithm can achieve
    \begin{equation*}
        \TV \left( \hP \left( \lgv^{(q)} \mid \lgv^{(1)}, \cdots, \lgv^{(q-1)} \right) , \P \left( \lgv^{(q)} \mid \lgv^{(1)}, \cdots, \lgv^{(q-1)} \right) \right) \le \varepsilon \cdot \frac{Q}{d}
    \end{equation*}
    for some $\varepsilon > 0$. Then, equipped with the sampler algorithm, the Chained-LD algorithm can achieve 
    \begin{equation*}
        \TV \left( \hP(\lgv) , \P(\lgv) \right) \le \varepsilon. 
    \end{equation*}
\end{proposition}

\begin{proof}[Proof of Proposition \ref{prop:ChainedLD:linear_reduction}]
For simplicity, denote $\lgv^{[q]} = \set{\lgv^{(1)}, \cdots, \lgv^{(q)}}$. By the definition of total variation distance, for all $q \in [d/Q]$ we have 
\begin{align*}
    &\TV \left( \hP \left( \lgv^{[q]} \right) , \P \left( \lgv^{[q]} \right) \right) \\
    &= \frac{1}{2} \int \left\lvert \hP \left( \lgv^{[q]} \right) - \P \left( \lgv^{[q]} \right) \right\rvert \dif \lgv^{[q]} \\ 
    &= \frac{1}{2} \int \left\lvert \hP \left( \lgv^{(q)} \mid \lgv^{[q-1]} \right) \hP \left( \lgv^{[q-1]} \right) - \P \left( \lgv^{(q)} \mid \lgv^{[q-1]} \right) \P \left( \lgv^{[q-1]} \right) \right\rvert \dif \lgv^{[q]} \\ 
    &\le \frac{1}{2} \int \left\lvert \hP \left( \lgv^{(q)} \mid \lgv^{[q-1]} \right) \hP \left( \lgv^{[q-1]} \right) - \hP \left( \lgv^{(q)} \mid \lgv^{[q-1]} \right) \P \left( \lgv^{[q-1]} \right) \right\rvert \dif \lgv^{[q]} \\ 
    &\quad\quad + \frac{1}{2} \int \left\lvert \hP \left( \lgv^{(q)} \mid \lgv^{[q-1]} \right) \P \left( \lgv^{[q-1]} \right) - \P \left( \lgv^{(q)} \mid \lgv^{[q-1]} \right) \P \left( \lgv^{[q-1]} \right) \right\rvert \dif \lgv^{[q]} \\ 
    &= \frac{1}{2} \int \hP \left( \lgv^{(q)} \mid \lgv^{[q-1]} \right) \dif \lgv^{(q)} \int \left\lvert \hP \left( \lgv^{[q-1]} \right) - \P \left( \lgv^{[q-1]} \right) \right\rvert \dif \lgv^{[q-1]} \\ 
    &\quad\quad + \frac{1}{2} \int \left\lvert \hP \left( \lgv^{(q)} \mid \lgv^{[q-1]} \right) - \P \left( \lgv^{(q)} \mid \lgv^{[q-1]} \right)  \right\rvert \dif \lgv^{(q)} \int \P \left( \lgv^{[q-1]} \right) \dif \lgv^{[q-1]} \\ 
    &= \TV \left( \hP \left( \lgv^{[q-1]} \right) , \P \left( \lgv^{[q-1]} \right) \right) + \TV \left( \hP \left( \lgv^{(q)} \mid \lgv^{[q-1]} \right) , \P \left( \lgv^{(q)} \mid \lgv^{[q-1]} \right) \right) \\ 
    &\le \TV \left( \hP \left( \lgv^{[q-1]} \right) , \P \left( \lgv^{[q-1]} \right) \right) + \varepsilon \cdot \frac{Q}{d}. 
\end{align*}
Upon summing up the above inequality for all $q \in [d/Q]$, we obtain 
\begin{align*}
     \TV \left( \hP(\lgv) , \P(\lgv) \right) &= \sum_{q=1}^{d/Q} \left( \TV \left( \hP \left( \lgv^{[q]} \right) , \P \left( \lgv^{[q]} \right) \right) - \TV \left( \hP \left( \lgv^{[q-1]} \right) , \P \left( \lgv^{[q-1]} \right) \right) \right) \\ 
     &\le \sum_{q=1}^{d/Q} \varepsilon \cdot \frac{Q}{d} = \varepsilon
\end{align*}
Thus we finish the proof of Proposition \ref{prop:ChainedLD:linear_reduction}.
\end{proof}

Finally, upon combining Propositions \ref{prop:chained:convergence_patch} and \ref{prop:ChainedLD:linear_reduction}, we finish the proof of Theorem \ref{thm:chained}.

\section{Experimental Details and Additional Experiments} \label{app:exp}
\textbf{Algorithm Setup:} 
Our choices of algorithm hyperparameters are based on \citep{song2019generative} and \citep{song2020improved}. For $\sigma_{\max}=1$, following from \citep{song2019generative}, we consider $L=10$ different standard deviations such that $\set{\lambda_i}_{i \in [L]}$ is a geometric sequence with $\lambda_1 = 1$ and $\lambda_{10} = 0.01$. For annealed Langevin dynamics with $T$ iterations, we choose the noise levels $\set{\nl_t}_{t\in [T]}$ by repeating every element of $\set{\lambda_i}_{i \in [L]}$ for $T/L$ times and we set the step size as $\dt = 2 \times 10^{-5} \cdot \nlt^2 / \nlT^2$ for every $t \in [T]$. For vanilla Langevin dynamics with $T$ iterations, we use the same step size as annealed Langevin dynamics. For Chained-VLD and Chained-ALD, the patch size $Q$ is chosen depending on different tasks. For every patch of Chained-ALD, we choose the noise levels $\set{\nl_t}_{t\in [TQ/d]}$ by repeating every element of $\set{\lambda_i}_{i \in [L]}$ for $TQ/dL$ times and we set the step size as $\dt = 2 \times 10^{-5} \cdot \nlt^2 / \nl_{TQ/d}^2$ for every $t \in [TQ/d]$. The step size of Chained-VLD is the same as Chained-ALD.

We would like to highlight that the inference time of Chained-LD is significantly lower than vanilla LD in practice. Our theoretical comparison between Chained-LD and vanilla LD is based on iteration complexity, i.e., the number of queries to the score function $\nabla \log P(x^{(q)}|x^{(1)}, \cdots, x^{(q-1)})$ or $\nabla \log P(x)$. Since Chained-LD only updates one patch at every iteration while vanilla LD updates the whole image, Chained-LD will be significantly faster than vanilla LD.

\subsection{Synthetic Gaussian Mixture Model} \label{app:exp:synthetic}
We choose the data distribution $\P$ as a mixture of three Gaussian components in dimension $d=100$:
\begin{equation*}
    \P = 0.2 \P^{(0)} + 0.4 \P^{(1)} + 0.4 \P^{(2)} = 0.2 \gN(\0_d, 3\mI_d) + 0.4 \gN(\1_d, \mI_d) + 0.4 \gN(-\1_d, \mI_d). 
\end{equation*}
Since the distribution is given, we assume that the sampling algorithms have access to the ground-truth score function. We set the batch size as 1000 and patch size $Q=10$ for chained Langevin dynamics. We use $T \in \set{10^3, 10^4, 10^5, 10^6}$ iterations for vanilla and chained Langevin dynamics. A sample $\lgv$ is clustered in mode 1 if it satisfies $\nm{\lgv-\muo}^2 \le 5d$ and $\nm{\lgv-\muo}^2 \le \nm{\lgv-\rvmu_2}^2$; in mode 2 if $\nm{\lgv-\rvmu_2}^2 \le 5d$ and $\nm{\lgv-\muo}^2 > \nm{\lgv-\rvmu_2}^2$; and in mode 0 otherwise. The initial samples are i.i.d. chosen from $\P^{(0)}$, $\P^{(1)}$, or $\P^{(2)}$, and the results are presented in Figures \ref{fig:syn:mode0}, \ref{fig:syn:mode1}, and \ref{fig:syn:mode2} respectively. The two subfigures above the dashed line illustrate the samples from the initial distribution and target distribution, and the subfigures below the dashed line are the samples generated by different algorithms. Furthermore, in Figures \ref{fig:syn:mode0_patch}, \ref{fig:syn:mode1_patch} and \ref{fig:syn:mode2_patch} we demonstrate the effect of different values of $Q \in \{1,4,10,20,50\}$ on the convergence of Chained-LD. We can observe that for dimension $d=100$, a moderate patch size $Q \in \{1,4,10\}$ has similar performance, a large patch size $Q=20$ needs more steps to find the other two modes, while an overly-large patch size $Q=50$ almost cannot find other modes.

We further numerically evaluate the performance of LD and Chained-LD in other Gaussian mixture models. We consider an in-between mode $\probz = \gN(\0_d, 10\mI_d)$ in dimension $d=100$ with weight $\wz = 0.01$, and the other modes have the same weight, the same covariance matrix but different mean, i.e., $\probi = \gN(\mui, 0.1\mI_d)$ and $\wi = 0.99/k$. The first two coordinates of $\mui$ are chosen as shown in Figures \ref{fig:syn:mix9} and \ref{fig:syn:mix10}, and the other coordinates of $\mui$ are set to be 0. The numerical results in Figures \ref{fig:syn:mix9} and \ref{fig:syn:mix10} are consistent with our previous analysis.

\subsection{Score Function Estimator} \label{app:score_estimator}
In realistic scenarios, since we do not have direct access to the (perturbed) score function, \citep{song2019generative} proposed the Noise Conditional Score Network (NCSN) $\rvs_{\rvtheta}(\lgv, \nl)$ to jointly estimate the scores of all perturbed data distributions, i.e., 
\begin{equation*}
    \forall \nl \in \set{\nlt}_{t \in [T]},\; \rvs_{\rvtheta}(\lgv, \nl) \approx \nabla_\lgv \log \P_{\nl} (\lgv). 
\end{equation*} 
To train the NCSN, \citep{song2019generative} adopted denoising score matching, which minimizes the following loss
\begin{equation*}
    \gL \left(\rvtheta; \set{\nlt}_{t \in [T]} \right) := \frac{1}{2T} \sum_{t \in [T]} \nlt^2 \E_{\lgv \sim \P} \E_{\Tilde{\lgv} \sim \gN(\lgv, \nlt^2 \mI_d)} \biggl[ \nm{\rvs_{\rvtheta} (\Tilde{\lgv}, \nlt) - \frac{\Tilde{\lgv} - \lgv}{\nlt^2}}^2 \biggr]. 
\end{equation*}
Assuming the NCSN has enough capacity and sufficient training samples, $\rvs_{\rvtheta^*}(\lgv, \nl)$ minimizes the loss $\gL \left(\rvtheta; \set{\nlt}_{t \in [T]} \right)$ if and only if $\rvs_{\rvtheta^*}(\lgv, \nlt) = \nabla_\lgv \log \P_{\nlt} (\lgv)$ almost surely for all $t \in [T]$.

In Chained Langevin dynamics, an ideal conditional score function estimator $\rvs_{\rvtheta}$ could jointly estimate the scores of all perturbed conditional patch distribution, i.e., $\forall \nl \in \set{\nlt}_{t \in [TQ/d]}, q \in [d/Q]$, 
\begin{equation*}
    \rvs_{\rvtheta} \left( \lgv^{(q)} \mid \nl, \lgv^{(1)}, \cdots, \lgv^{(q-1)} \right) \approx \nabla_{\lgv^{(q)}} \log \P_{\nl}(\lgv^{(q)} \mid \lgv^{(1)}, \cdots \lgv^{(q-1)}). 
\end{equation*} 
Following from \citep{song2019generative}, we use the denoising score matching to train the estimator. For a given $\nl$, the denoising score matching objective is 
\begin{equation*}
    \ell(\rvtheta; \nl) := \frac{1}{2} \E_{\lgv \sim \P} \E_{\Tilde{\lgv} \sim \gN(\lgv, \nl^2 \mI_d)} \sum_{q \in [d/Q]} \left[ \nm{\rvs_{\rvtheta} \left( \lgv^{(q)} \mid \nl, \lgv^{(1)}, \cdots, \lgv^{(q-1)} \right) - \frac{\Tilde{\lgv}^{(q)} - \lgv^{(q)}}{\nl^2}}^2 \right]. 
\end{equation*}
Then, combining the objectives gives the following loss 
\begin{equation*}
    \gL \left(\rvtheta; \set{\nlt}_{t \in [TQ/d]} \right) := \frac{d}{TQ} \sum_{t \in [TQ/d]} \nlt^2 \ell(\rvtheta; \nlt). 
\end{equation*}
As shown in \citep{vincent2011connection}, an estimator $\rvs_{\rvtheta}$ with enough capacity and sufficient training samples minimizes the loss $\gL$ if and only if $\rvs_{\rvtheta}$ outputs the scores of all perturbed conditional patch distribution almost surely.

\subsection{Image Datasets} \label{app:exp:image}
Our implementation and hyperparameter selection are based on \citep{song2019generative} and \citep{song2020improved}. During training, we i.i.d. randomly flip an image with probability 0.5 to construct the two modes (i.e., original and flipped images). All models are optimized by Adam with learning rate 0.001 and batch size 128 for a total of 200000 training steps, and we use the model at the last iteration to generate the samples. We perform experiments on MNIST \citep{lecun1998mnist} (CC BY-SA 3.0 License) and Fashion-MNIST \citep{xiao2017fashion} (MIT License) datasets and we set the patch size as $Q=14$. All experiments were run with one RTX3090 GPU.

For the score networks of chained annealed Langevin dynamics (Chained-ALD), we use the official PyTorch implementation of an LSTM network \citep{sak2014long} followed by a linear layer. For MNIST and Fashion-MNIST datasets, we set the input size of the LSTM as $Q=14$, the number of features in the hidden state as 1024, and the number of recurrent layers as 2. The inputs of LSTM include inputting tensor, hidden state, and cell state, and the outputs of LSTM include the next hidden state and cell state, which can be fed to the next input. To estimate the noisy score function, we first input the noise level $\nl$ (repeated for $Q$ times to match the input size of LSTM) and all-0 hidden and cell states to obtain an initialization of the hidden and cell states. Then, we divide a sample into $d/Q$ patches and input the sequence of patches to the LSTM. For every output hidden state corresponding to one patch, we apply a linear layer of size $1024 \times Q$ to estimate the noisy score function of the patch.

To generate samples, we use $T \in \set{10000, 30000, 100000}$ iterations for annealed Langevin dynamics (ALD) and Chained-ALD. The initial samples are chosen as either original or flipped images from the dataset, and the results for MNIST and Fashion-MNIST datasets are presented in Figures \ref{fig:mnist_orig_ald}, \ref{fig:mnist_flip_ald}, \ref{fig:fashion_orig_ald}, and \ref{fig:fashion_flip_ald} respectively. The two subfigures above the dashed line illustrate the samples from the initial distribution and target distribution, and the subfigures below the dashed line are the samples generated by different algorithms.

\begin{figure}
    \centering
    \begin{tabular}{c}
       \includegraphics[width=0.95\textwidth]{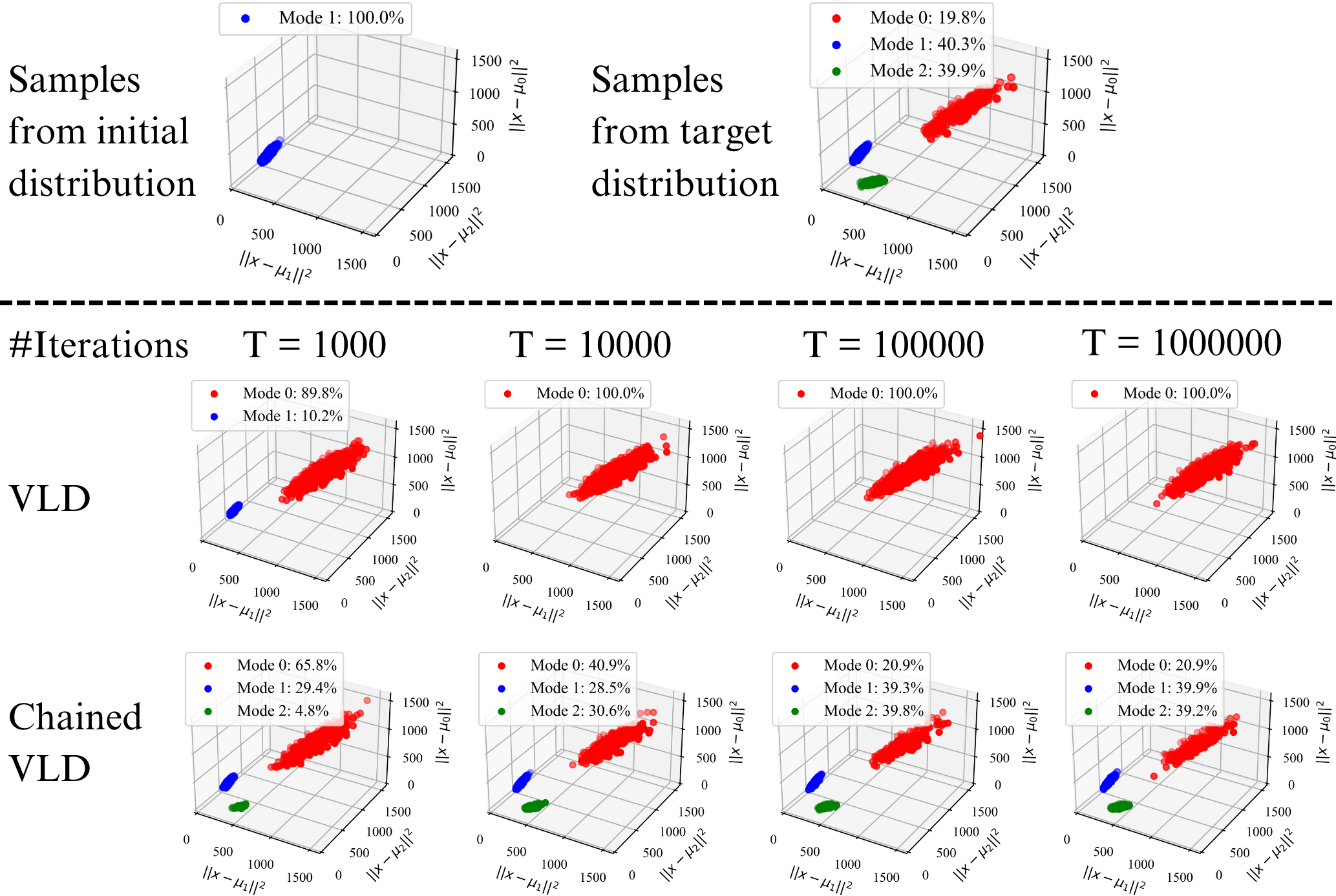}
      \end{tabular}
    \caption{Samples from a mixture of three Gaussian modes generated by vanilla Langevin dynamics (VLD) and chained vanilla Langevin dynamics (Chained-VLD). Three axes are $\ell_2$ distance from samples to the mean of the three modes. The samples are initialized in mode 1. } \label{fig:syn:mode1}
\end{figure}

\begin{figure}
    \centering
    \begin{tabular}{c}
       \includegraphics[width=0.95\textwidth]{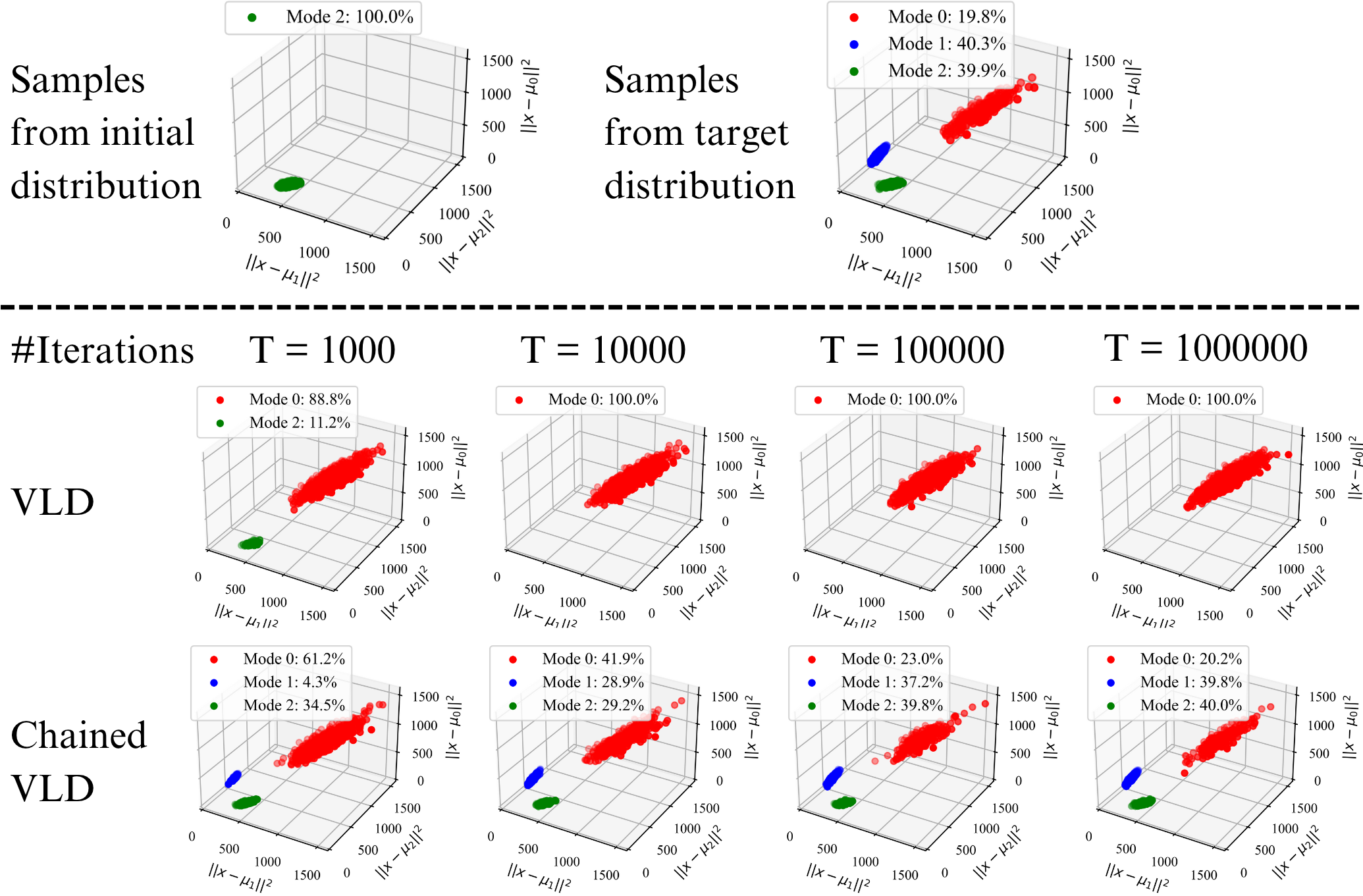}
      \end{tabular}
    \caption{Samples from a mixture of three Gaussian modes generated by vanilla Langevin dynamics (VLD) and chained vanilla Langevin dynamics (Chained-VLD). Three axes are $\ell_2$ distance from samples to the mean of the three modes. The samples are initialized in mode 2. } \label{fig:syn:mode2}
\end{figure}

\begin{figure}
    \centering
    \begin{tabular}{c}
       \includegraphics[width=1\textwidth]{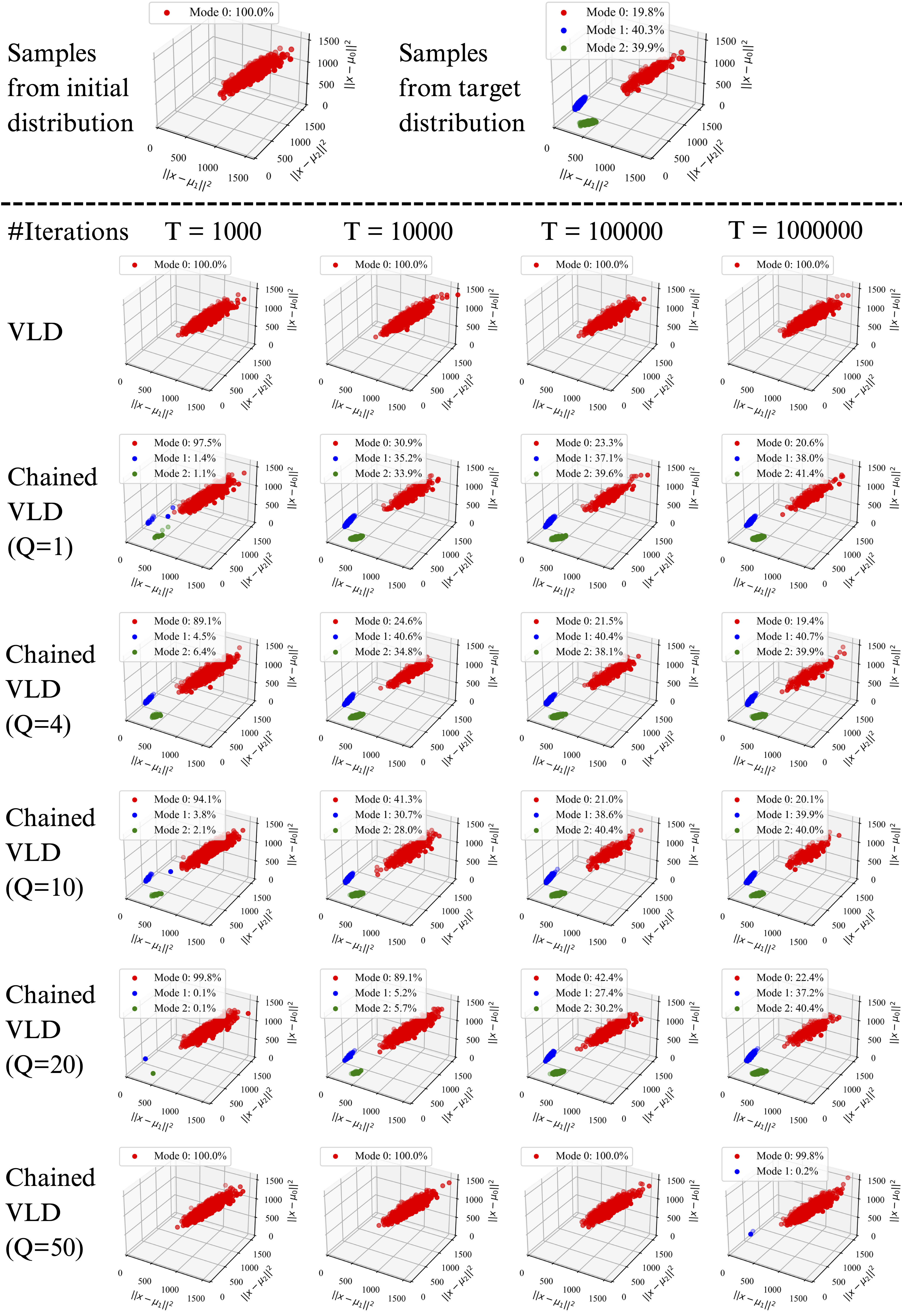}
      \end{tabular}
    \caption{Samples from a mixture of three Gaussian modes generated by vanilla Langevin dynamics (VLD) and chained vanilla Langevin dynamics (Chained-VLD) with patch size $Q \in \{1,4,10,20,50\}$. Three axes are $\ell_2$ distance from samples to the mean of the three modes. The samples are initialized in mode 0. } \label{fig:syn:mode0_patch}
\end{figure}

\begin{figure}
    \centering
    \begin{tabular}{c}
       \includegraphics[width=1\textwidth]{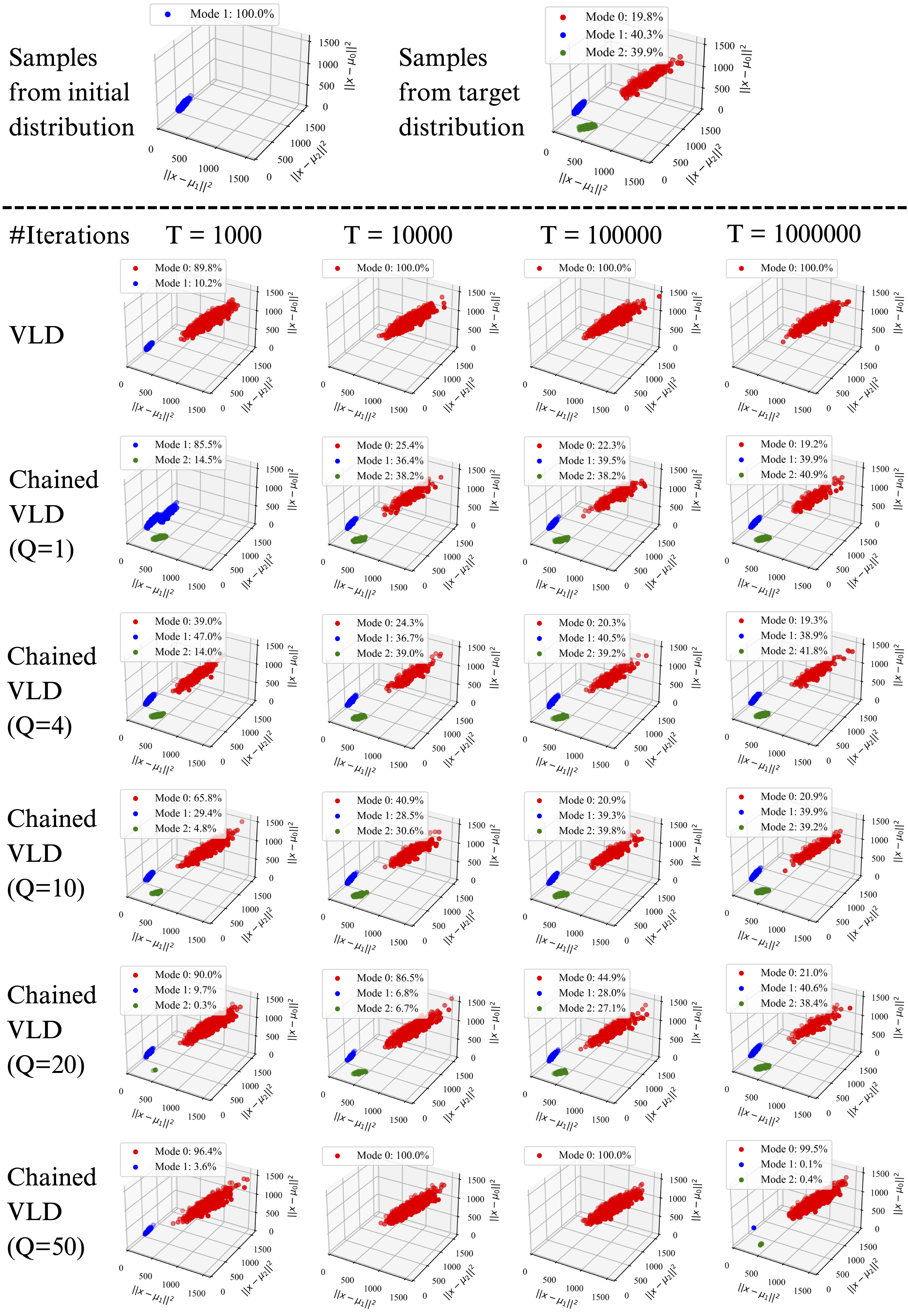}
      \end{tabular}
    \caption{Samples from a mixture of three Gaussian modes generated by vanilla Langevin dynamics (VLD) and chained vanilla Langevin dynamics (Chained-VLD) with patch size $Q \in \{1,4,10,20,50\}$. Three axes are $\ell_2$ distance from samples to the mean of the three modes. The samples are initialized in mode 1. } \label{fig:syn:mode1_patch}
\end{figure}

\begin{figure}
    \centering
    \begin{tabular}{c}
       \includegraphics[width=1\textwidth]{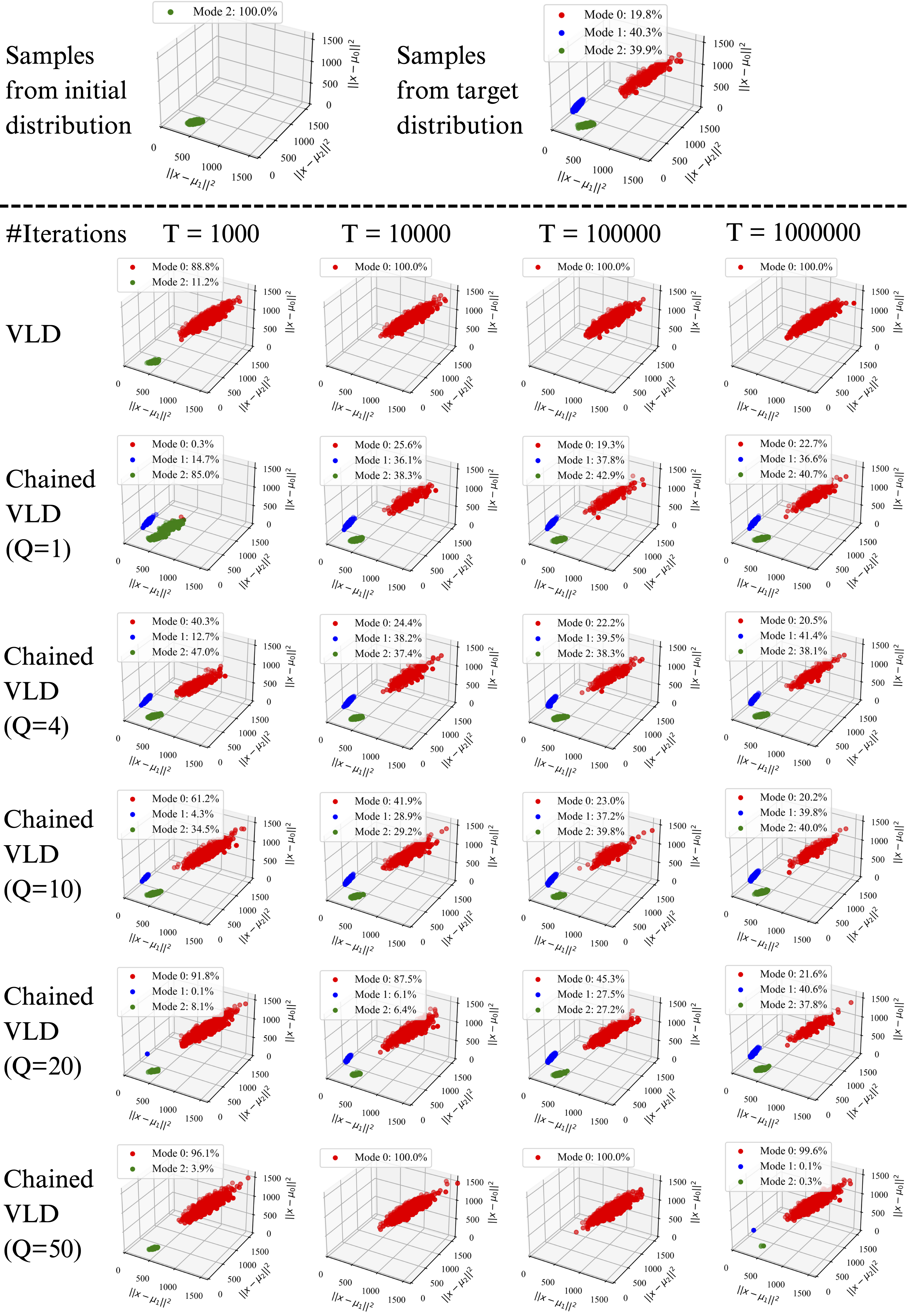}
      \end{tabular}
    \caption{Samples from a mixture of three Gaussian modes generated by vanilla Langevin dynamics (VLD) and chained vanilla Langevin dynamics (Chained-VLD) with patch size $Q \in \{1,4,10,20,50\}$. Three axes are $\ell_2$ distance from samples to the mean of the three modes. The samples are initialized in mode 2. } \label{fig:syn:mode2_patch}
\end{figure}

\begin{figure}
    \centering
    \begin{tabular}{c}
       \includegraphics[width=1\textwidth]{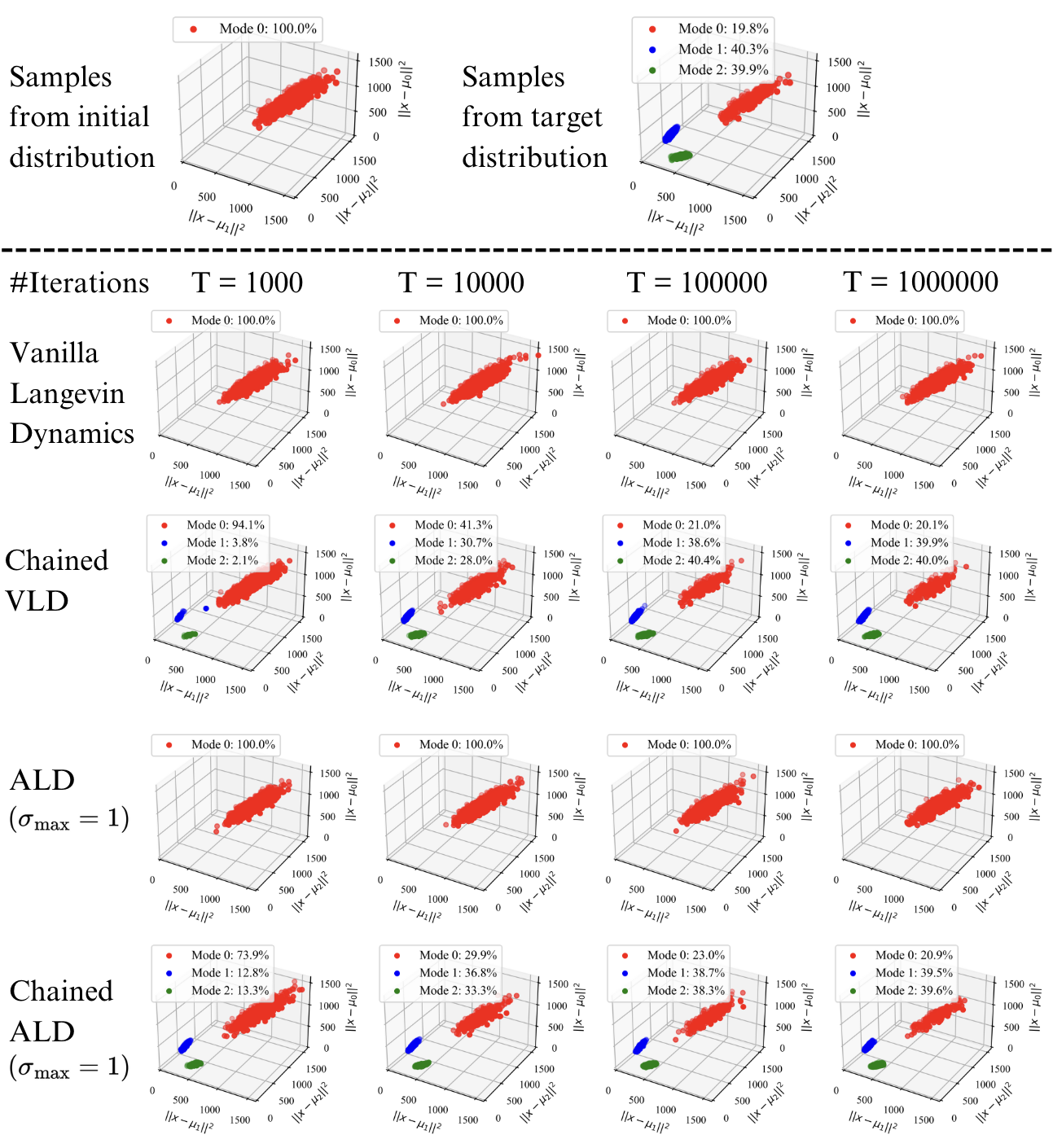}
      \end{tabular}
    \caption{Samples from a mixture of three Gaussian modes generated by Langevin dynamics and chained Langevin dynamics with patch size $Q=10$. Three axes are $\ell_2$ distance from samples to the mean of the three modes. The samples are initialized in mode 0.  }\label{fig:syn:mode0_ald}
\end{figure}

\begin{figure}
    \centering
    \begin{tabular}{c}
       \includegraphics[width=1\textwidth]{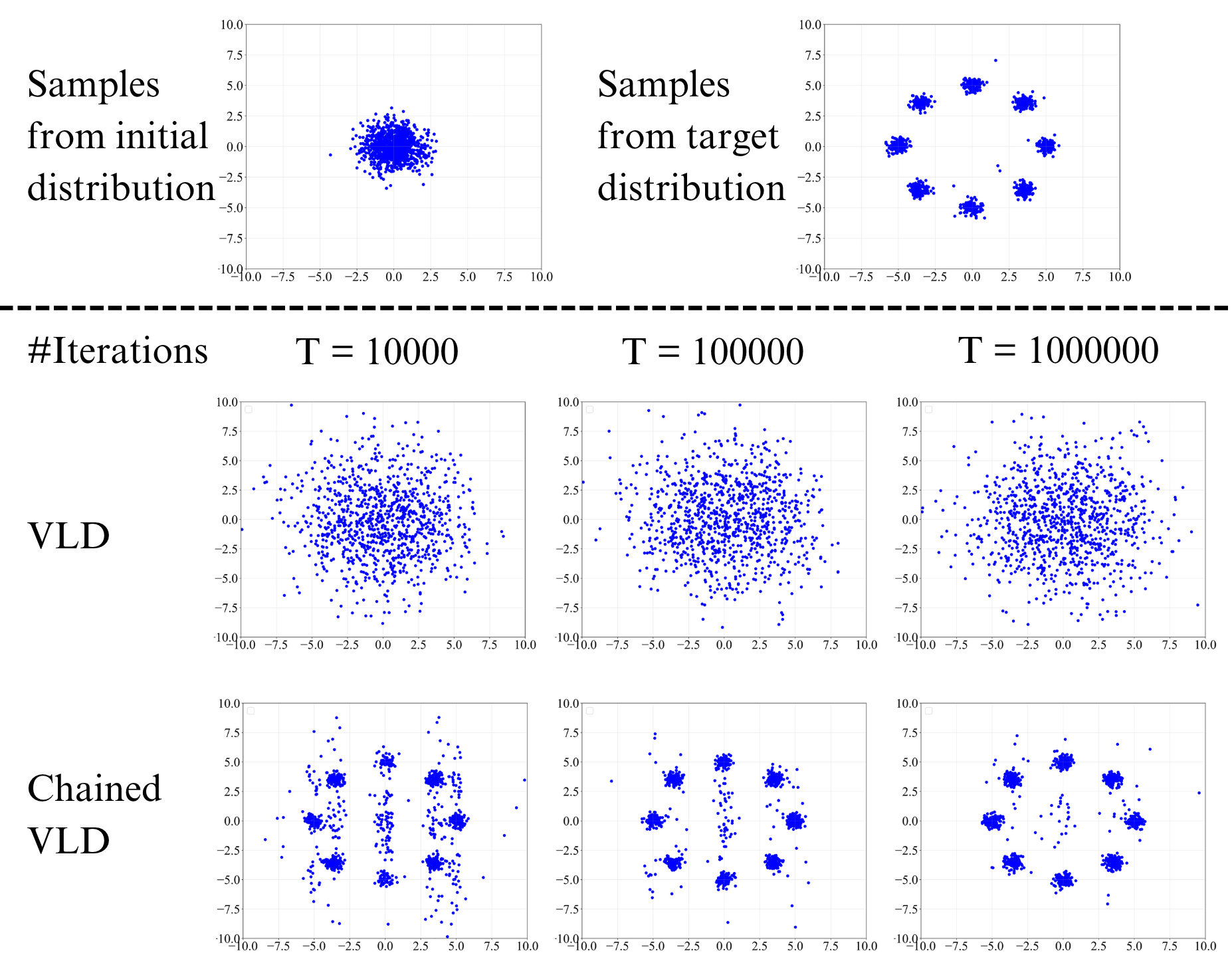}
      \end{tabular}
    \caption{Samples from a mixture of 9 Gaussian modes (including an in-between mode $\probz = \gN(\0_d, 10\mI_d)$) generated by vanilla Langevin dynamics (VLD) and chained vanilla Langevin dynamics (Chained-VLD) with patch size $Q=1$. Two axes are the first 2 coordinates of the samples. The samples are initialized in $\gN(\0_{100},\mI_{100})$.  }\label{fig:syn:mix9}
\end{figure}

\begin{figure}
    \centering
    \begin{tabular}{c}
       \includegraphics[width=1\textwidth]{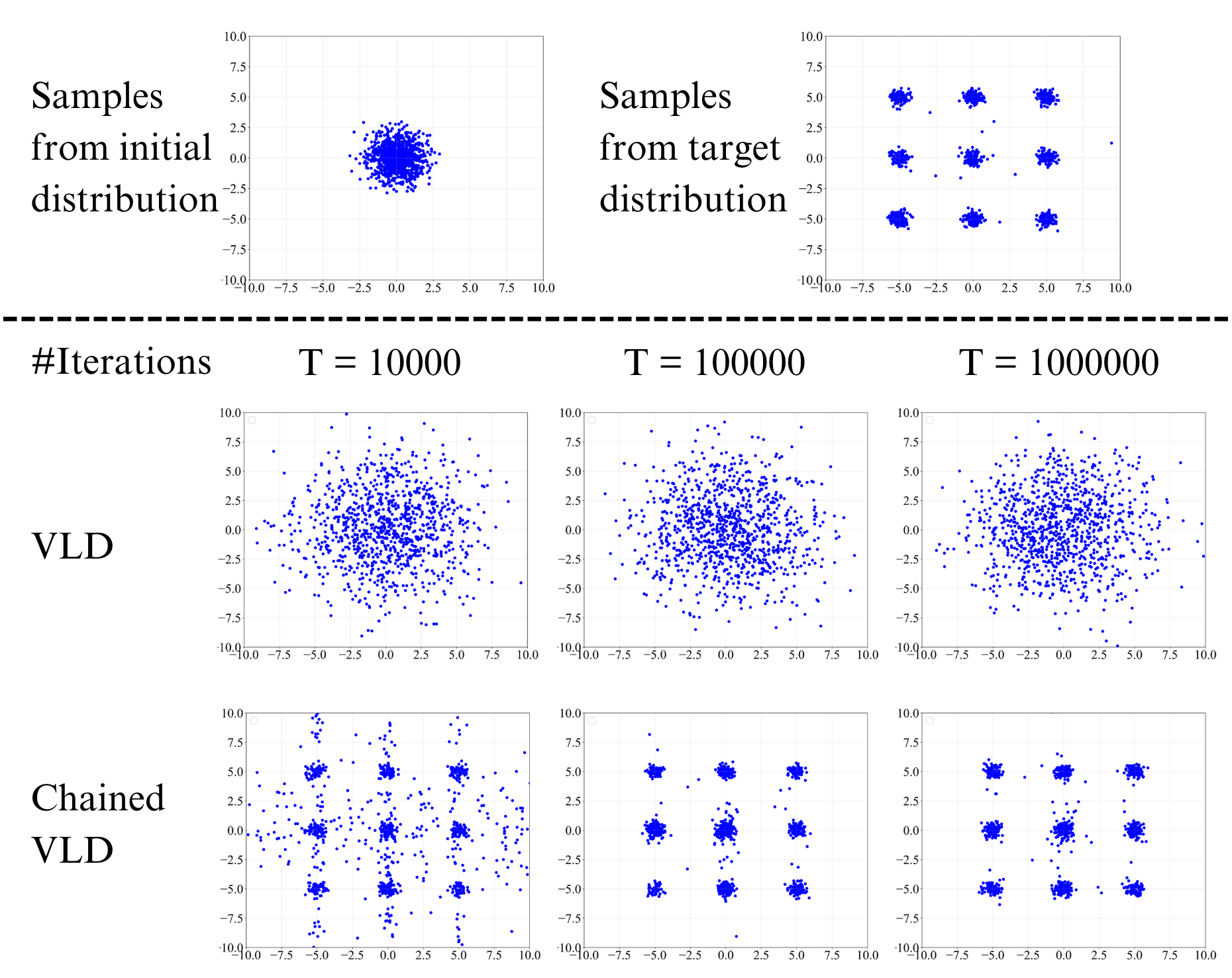}
      \end{tabular}
    \caption{Samples from a mixture of 10 Gaussian modes (including an in-between mode $\probz = \gN(\0_d, 10\mI_d)$) generated by vanilla Langevin dynamics (VLD) and chained vanilla Langevin dynamics (Chained-VLD) with patch size $Q=1$. Two axes are the first 2 coordinates of the samples. The samples are initialized in $\gN(\0_{100},\mI_{100})$.  }\label{fig:syn:mix10}
\end{figure}

\begin{figure}[t]
    \centering
    \begin{tabular}{c}
       \includegraphics[width=1\textwidth]{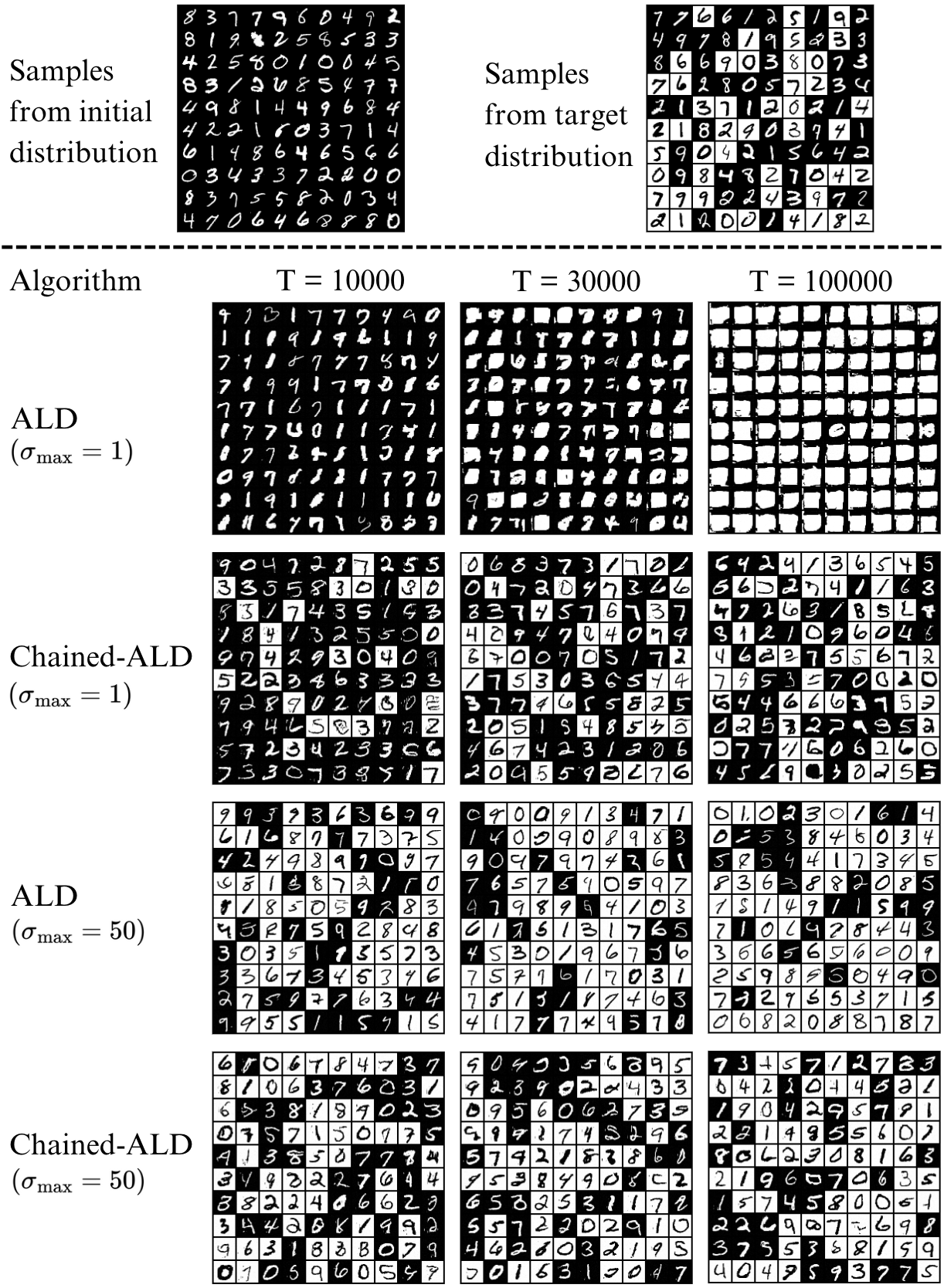}
      \end{tabular}
    \caption{Samples from a mixture distribution of the original and flipped images from the MNIST dataset generated by annealed Langevin dynamics (ALD) and chained annealed Langevin dynamics (Chained-ALD) for different numbers of iterations. The maximum noise level $\sigma_{\max}$ is set to be 1 or 50. The samples are initialized as original images from MNIST. } \label{fig:mnist_orig_ald}
\end{figure}

\begin{figure}
    \centering
    \begin{tabular}{c}
       \includegraphics[width=1\textwidth]{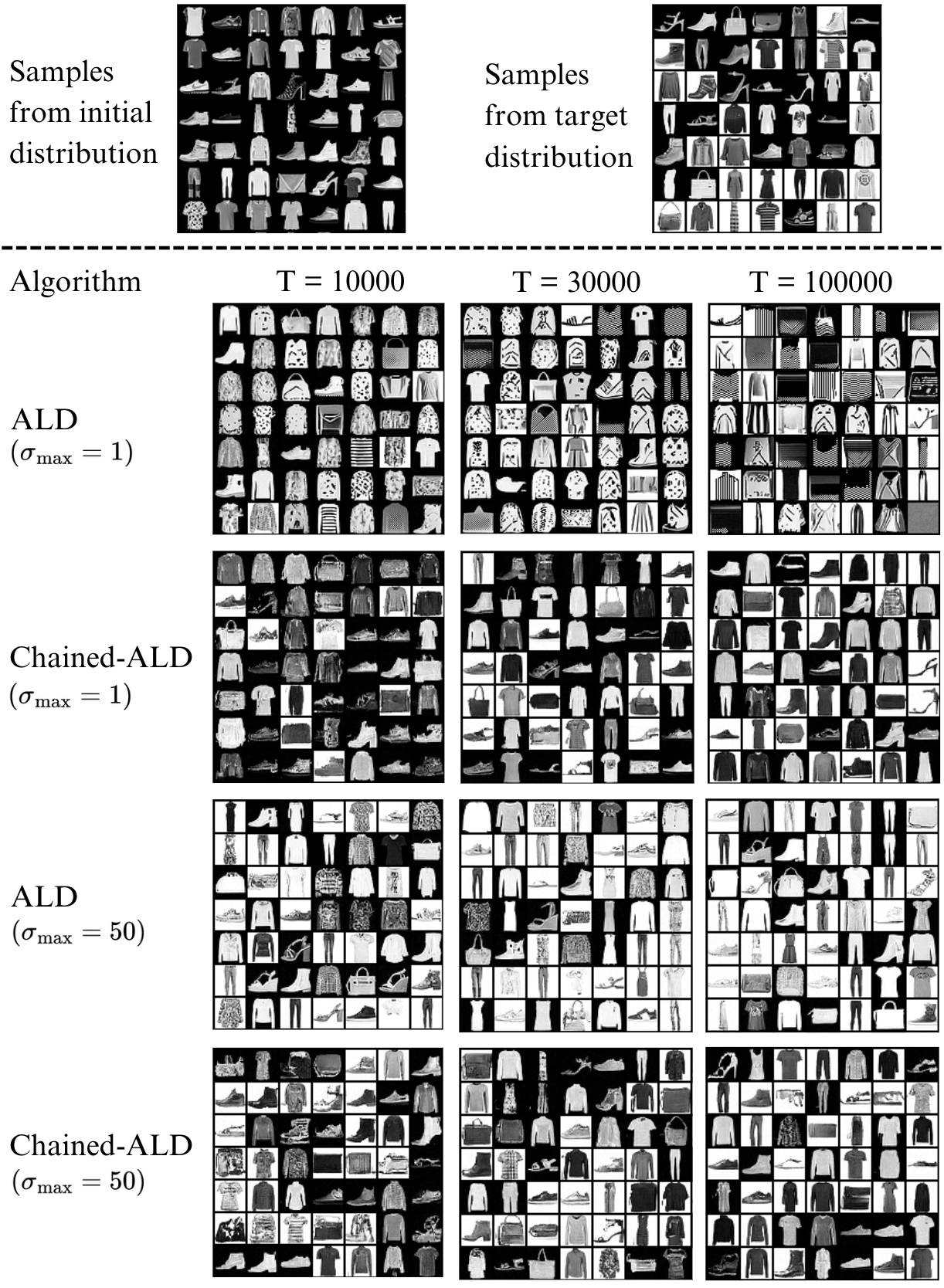}
      \end{tabular}
    \caption{Samples from a mixture distribution of the original and flipped images from the Fashion-MNIST dataset generated by annealed Langevin dynamics (ALD) and chained annealed Langevin dynamics (Chained-ALD) with patch size $Q=14$ for different numbers of iterations. The maximum noise level $\sigma_{\max}$ is set to be 1 or 50. The initialization is original images from Fashion-MNIST. } \label{fig:fashion_orig_ald}
\end{figure}

\begin{figure}[t]
    \centering
    \begin{tabular}{c}
       \includegraphics[width=1\textwidth]{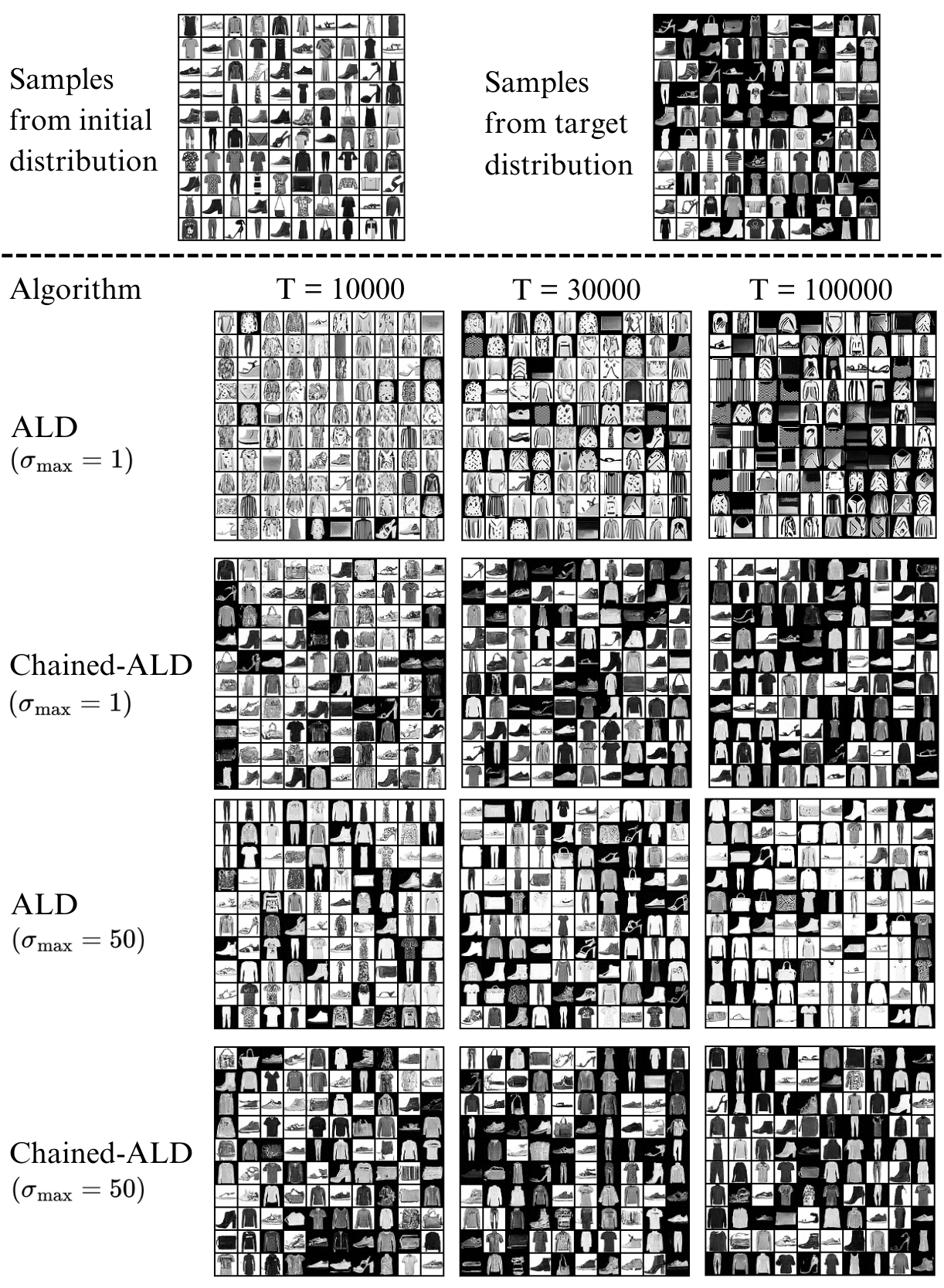}
      \end{tabular}
    \caption{Samples from a mixture distribution of the original and flipped images from the Fashion-MNIST dataset generated by annealed Langevin dynamics (ALD) and chained annealed Langevin dynamics (Chained-ALD) for different numbers of iterations. The maximum noise level $\sigma_{\max}$ is set to be 1 or 50. The samples are initialized as flipped images from FashionMNIST. } \label{fig:fashion_flip_ald}
\end{figure}

\end{document}